\documentclass[11pt]{article}
\usepackage{url}
\usepackage{smile}
\usepackage[normalem]{ulem}
\usepackage{kpfonts}
\usepackage[colorlinks, linkcolor=blue, anchorcolor=blue, citecolor=blue]{hyperref}
\usepackage[margin=1in]{geometry}
\usepackage[square,sort,comma,numbers]{natbib}
\usepackage[american]{babel}
\usepackage{csquotes}
\usepackage{amsmath}
\usepackage{amsthm}
\usepackage{soul}
\allowdisplaybreaks

\usepackage{enumitem}
\setlist[enumerate,1]{label=\normalfont{(\Roman*)},leftmargin=*}

\usepackage{algorithmic}
\usepackage[table]{xcolor}
\usepackage{booktabs}
\usepackage{caption}
\usepackage{subcaption}
\usepackage{graphicx}

\usepackage{bbm}

\newcommand{\diff}{\mathrm{d}}

\newcommand{\Reg}{\epsilon}

\definecolor{longhorn}{rgb}{0.8, 0.33, 0.0}

\usepackage{pifont}

\begin{document}


\title{Generative Neural Networks for Sinkhorn Distributionally Robust Hypothesis Testing}

\author{Fenglin Zhang, Teyan Liu, Jie Wang\thanks{ 
Fenglin Zhang is affiliated with School of Artificial Intelligence, The Chinese University of Hong Kong, Shenzhen; 
Teyan Liu is affiliated with Department of Decision Analytics and Operations, College of Business, City University of Hong Kong; 
Jie Wang is affiliated with School of Artificial Intelligence and School of Data Science, The Chinese University of Hong Kong, Shenzhen. 
Corresponding author: Jie Wang, jwang@cuhk.edu.cn. 
}}

\date{\today}  
\maketitle

\begin{abstract}

This paper studies the Sinkhorn distributionally robust hypothesis testing (SDRHT) problem, seeking a robust detector against least-favorable distributions in Sinkhorn discrepancy-based ambiguity sets centered at the empirical distributions. Existing approaches solve this problem by solving large-scale conic programs, which are not scalable. 
To overcome this, we propose a generative framework that learns least-favorable distributions and supports efficient training and end-to-end sampling. 
For the Sinkhorn discrepancy-based ambiguity sets, we first derive an equivalent conditional-KL-divergence representation with respect to kernel-smoothed reference distributions. 
This property allows us to prove strong duality for both constrained and unconstrained minimax SDRHT formulations. 
Based on the closed-form optimal detector and Brenier's theorem, we reformulate the max-min dual formulation as a maximization problem over convex potentials whose gradients characterize invertible transport maps between kernel-smoothed distributions and their least-favorable counterparts. 
We efficiently approximate these potentials using Hyper Input Convex Neural Networks (HyCNNs) equipped with stochastic gradient estimators and prove the representation power of HyCNNs and the distributional universality of their induced transport maps. 
Numerical results show that the proposed method achieves superior accuracy and robustness across different sample sizes and dimensions, while avoiding the scalability limitations of classical SDRHT methods. 

\vspace{0.5em}

\textbf{Keywords: } Sinkhorn distributionally robust optimization, robust hypothesis testing, generative model, least-favorable distribution generation, hyper input convex neural network 
\end{abstract}

\section{Introduction}

Hypothesis testing is a fundamental problem in statistics and scientific discovery, aiming to find an optimal detector that, given new data, can discriminate between two competing hypotheses while minimizing decision errors. 
The Neyman–Pearson lemma~\citep{neyman1933ix} yields the optimal likelihood-ratio test when the densities of the underlying distributions are known. 
In practice, these distributions are typically unknown; instead, the decision-maker has only access to their independent and identically distributed~(i.i.d.) samples. 
This setting motivates data-driven hypothesis tests that are both statistically
reliable and computationally efficient.

However, data-driven tests trained on finite samples are vulnerable to distributional misspecification~\citep{levy2008principles}. 
On the one hand, for some applications like healthcare~\citep{schober2019two} and anomaly detection~\citep{chandola2009anomaly}, where samples are limited and noisy, reliable estimation of the underlying distribution is challenging due to inevitable measurement and statistical errors. 
On the other hand, real data may be non-stationary or may undergo distributional shifts, leading to differences between training and testing data. 
To improve the reliability of hypothesis testing, especially in high-stakes scenarios, robust methods are necessary to circumvent these challenges~\citep{huber1965robust, gul2017minimax}.

Distributionally robust optimization~(DRO)~\citep{kuhn2025distributionally} offers a powerful paradigm for making robust decisions under distributional uncertainty.
When applied to hypothesis testing, DRO optimizes the robust detector against the least-favorable distributions (also known as worst-case distributions) within an ambiguity set consisting of plausible distributions, safeguarding against misspecification and enhancing out-of-sample reliability when applied to new, unseen data~\citep{gao2018robust}. 
The design of the ambiguity set is crucial for balancing computational tractability and robust out-of-sample performance~\citep{mohajerin2018data}. 
Popular choices include Huber's contamination model~\citep{huber1996robust}, $f$-divergence~\citep{levy2008robust, ben2013robust}, moments~\citep{delage2010distributionally, wiesemann2014distributionally},  kernels~\citep{sun2021data, zhu2021kernel}, or Wasserstein distance~\citep{gao2018robust, xie2021robust}. 

DRO with ambiguity sets constructed using the Sinkhorn discrepancy has recently received increasing attention~\cite{wang2025sinkhorn, zhang2026contextual}. 
Conventional Wasserstein DRO-based hypothesis testing restricts the worst-case distributions to have the same support as that of the empirical distributions~\cite{gao2018robust}. 
By adding entropic regularization to the Wasserstein distance, the Sinkhorn discrepancy provides a smooth alternative to the Wasserstein distance~\citep{peyre2019computational}, and the Sinkhorn discrepancy-based ambiguity sets naturally encourage continuous distributional estimates. Thus, unlike the Wasserstein distance-based ambiguity set, if a testing sample is outside the support of the observed data, it is still feasible to leverage the likelihood ratio detector to design a robust test. However, finding the least-favorable distributions in Sinkhorn-based hypothesis testing is an infinite-dimensional optimization problem. Existing approaches approximately solve it using sample average approximation and result in large-scale conic programs~\citep{wang2022data, wang2024non}, whose computational cost grows rapidly with the sample size. 
It is desirable to develop more tractable approaches for solving the robust hypothesis testing problem with Sinkhorn-based ambiguity sets. 

Parallel to these developments in robust statistics, flow-based generative models have shown remarkable success in learning complex distributions as pushforwards of simple reference distributions through invertible and differentiable maps~\citep{lai2025principles, lipman2022flow, dai2025assured}. 
This framework is not only adept at generating new samples but is also naturally suited for solving optimization problems over probability space, e.g., generating least-favorable distributions in DRO~\citep{xu2024flow, wang2025iterative, xu2025gradient}. 
By parameterizing a distribution as a flow-based transformation starting from a probability density, one can efficiently optimize an objective over the probability space using gradient-based methods~\citep{zhu2024distributionally, cheng2025worst}. 

\begin{figure}[!htb]
    \centering
    \includegraphics[width=\linewidth]{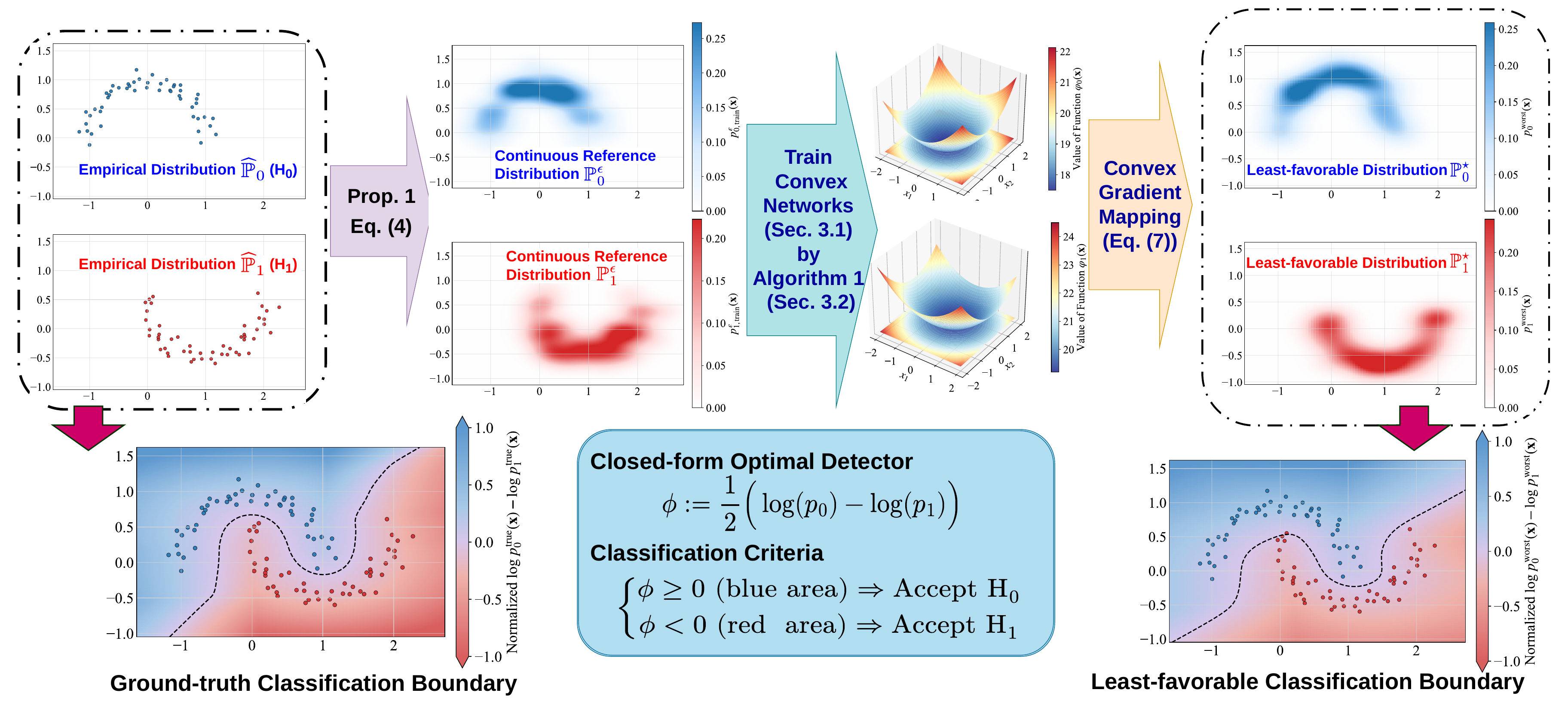}
    \caption{Illustration of the training and testing processes of our least-favorable detector. }
    \label{fig:placeholder2}
\end{figure} 

In this work, we leverage flow-based generative models to solve the Sinkhorn distributionally robust hypothesis testing (SDRHT). 
Based on the theoretical results of optimal transport, we introduce a generative framework to learn continuous least-favorable distributions within the Sinkhorn-based ambiguity sets by optimizing convex potentials. 
By parameterizing the convex potentials, our approach converts the original infinite-dimensional optimization into a finite-dimensional and tractable problem. 
Our main contributions are summarized as follows: 
\begin{itemize}
    \item 
\textbf{Novel Reformulation:} 
We establish a strong duality result for SDRHT and thereby reformulate the original minimax optimization as a max-min problem. 
As the inner minimized detector can be explicitly derived, this problem is equivalent to a maximum problem over distributions. 
Leveraging Brenier's theorem, we further reformulate it as a maximization problem over convex potentials. 

   \item 
\textbf{Generative Framework:} 
We parameterize the convex potentials by Hyper Input Convex Neural Networks (HyCNNs) and develop stochastic gradient estimators for the objective. 
This transforms the infinite-dimensional optimization into a finite-dimensional one, enabling efficient and scalable training and sampling. 
We also prove the representation power of HyCNNs and the distributional universality of their induced gradient maps.


   \item
\textbf{Empirical Validation:} We validate the performance of our method through extensive numerical studies on both synthetic datasets and real-world data with different sample sizes and dimensions, demonstrating its superiority over existing baselines in terms of both accuracy and robustness. 
\end{itemize}

Figure~\ref{fig:placeholder2} illustrates the training and testing processes of the proposed method using a toy Moons dataset. 
By exploiting the geometry of the Sinkhorn-based ambiguity set~(Proposition~\ref{prop-kl-connection} and Eq.~\eqref{Eq-pk-epsilon}), the least-favorable distributions~($\mathbb{P}_0^{\star}, \mathbb{P}_1^{\star}$) are generated by first applying kernel smoothing to the empirical distributions $\widehat{\mathbb{P}}_0 $ and $\widehat{\mathbb{P}}_1$ to obtain $\mathbb{P}_0^{\epsilon} $ and $\mathbb{P}_1^{\epsilon}$, and next robustifying them. 
In the robustification procedure, we learn convex-gradient maps~(see Eq.~\eqref{Eq-SDRO-convex-functional} and Algorithm~\ref{algorithm-HyperCNN}) that generate $\mathbb{P}_0^{\star}$ and $\mathbb{P}_1^{\star}$ based on the kernel-smoothed reference distributions $\mathbb{P}_0^{\epsilon}, \mathbb{P}_1^{\epsilon}$.
The resulting least-favorable distributions induce a nonlinear robust decision boundary with high accuracy.

The remainder of this paper is organized as follows. 
Section~\ref{sec-model} presents the models and properties of SDRHT. 
Section~\ref{sec-algo} develops a framework for generating least-favorable distributions. 
Section~\ref{sec-theory-hycnn} shows the representation power of HyCNNs. 
Section~\ref{Sec-results} reports the numerical results. 
Section~\ref{sec-conclusion} concludes this paper. 
Proofs and additional technical notes are provided in the online appendix to this paper. 

\noindent{\textbf{Notations.}} For $K \in \mathbb{N}$, let $\left [ K \right ] := \left \{ 1,\cdots,K \right \}$. 
We write $\mathbb{P} \ll \nu$ if the probability measure $\mathbb{P}$ is absolutely continuous with respect to the measure $\nu$. 
We denote $\mathbb{P} \otimes \mathbb{Q}$ by the product measure of two probability measures $\mathbb{P}$ and $\mathbb{Q}$. 
For a transport map $T: \Omega \to \Omega$, the pushforward of a probability measure $\mathbb{P}$ is denoted as $T_{\#}\mathbb{P}$, such that $T_{\#}\mathbb{P}(A) = \mathbb{P}(T^{-1}(A))$ for any measurable set $A \subseteq \Omega$. 
For a convex function $\varphi: \mathbb{R}^d \to \mathbb{R} \cup \{\infty\}$, denote its convex conjugate by $\varphi^{\ast}(\boldsymbol{y}) = \sup_{\boldsymbol{x} \in \mathbb{R}^d} \,\, \{ \boldsymbol{y}^{\top}\boldsymbol{x} - \varphi(\boldsymbol{x})\}$ for any $\boldsymbol{y} \in \mathbb{R}^d$. 

\section{Sinkhorn Distributionally Robust Hypothesis Testing} \label{sec-model}

In this section, we introduce the Sinkhorn distributionally robust hypothesis testing (SDRHT), derive its strong duality property, and obtain its equivalent functional optimization reformulation. 

Let $\Omega \subseteq \mathbb{R}^d$ be a closed sample space. 
Denote $\mathscr{P}\left(\Omega\right)$ as the set of all probability measures on $\Omega$, and $\mathcal{M}(\Omega)$ as the set of measures on $\Omega$. 
Denote by $\mathcal{P}_k \subset \mathscr{P}\left(\Omega\right)$ the ambiguity set under hypotheses $H_k$, where $k \in \mathbb{K} = \{0, 1\}$.  
Given a testing sample $\boldsymbol{\omega}$, the composite hypothesis testing decides which one of the following hypotheses is true:
\begin{equation}\nonumber
    H_0: \boldsymbol{\omega}\sim \mathbb{P}_0, \, \mathbb{P}_0 \in \mathcal{P}_0; \qquad H_1: \boldsymbol{\omega} \sim \mathbb{P}_1, \, \mathbb{P}_1 \in \mathcal{P}_1. 
\end{equation}
A hypothesis test~(detector) $\phi: \Omega \to \mathbb{R}$ is a measurable function. 
For any testing sample $\boldsymbol{\omega}$, it accepts the null hypothesis $H_0$ and rejects alternative hypothesis $H_1$ whenever $\phi(\boldsymbol{\omega})\ge 0$, otherwise accepts $H_1$ and rejects $H_0$. 
The exact risk of this detector is defined as the sum of type-I and type-II errors (also known as the false positive rate and the false negative rate, respectively). 
For this non-convex objective, we follow the generating-function approach~\cite{nemirovski2007convex, gao2018robust} and optimize the following convex surrogate risk instead: 
\begin{equation}\nonumber
    \Phi(\phi;\mathbb{P}_0,\mathbb{P}_1) := \mathbb{E}_{\boldsymbol{\omega} \sim \mathbb{P}_0}[\ell\circ(-\phi)(\boldsymbol{\omega})] + \mathbb{E}_{\boldsymbol{\omega} \sim \mathbb{P}_1}[\ell\circ \phi(\boldsymbol{\omega})],
\end{equation} 
where the generating function $\ell$ is defined as follows. 
\begin{definition}[Generating Function] \label{Def-generating}
    A generating function $\ell: \mathbb{R} \to \mathbb{R}_+ \cup \{\infty\}$ is a non-negative, non-decreasing, convex function such that $\ell(0)=1$ and $\lim_{t \to -\infty} \ell(t) = 0$. $\hfill \Diamond $
\end{definition}

For some common choices of the generating function, Table~\ref{tab:generating_functions} summarizes the corresponding closed-form optimal detectors $\phi_{\text{opt}}$ given distributions $\mathbb{P}_0$ and $\mathbb{P}_1$~\cite{gao2018robust, wang2022data}, where $p_0$ and $p_1$ represent the probability density functions of $\mathbb{P}_0$ and $\mathbb{P}_1$. 
In this paper, we choose the generating function $\ell(t):= \exp(t)$. 

\begin{table*}[htbp]
\centering
\caption{Common choices of generating function (first column), together with their corresponding optimal detector (second column) and surrogate risk (third column).}
\label{tab:generating_functions}
\begin{tabular}{ccc}
\toprule 
$\ell(t)$ & \textbf{Optimal} $\phi$ & \textbf{Corresponding optimal} $1-\Phi(\mathbb{P}_{0},\mathbb{P}_{1})/2$ \\ 
\midrule
$\exp(t)$ & $\log\sqrt{p_{0}/p_{1}}$ & $H^{2}(\mathbb{P}_{0},\mathbb{P}_{1})$ \\ 
$\log(1+\exp(t))/\log 2$ & $\log(p_{0}/p_{1})$ & $\text{JS}(\mathbb{P}_{0},\mathbb{P}_{1})/\log 2$ \\ 
$(t+1)^{2}$ & $1-2p_{0}/(p_{0}+p_{1})$ & $\chi^{2}(\mathbb{P}_{0},\mathbb{P}_{1})$ \\ 
$(t+1)_{+}$ & $\operatorname{sign}(p_{0}-p_{1})$ & $\text{TV}(\mathbb{P}_{0},\mathbb{P}_{1})$ \\ 
\bottomrule
\end{tabular}
\begin{minipage}{\linewidth}
    \small \centering
    \vspace{2pt}
    $H^2$: Hellinger divergence; JS: Jensen-Shannon divergence; TV: Total variation distance. 
\end{minipage}
\end{table*}

Given sets of training samples $\{ \boldsymbol{x}_{k}^{(1)}, \cdots, \boldsymbol{x}_{k}^{(n_k)} \}$ for all $k \in \mathbb{K}$, denote
the corresponding empirical distributions as $\widehat{\mathbb{P}}_k = \frac{1}{n_k} \sum_{i=1}^{n_k}\delta_{x_k^{(i)}}$, where $n_k$ denotes the number of samples from $\mathbb{P}_k \in \mathcal{P}_k$. 
The following definition introduces the Sinkhorn discrepancy, an entropy-regularized variant of the Wasserstein distance that quantifies the distance between distributions~\cite{wang2025sinkhorn, wang2022data}. Throughout the paper, we use the quadratic transport cost $c(x,z):=\frac{1}{2}\|x-z\|_2^2$.
\begin{definition}[Sinkhorn Discrepancy]\label{Def-Sinkhorn}
Consider any two distributions $\mathbb{P}, \mathbb{Q} \in \mathscr{P}(\Omega)$ and reference measure $\nu \in \mathcal{M}(\Omega)$ such that $\mathbb{Q} \ll \nu$.
For regularization parameter $\Reg > 0$, the Sinkhorn discrepancy between two distributions $\mathbb{P}$ and $\mathbb{Q}$ is defined as
\begin{equation}\nonumber
    \mathcal{S}_{\Reg}(\mathbb{P}, \mathbb{Q}) = \inf_{\gamma\in\Gamma(\mathbb{P}, \mathbb{Q})}~
\left\{
\mathbb{E}_{(\boldsymbol{x},\boldsymbol{z}) \sim \gamma}[c(\boldsymbol{x},\boldsymbol{z})] + \Reg \cdot \mathcal{D}_{\mathrm{KL}}(\gamma \| \mathbb{P} \otimes \nu)
\right\},
\end{equation}
where $c(\boldsymbol{x},\boldsymbol{z})$ represents the transport cost function,  $\Gamma(\mathbb{P}, \mathbb{Q})$ denotes the set of couplings of $\mathbb{P}$ and $\mathbb{Q}$, respectively, and $\mathcal{D}_{\mathrm{KL}}(\gamma \| \mathbb{P} \otimes \nu)$ denotes the relative entropy of distribution $\gamma$ with respect to measure $\mathbb{P} \otimes \nu$, i.e., 
\begin{equation}
\tag*{  $\Diamond$ }
    \mathcal{D}_{\mathrm{KL}}(\gamma \| \mathbb{P} \otimes \nu) = \int_{\Omega \times \Omega}
\log\frac{\diff\gamma(\boldsymbol{x},\boldsymbol{z})}{\diff \mathbb{P}(\boldsymbol{x})\diff \nu(\boldsymbol{z})} \diff\gamma(\boldsymbol{x},\boldsymbol{z}). 
\end{equation}  
\end{definition}

Let the space of detectors be $\mathcal{F}$. 
We consider the following SDRHT problem, optimizing a detector to minimize the least-favorable risk in ambiguity sets: 
\begin{equation}\label{Eq:hard:constrained:SDRO}
    \inf_{\phi \in \mathcal{F}}\sup_{\mathbb{P}_0\in \mathbb{B}_{\rho_0}(\widehat{\mathbb{P}}_0), \mathbb{P}_1\in \mathbb{B}_{\rho_1}(\widehat{\mathbb{P}}_1)}~\Phi(\phi;\mathbb{P}_0,\mathbb{P}_1), \tag{SDRHT}
\end{equation}
where the ambiguity sets are constructed by the Sinkhorn discrepancy $\mathcal{S}_{\Reg}(\widehat{\mathbb{P}}_k,\mathbb{P})$, offering a flexible, non-parametric framework to characterize the least-favorable distributions
\begin{equation}\label{Eq:hard:constrained:SDRO-ambiguity}
    \mathbb{B}_{\rho_k}(\widehat{\mathbb{P}}_k) := \{ \mathbb{P}: \mathcal{S}_{\Reg}(\widehat{\mathbb{P}}_k,\mathbb{P})\le \rho_k \},\quad \forall k \in \mathbb{K}. 
\end{equation} 

We also consider the following soft-constrained counterpart of problem~\eqref{Eq:hard:constrained:SDRO}:
\begin{equation}\label{Eq:soft:constrained:SDRO}
    \inf_{\phi \in \mathcal{F}}~\sup_{\mathbb{P}_0, \mathbb{P}_1 \in \mathscr{P}(\Omega)} \quad \Phi(\phi;\mathbb{P}_0,\mathbb{P}_1) - \sum_{k \in \mathbb{K}} \lambda_k \mathcal{S}_{\Reg}(\widehat{\mathbb{P}}_k,\mathbb{P}_k) , 
    \tag{soft-SDRHT}
\end{equation}  
where $\lambda_k > 0$ for all $k \in \mathbb{K}$ are penalty parameters.  

\subsection{Strong Duality} \label{sec-model-duality} 

In this subsection, since the optimal detector admits a closed form for fixed distributions, we prove the strong duality of problems~\eqref{Eq:hard:constrained:SDRO} and~\eqref{Eq:soft:constrained:SDRO} by interchanging the minimization over detectors and the maximization over distributions. 
Note that the functional space $\mathcal{F}$ is infinite-dimensional and naturally convex, but is generally not compact. 
Hence, the classical Sion's minimax theorem cannot be used to verify strong duality. 
Instead, we prove strong duality primarily using the lopsided minimax theorem~\citep{kuhn2025distributionally}, which does not require $\mathcal{F}$ to be compact. 

For the transport cost and Sinkhorn discrepancy, we introduce the following assumptions. 
\begin{assumption} \label{assumption-tcost}
    \begin{enumerate}
        \item The transport cost function $c(\boldsymbol{x},\boldsymbol{z})$ is $\widehat{\mathbb{P}} \otimes \nu$-measurable and satisfies $\nu\{z: 0 \le c(\boldsymbol{x}, \boldsymbol{z}) < \infty \}=1$ for $\widehat{\mathbb{P}}$-almost every $x$. \label{assumption-tcost-bounded}
        \item $\mathbb{E}_{\boldsymbol{z}\sim \nu} [e^{-c(\boldsymbol{x}, \boldsymbol{z})/\epsilon}] < \infty$ for $\widehat{\mathbb{P}}$-almost every $x$. \label{assumption-tcost-ex-bounded}
        \item The cost function $c(\boldsymbol{x}, \boldsymbol{z})$ is lower semicontinuous in $(\boldsymbol{x}, \boldsymbol{z})$. \label{assumption-tcost-lower-sc}
        \item For every joint distribution $\gamma$ on $\Omega \times \Omega$ with first marginal distribution $\mathbb{P}$, it has a regular conditional distribution $\gamma_{\boldsymbol{x}}$ given the value of the first marginal equals $\boldsymbol{x}$. \label{assumption-marginal-distribution} 
    \end{enumerate}
\end{assumption}

According to~\cite{wang2025sinkhorn}, Assumptions~\ref{assumption-tcost}\ref{assumption-tcost-bounded} and \ref{assumption-tcost-ex-bounded} ensure that both the Sinkhorn discrepancy and the equivalent reformulation in the following Proposition~\ref{prop-kl-connection} are well-defined. 
Assumption~\ref{assumption-tcost}\ref{assumption-tcost-lower-sc} ensures that the Sinkhorn discrepancy is weakly lower semicontinuous. 
Assumption~\ref{assumption-tcost}\ref{assumption-marginal-distribution} ensures that the joint distribution $\gamma$ can be written as $\diff \gamma (\boldsymbol{x}, \boldsymbol{z}) = \diff \widehat{\mathbb{P}} (\boldsymbol{x}) \diff \gamma_{\boldsymbol{x}}$ and the law of total expectation holds. 
Our choice of transport cost $c(x,z)=\frac{1}{2}\|x-z\|_2^2$ naturally satisfies Assumption~\ref{assumption-tcost}\ref{assumption-tcost-lower-sc}. 

Define the non-negative loss function
\begin{equation}\label{loss-lk}
    L_{k}(\phi, \boldsymbol{\omega}) := \ell\Big((-1)^{k+1}\cdot \phi(\boldsymbol{\omega})\Big), \quad \forall k \in \mathbb{K}. 
\end{equation} 
In this paper, we focus on continuous detectors and consider the following assumptions for the loss function. 
\begin{assumption}[Loss Function]\label{assumption-2}
    \begin{enumerate} 
        \item For any $k \in \mathbb{K}$, $\phi \in \mathcal{F}$, and $\beta_k > 0$, the reference distribution $\mathbb{P}_k^{\epsilon}$ in Lemma 1 satisfies $\mathbb{E}_{\boldsymbol{\omega} \sim \mathbb{P}_k^{\epsilon}} [\exp(\beta_k \cdot L_{k}(\phi, \boldsymbol{\omega}) )] < \infty$ .   \label{assumption-2-el-bound}
        \item There exists $\mathbb{Q} \in \mathscr{P}(\Omega)$ with $\inf_{\phi \in \mathcal{F}} \, \mathbb{E}_{\boldsymbol{\omega} \sim \mathbb{Q}}\left[ L_{k}(\phi, \boldsymbol{\omega}) - \lambda_k \mathcal{S}_{\Reg}(\widehat{\mathbb{P}}_k,\mathbb{Q})\right] >-\infty$ for any $k \in \mathbb{K}$. \label{assumption-2-properness}
    \end{enumerate}
\end{assumption}

Assumption~\ref{assumption-2}\ref{assumption-2-el-bound} leads to the following Lemma~\ref{lemma-wusc}. 
Assumption~\ref{assumption-2}\ref{assumption-2-properness} ensures that the optimal value of the dual of problem~\eqref{Eq:soft:constrained:SDRO} is
strictly larger than $-\infty$. 

The following proposition provides an equivalent conditional-KL-divergence representation of the Sinkhorn-based ambiguity set. 

\begin{proposition}[Conditional-KL Representation of the Sinkhorn Ambiguity Set] \label{prop-kl-connection}
    The ambiguity sets based on Sinkhorn discrepancy with $\epsilon > 0$ are equivalent to 
    \begin{equation} \label{KL-ambiguity}
        \mathbb{B}_{\bar{\rho}_k}(\mathbb{P}_k^{\epsilon}) := \left\{ \mathbb{Q}: \inf_{\gamma\in\Gamma(\widehat{ \mathbb{P}}_k, \mathbb{Q})} \, \,  \mathbb{E}_{\boldsymbol{x} \sim \widehat{\mathbb{P}}_k} \Big [ \mathcal{D}_{\mathrm{KL}} (\gamma_{\boldsymbol{x}}\| \mathcal{K}_{\boldsymbol{x}, \epsilon})\Big ] \le \bar{\rho}_k/\Reg \right\}, \quad \forall k \in \mathbb{K}, 
    \end{equation}
    where $\bar{\rho}_k := \rho_k +\Reg \mathbb{E}_{\boldsymbol{x}\sim \widehat{\mathbb{P}}_k}\Big [ \log\, \int_{\mathbb{R}^d}e^{-c(\boldsymbol{x}, \boldsymbol{z})/ \Reg} \diff \boldsymbol{z} \Big]$, probability density $\mathrm{d} \mathbb{P}_k^{\epsilon}(\boldsymbol{z}) := \mathbb{E}_{\boldsymbol{x} \sim \widehat{\mathbb{P}}_k}\left[ \diff \mathcal{K}_{\boldsymbol{x}, \epsilon}(\boldsymbol{z}) \right]$, 
    \begin{equation*}
        \diff \mathcal{K}_{\boldsymbol{x}, \epsilon}(\boldsymbol{z}) := \frac{e^{-c(\boldsymbol{x}, \boldsymbol{z}) / \epsilon}  }{\int_{\mathbb{R}^d} e^{-c(\boldsymbol{x}, \boldsymbol{u}) / \epsilon} \cdot \diff \boldsymbol{u}} \cdot \mathrm{d} v(\boldsymbol{z}). 
    \end{equation*}
\end{proposition}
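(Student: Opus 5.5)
The plan is to rewrite the Sinkhorn objective for a fixed coupling so that the cost term is absorbed into the entropy term. After that the statement is just a change of reference measure inside the relative entropy. Fix $k$ and $\mathbb{Q}$, and take any $\gamma\in\Gamma(\widehat{\mathbb{P}}_k,\mathbb{Q})$ with finite entropy relative to $\widehat{\mathbb{P}}_k\otimes\nu$. By Assumption~\ref{assumption-tcost}\ref{assumption-marginal-distribution}, disintegrate $\diff\gamma(\boldsymbol{x},\boldsymbol{z})=\diff\widehat{\mathbb{P}}_k(\boldsymbol{x})\,\diff\gamma_{\boldsymbol{x}}(\boldsymbol{z})$. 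The chain rule for relative entropy then gives
\[
\mathcal{D}_{\mathrm{KL}}(\gamma\|\widehat{\mathbb{P}}_k\otimes\nu)=\mathbb{E}_{\boldsymbol{x}\sim\widehat{\mathbb{P}}_k}\big[\mathcal{D}_{\mathrm{KL}}(\gamma_{\boldsymbol{x}}\|\nu)\big],
\]
because the first marginals coincide. Since $\widehat{\mathbb{P}}_k$ is a finite sum of atoms, this is a finite average and needs no measurability subtleties. Couplings with infinite entropy give value $+\infty$ on both sides and can be discarded.

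Next, for each support point $\boldsymbol{x}$, set $Z_{\boldsymbol{x}}:=\int e^{-c(\boldsymbol{x},\boldsymbol{u})/\Reg}\diff\nu(\boldsymbol{u})$. This is finite by Assumption~\ref{assumption-tcost}\ref{assumption-tcost-ex-bounded} and positive by Assumption~\ref{assumption-tcost}\ref{assumption-tcost-bounded}, so $\mathcal{K}_{\boldsymbol{x},\Reg}$ is a probability measure with $\diff\mathcal{K}_{\boldsymbol{x},\Reg}/\diff\nu=e^{-c(\boldsymbol{x},\cdot)/\Reg}/Z_{\boldsymbol{x}}$. We have $\gamma_{\boldsymbol{x}}\ll\nu$, and $\nu$ and $\mathcal{K}_{\boldsymbol{x},\Reg}$ are mutually absolutely continuous because $c<\infty$ $\nu$-a.s. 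Writing $\log\frac{\diff\gamma_{\boldsymbol{x}}}{\diff\nu}=\log\frac{\diff\gamma_{\boldsymbol{x}}}{\diff\mathcal{K}_{\boldsymbol{x},\Reg}}-c(\boldsymbol{x},\cdot)/\Reg-\log Z_{\boldsymbol{x}}$ and integrating against $\gamma_{\boldsymbol{x}}$ gives
\[
\mathbb{E}_{\gamma_{\boldsymbol{x}}}[c(\boldsymbol{x},\boldsymbol{z})]+\Reg\,\mathcal{D}_{\mathrm{KL}}(\gamma_{\boldsymbol{x}}\|\nu)=\Reg\,\mathcal{D}_{\mathrm{KL}}(\gamma_{\boldsymbol{x}}\|\mathcal{K}_{\boldsymbol{x},\Reg})-\Reg\log Z_{\boldsymbol{x}}.
\]
Averaging over $\widehat{\mathbb{P}}_k$ yields the identity
\[
\mathbb{E}_\gamma[c]+\Reg\,\mathcal{D}_{\mathrm{KL}}(\gamma\|\widehat{\mathbb{P}}_k\otimes\nu)=\Reg\,\mathbb{E}_{\widehat{\mathbb{P}}_k}\big[\mathcal{D}_{\mathrm{KL}}(\gamma_{\boldsymbol{x}}\|\mathcal{K}_{\boldsymbol{x},\Reg})\big]-\Reg\,\mathbb{E}_{\widehat{\mathbb{P}}_k}[\log Z_{\boldsymbol{x}}].
\]
The constant term does not depend on $\gamma$. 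Taking the infimum over $\Gamma(\widehat{\mathbb{P}}_k,\mathbb{Q})$ therefore gives $\mathcal{S}_\Reg(\widehat{\mathbb{P}}_k,\mathbb{Q})\le\rho_k$ if and only if $\inf_\gamma\mathbb{E}[\mathcal{D}_{\mathrm{KL}}(\gamma_{\boldsymbol{x}}\|\mathcal{K}_{\boldsymbol{x},\Reg})]\le(\rho_k+\Reg\mathbb{E}[\log Z_{\boldsymbol{x}}])/\Reg=\bar\rho_k/\Reg$. Here $\nu$ is Lebesgue measure, matching the $\diff\boldsymbol{z}$ in the definition of $\bar\rho_k$. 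The reference $\mathbb{P}_k^\Reg$ is the mixture $\mathbb{E}_{\widehat{\mathbb{P}}_k}[\mathcal{K}_{\boldsymbol{x},\Reg}]$. It is exactly the second marginal of the coupling $\diff\widehat{\mathbb{P}}_k\,\diff\mathcal{K}_{\boldsymbol{x},\Reg}$ that attains conditional KL equal to zero, which is why the set is labelled by $\mathbb{P}_k^\Reg$.

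The main obstacle is justifying the pointwise identity when some terms could be infinite. One needs $\mathbb{E}_{\gamma_{\boldsymbol{x}}}[c]$ and the entropies to be simultaneously finite or simultaneously $+\infty$, so that no $\infty-\infty$ occurs. I would handle this by noting that $c\ge0$ and $Z_{\boldsymbol{x}}\in(0,\infty)$. The integrand $\log\frac{\diff\gamma_{\boldsymbol{x}}}{\diff\nu}+c/\Reg$ then equals $\log\frac{\diff\gamma_{\boldsymbol{x}}}{\diff\mathcal{K}_{\boldsymbol{x},\Reg}}-\log Z_{\boldsymbol{x}}$, whose negative part is integrable by the standard bound $\int(\log f)^-\,f\,\diff\mathcal{K}\le 1/e$. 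Both sides are therefore well-defined in $(-\infty,\infty]$ and equal.
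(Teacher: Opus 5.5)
Your proposal is correct and follows essentially the same route as the paper: disintegrate $\gamma$ along the atoms of $\widehat{\mathbb{P}}_k$, absorb the transport cost into the Gibbs kernel $\mathcal{K}_{\boldsymbol{x},\Reg}$ by changing the reference measure inside the relative entropy, and pull the $\gamma$-independent log-normalizer out of the infimum to obtain $\bar{\rho}_k$. Your version is more careful than the paper's: you normalize with respect to $\nu$, integrate against $\gamma_{\boldsymbol{x}}$ rather than $\diff\boldsymbol{z}$, keep the $1/\Reg$ in the exponent, and rule out $\infty-\infty$ through the combined integrand.
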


The proof of Proposition~\ref{prop-kl-connection} is provided in Appendix~\ref{app-sec-prop-kl-connection}. 
Proposition~\ref{prop-kl-connection} shows the connection between the ambiguity set based on Sinkhorn discrepancy and those based on the KL divergence~\citep{ben2013robust}. 
Since the reference measure $\nu \in \mathcal{M}(\Omega)$, $\mathcal{K}_{\boldsymbol{x}, \epsilon}$ is a probability kernel, and its corresponding continuous reference distributions in Eq.~\eqref{KL-ambiguity} are probability measures, i.e., 
\begin{equation}\label{Eq-pk-epsilon}
    \mathbb{P}_k^{\epsilon} = \frac{1}{n_k} \sum_{i=1}^{n_k} \mathcal{K}_{\boldsymbol{x}_k^{(i)}, \epsilon} \in \mathscr{P}(\Omega), \quad \quad \forall k \in \mathbb{K}. 
\end{equation}
When $\Omega = \mathbb{R}^{d}$, $\nu$ is a Lebesgue measure, and the transport cost is defined as $\frac{1}{2} \| \boldsymbol{x} - \boldsymbol{z} \|_2^2$, distribution $\mathbb{P}_k^{\epsilon}$ is a Gaussian mixture because $\mathcal{K}_{\boldsymbol{x}_k^{(i)}, \epsilon} = \mathcal{N}(\boldsymbol{x}_k^{(i)}, \Reg\cdot\boldsymbol{I}_d)$ for each $k \in \mathbb{K}$ and $i \in [n_k]$. 

Based on Assumptions~\ref{assumption-tcost} and~\ref{assumption-2} and Proposition~\ref{prop-kl-connection}, the following strong duality results hold. 
\begin{theorem}[Strong Duality]\label{theo-strong-duality}
For the convex risk function $\Phi$ and Sinkhorn discrepancy $\mathcal{S}_{\Reg}$, the following results hold:
    \begin{enumerate}
        \item If Assumptions~\ref{assumption-tcost} and~\ref{assumption-2}\ref{assumption-2-el-bound} hold, then
        \begin{equation}\nonumber
            \textnormal{Optimal Value of~\eqref{Eq:hard:constrained:SDRO} = }\max_{\mathbb{P}_0\in \mathbb{B}_{\rho_0}(\widehat{\mathbb{P}}_0), \mathbb{P}_1\in \mathbb{B}_{\rho_1} (\widehat{\mathbb{P}}_1)}\,  \inf_{\phi \in \mathcal{F}} \quad \Phi(\phi;\mathbb{P}_0,\mathbb{P}_1);
        \end{equation} \label{theo-strong-duality-hard}
        \item If Assumptions~\ref{assumption-tcost} and~\ref{assumption-2} hold, then 
        \begin{equation}\nonumber
            \textnormal{Optimal Value of~\eqref{Eq:soft:constrained:SDRO} = } \max_{\mathbb{P}_0, \mathbb{P}_1 \in \mathscr{P}(\Omega)} \, \inf_{\phi \in \mathcal{F}} \quad  \Phi(\phi;\mathbb{P}_0,\mathbb{P}_1) - \sum_{k \in \mathbb{K}} \lambda_k \mathcal{S}_{\Reg}(\widehat{\mathbb{P}}_k,\mathbb{P}_k), 
        \end{equation} \label{theo-strong-duality-soft}
        where $\lambda_k > 0$ for any $k \in \mathbb{K}$. 
    \end{enumerate}
\end{theorem}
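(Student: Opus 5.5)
The plan is to apply the lopsided minimax theorem (as stated in \citep{kuhn2025distributionally}) to the saddle function $\Phi(\phi;\mathbb{P}_0,\mathbb{P}_1)$ for part \ref{theo-strong-duality-hard}, and to $\Phi(\phi;\mathbb{P}_0,\mathbb{P}_1)-\sum_k\lambda_k\mathcal{S}_{\Reg}(\widehat{\mathbb{P}}_k,\mathbb{P}_k)$ for part \ref{theo-strong-duality-soft}. The minimization is over $\mathcal{F}$, which is convex but not compact, and the maximization is over a suitable set of distributions. Lopsided minimax requires the following:
\begin{itemize}
\item convexity of the objective in $\phi$;
\item concavity in $(\mathbb{P}_0,\mathbb{P}_1)$;
\item a \emph{compact} convex maximization set on which, for each fixed $\phi$, the objective is upper semicontinuous.
\end{itemize}
It then yields $\inf_\phi\sup = \max\inf$, with the maximum attained. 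Convexity in $\phi$ is immediate, since $\ell=\exp$ is convex and nondecreasing, so $L_k(\phi,\cdot)$ is convex in $\phi$. The objective is linear in each $\mathbb{P}_k$. The Sinkhorn discrepancy is convex in its second argument, being an infimum of a jointly convex functional over couplings, so $-\lambda_k\mathcal{S}_{\Reg}$ is concave.

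The topological part relies on Proposition~\ref{prop-kl-connection}. It rewrites $\mathbb{B}_{\rho_k}(\widehat{\mathbb{P}}_k)$ as a conditional-KL ball around the kernels $\mathcal{K}_{\boldsymbol{x},\Reg}$. Since $\widehat{\mathbb{P}}_k$ is finitely supported, a feasible $\mathbb{Q}$ is a mixture $\frac1{n_k}\sum_i\gamma_{\boldsymbol{x}^{(i)}}$. Each component must satisfy $\mathcal{D}_{\mathrm{KL}}(\gamma_{\boldsymbol{x}^{(i)}}\|\mathcal{K}_{\boldsymbol{x}^{(i)},\Reg})\le n_k\bar\rho_k/\Reg$. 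By joint convexity of KL, the mixture therefore satisfies $\mathcal{D}_{\mathrm{KL}}(\mathbb{Q}\|\mathbb{P}_k^{\Reg})\le \bar\rho_k/\Reg$. KL sublevel sets around a fixed probability measure are uniformly integrable, hence tight and weakly compact, by the Donsker--Varadhan inequality. Combining this with weak lower semicontinuity of $\mathcal{S}_{\Reg}$ (Assumption~\ref{assumption-tcost}\ref{assumption-tcost-lower-sc}) shows that $\mathbb{B}_{\rho_k}(\widehat{\mathbb{P}}_k)$ is weakly closed inside a weakly compact set, and so compact.

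The next step is upper semicontinuity of $\mathbb{Q}\mapsto\mathbb{E}_{\mathbb{Q}}[L_k(\phi,\cdot)]$ on this set; this is presumably Lemma~\ref{lemma-wusc} referenced after Assumption~\ref{assumption-2}. The argument uses Donsker--Varadhan:
\[
\mathbb{E}_{\mathbb{Q}}[L_k\mathbb{1}\{L_k>M\}]\le \tfrac1{\beta}\Big(\mathcal{D}_{\mathrm{KL}}(\mathbb{Q}\|\mathbb{P}_k^{\Reg})+\log\mathbb{E}_{\mathbb{P}_k^{\Reg}}[e^{\beta L_k\mathbb{1}\{L_k>M\}}]\Big).
\]
By Assumption~\ref{assumption-2}\ref{assumption-2-el-bound} and dominated convergence, choosing $\beta$ large and then $M$ large makes the tails uniformly small. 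Truncations $\min(L_k,M)$ of a continuous $\phi$ are bounded and continuous, so their expectations are weakly continuous. The full expectation is therefore a uniform limit of continuous functionals, which gives continuity and in particular upper semicontinuity. Applying the theorem then proves part \ref{theo-strong-duality-hard}.

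For part \ref{theo-strong-duality-soft}, the maximization set $\mathscr{P}(\Omega)$ is not compact, so I would reduce to a compact level set. Assumption~\ref{assumption-2}\ref{assumption-2-properness} provides $\mathbb{Q}$ giving a finite lower bound $v$ on the max-min value. Along any maximizing or near-saddle sequence, the penalty term must therefore stay bounded: $\lambda_k\mathcal{S}_{\Reg}(\widehat{\mathbb{P}}_k,\mathbb{P}_k)\le \sup\Phi-v$. The remaining issue is that this requires $\Phi$ to be controlled, for a fixed reference detector such as $\phi\equiv0$, where $\Phi\equiv 2$. The lopsided theorem in the form ``there exists $\phi_0$ such that the superlevel set $\{\mathbb{P}:\text{objective}(\phi_0,\mathbb{P})\ge c\}$ is compact'' applies with $\phi_0=0$. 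The superlevel sets are $\{\mathcal{S}_{\Reg}\le\text{const}\}$ balls, which are compact by the argument above. Upper semicontinuity of the objective follows from the same truncation argument together with the lower semicontinuity of $\mathcal{S}_{\Reg}$.

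I expect the main obstacle to be the uniform integrability and upper semicontinuity step. It requires the entropy bound to control $\mathbb{E}_{\mathbb{Q}}[e^{\pm\phi}]$ uniformly over the ball, which holds only through the exponential-moment Assumption~\ref{assumption-2}\ref{assumption-2-el-bound}. It also requires passing carefully from the conditional-KL constraint to a KL bound relative to $\mathbb{P}_k^{\Reg}$, where the constant $\bar\rho_k$ must be tracked.
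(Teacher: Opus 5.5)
Your Part~\ref{theo-strong-duality-hard} is essentially the paper's argument. You embed the Sinkhorn ball in the KL ball around $\mathbb{P}_k^{\Reg}$ via convexity of KL, and use weak compactness of KL balls together with weak lower semicontinuity of $\mathcal{S}_{\Reg}$ (Lemma~\ref{lemma-weakly-compact}). You then get upper semicontinuity on the ball from the Donsker--Varadhan truncation bound (Lemma~\ref{lemma-wusc}).

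Part~\ref{theo-strong-duality-soft} is where you genuinely differ. The paper shows that \emph{for every} $\phi$ the superlevel sets of $H(\phi,\cdot)=\Phi(\phi;\cdot)-\sum_k\lambda_k\mathcal{S}_{\Reg}(\widehat{\mathbb{P}}_k,\cdot)$ lie in Sinkhorn balls. It does this via $\mathbb{E}_{\mathbb{P}_k}[L_k]\le\beta_k^{-1}\big(\mathcal{D}_{\mathrm{KL}}(\mathbb{P}_k\|\mathbb{P}_k^{\Reg})+\log\mathbb{E}_{\mathbb{P}_k^{\Reg}}[e^{\beta_k L_k}]\big)$, choosing $\beta_k>1/(\lambda_k\Reg)$ so that the penalty dominates. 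You instead use only the single detector $\phi_0\equiv0$. Since $\ell(0)=1$, this gives $\Phi(\phi_0;\cdot)\equiv2$, so its superlevel sets are sublevel sets of $\sum_k\lambda_k\mathcal{S}_{\Reg}$. Because each $\mathcal{S}_{\Reg}(\widehat{\mathbb{P}}_k,\cdot)$ is bounded below, these sets sit inside a product of Sinkhorn balls and are compact.

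You combine this with the one-point form of the lopsided minimax theorem. Its proof is a finite-intersection argument on the compact set $\{H(\phi_0,\cdot)\ge\gamma\}$ for $\gamma$ above the max-min value, followed by a separation in $\mathbb{R}^{n+1}$ using concavity in $(\mathbb{P}_0,\mathbb{P}_1)$ and convexity of $\mathcal{F}$. This yields $\inf\sup=\max\inf$ with attainment. It only requires $H(\phi,\cdot)$ to be upper semicontinuous relative to that fixed compact product of balls, which is exactly what Lemma~\ref{lemma-wusc} delivers, so no tuning of $\beta_k$ against $\lambda_k\Reg$ is needed.

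The price is twofold. You must invoke, or prove, this one-point variant (with relative upper semicontinuity) rather than the version the paper cites, which asks for level-compactness for every $\phi$. You also need $0\in\mathcal{F}$. What the paper's $\beta_k$-estimate buys in exchange is global upper semicontinuity and level-compactness of $H(\phi,\cdot)$ on all of $\mathscr{P}(\Omega)^2$ for every $\phi$.

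Finally, your intermediate remark bounding the penalty by $\sup\Phi-v$ along maximizing sequences is not valid as written, since $\Phi(\phi;\cdot)$ is unbounded for general $\phi$. Drop it in favour of the $\phi_0\equiv0$ argument, which is what actually carries the proof.
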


The proof of Theorem~\ref{theo-strong-duality} is based on the lopsided minimax theorem~\citep[Theorem~5.15]{kuhn2025distributionally}, and is provided in Appendix~\ref{app-sec-theo-strong-duality}. 
We prove this by showing that the Sinkhorn-based ambiguity sets are weakly compact and the expected loss is weakly upper semicontinuous. 

Theorem~\ref{theo-strong-duality} allows us to eliminate the detector minimization before optimizing over the least-favorable distributions. 
Thus, for generating function $\ell(t) = \exp(t)$, problem~\eqref{Eq:soft:constrained:SDRO} is equivalent to 
\begin{equation} \label{Eq-SDRO-phi-star}
    \max_{\mathbb{P}_0, \mathbb{P}_1 \in \mathscr{P}(\Omega)} \quad   \Phi(\phi_{\text{opt}};\mathbb{P}_0,\mathbb{P}_1) - \sum_{k \in \mathbb{K}} \lambda_k \mathcal{S}_{\Reg}(\widehat{\mathbb{P}}_k,\mathbb{P}_k). 
\end{equation}
where 
\begin{equation} \label{Eq-Phi-risk}
    \Phi(\phi_{\text{opt}};\mathbb{P}_0,\mathbb{P}_1) = \sum_{k\in \mathbb{K}} \mathbb{E}_{\boldsymbol{\omega} \sim \mathbb{P}_k} \Big [\exp \Big [ (-1)^{k+1}\cdot \frac{1}{2} \Big(\log \, p_{0} (\boldsymbol{\omega})-\log \, p_{1} (\boldsymbol{\omega})\Big)\Big] \Big]. 
\end{equation}
For a testing sample $\boldsymbol{\omega} \in \Omega$, it suffices to evaluate the log-density of least-favorable distributions $\mathbb{P}_0$ and $\mathbb{P}_1$ at $\boldsymbol{\omega}$, and then make a decision by our explicitly optimal detector. 
Thus, the robust detector retains the likelihood-ratio form, but its densities are those of the least-favorable distributions. 
Problem~\eqref{Eq-SDRO-phi-star} nevertheless remains infinite-dimensional because the least-favorable distributions are still optimization variables.
In the next subsection, we characterize these distributions using parameterized convex-gradient transport maps. 

\subsection{Reformulation via Convex-Gradient Transport Maps}\label{sec-model-convex-potential}

By setting the transport cost to the quadratic function $c(\boldsymbol{x},\boldsymbol{z}) = \frac{1}{2}\|x-z\|_2^2$, we can leverage the following fundamental result from optimal transport~\citep{chewi2025statistical} to simplify Problem~\eqref{Eq-SDRO-phi-star}.

\begin{theorem}[Brenier's Theorem] 
\label{Thm:Fundamental:OT}
Let $\mu,\nu$ be two finite-second-moment probability measures on $\mathbb{R}^d$, and assume that $\mu$ is absolutely continuous with respect to the Lebesgue measure. Then there exists a proper convex function $\varphi:~\mathbb{R}^d\to\mathbb{R}$ such that $(\nabla\varphi)_{\#}\mu=\nu$.
In addition, $\nabla\varphi$ is the unique solution to
\[
\inf_{T:~T_{\#}\mu=\nu}\mathbb{E}_{\mu}\|\boldsymbol{x} - T(\boldsymbol{x})\|_2^2,
\]
and $\gamma_{\text{opt}}\sim (\mathrm{Id},\nabla\varphi)_{\#}\mu$ is the unique solution to 
\[
\inf_{\gamma\in\Gamma(\mu,\nu)}~\mathbb{E}_{\gamma}\|\boldsymbol{x}-\boldsymbol{y}\|_2^2.
\]
\end{theorem}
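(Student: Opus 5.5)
The plan is to prove Brenier's theorem through Kantorovich duality. The strategy is to show that an optimal coupling exists, that it is concentrated on the subdifferential of a convex function, and that absolute continuity of $\mu$ then turns that subdifferential into a single-valued gradient map.

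\textbf{Step 1: existence of an optimal coupling.} Since $\mu,\nu$ have finite second moments, the cost $\mathbb{E}_\gamma\|\boldsymbol{x}-\boldsymbol{y}\|_2^2$ is finite on $\Gamma(\mu,\nu)$. The set $\Gamma(\mu,\nu)$ is tight because its marginals are fixed, and it is weakly closed, so by Prokhorov's theorem it is weakly compact. The cost is continuous and bounded below, so the map $\gamma\mapsto\mathbb{E}_\gamma\|\boldsymbol{x}-\boldsymbol{y}\|_2^2$ is weakly lower semicontinuous. Hence a minimizer $\gamma_{\text{opt}}$ exists.

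\textbf{Step 2: rewriting the problem as a maximal correlation problem.} Expanding $\|\boldsymbol{x}-\boldsymbol{y}\|^2=\|\boldsymbol{x}\|^2+\|\boldsymbol{y}\|^2-2\boldsymbol{x}^\top\boldsymbol{y}$ shows the problem is equivalent to maximizing $\mathbb{E}_\gamma[\boldsymbol{x}^\top\boldsymbol{y}]$. This holds because the second-moment terms depend only on the marginals.

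\textbf{Step 3: support of the optimal coupling.} I will show that $\mathrm{supp}(\gamma_{\text{opt}})$ is cyclically monotone, i.e. $\sum_i \boldsymbol{x}_i^\top(\boldsymbol{y}_i-\boldsymbol{y}_{i+1})\ge 0$ for any finite cycle. The argument is by contradiction: otherwise, permuting small mass near the violating points strictly improves the correlation. Rockafellar's theorem then yields a proper lower semicontinuous convex $\varphi$ with $\mathrm{supp}(\gamma_{\text{opt}})\subseteq\partial\varphi$. Alternatively, one can get $\varphi$ from the dual pair $(\varphi,\varphi^\ast)$ satisfying $\varphi(\boldsymbol{x})+\varphi^\ast(\boldsymbol{y})\ge\boldsymbol{x}^\top\boldsymbol{y}$, with equality $\gamma_{\text{opt}}$-a.s.

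\textbf{Step 4: from subdifferential to gradient.} This is the main obstacle. One must ensure $\varphi$ is finite on a set of full $\mu$-measure, namely the interior of its domain, whose boundary is Lebesgue-null since it is a convex set. By Rademacher/Alexandrov, a convex function is differentiable Lebesgue-a.e. on the interior of its domain. Since $\mu\ll$ Lebesgue, $\partial\varphi(\boldsymbol{x})=\{\nabla\varphi(\boldsymbol{x})\}$ for $\mu$-a.e. $\boldsymbol{x}$. Therefore $\gamma_{\text{opt}}=(\mathrm{Id},\nabla\varphi)_\#\mu$, and in particular $(\nabla\varphi)_\#\mu=\nu$.

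\textbf{Step 5: uniqueness.} If $\gamma'$ is another optimal coupling, then $(\gamma_{\text{opt}}+\gamma')/2$ is also optimal. By Step 4 it is induced by some gradient map $\nabla\psi$. A coupling that is a convex combination of two couplings supported on graphs, and is itself supported on a graph, forces the two maps to agree $\mu$-a.e. Hence $\gamma'=\gamma_{\text{opt}}$.

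Alternatively, uniqueness follows from duality. Any optimal coupling is supported on $\{\varphi(\boldsymbol{x})+\varphi^\ast(\boldsymbol{y})=\boldsymbol{x}^\top\boldsymbol{y}\}$ for the same optimal dual potential, i.e. on $\partial\varphi$, which is $\mu$-a.e. single-valued. The same reasoning gives uniqueness of $\nabla\varphi$ among maps: any optimal map $T$ induces an optimal coupling, so $T=\nabla\varphi$ $\mu$-a.e.
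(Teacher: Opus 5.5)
The paper gives no proof of this statement; it quotes it as a known result from \cite{chewi2025statistical}, so there is no in-paper argument to compare against. Your sketch is the standard textbook proof, and it is correct:
\begin{itemize}
\item an optimal plan exists by tightness of $\Gamma(\mu,\nu)$ and lower semicontinuity of the cost;
\item its support is cyclically monotone, so Rockafellar's theorem gives a convex potential;
\item a.e.\ differentiability of convex functions on the interior of their domain, together with $\mu\ll$ Lebesgue, turns $\partial\varphi$ into $\nabla\varphi$ $\mu$-a.e.;
\item the midpoint argument gives uniqueness of the plan, and with the remark that any optimal map $T$ yields an optimal plan $(\mathrm{Id},T)_{\#}\mu$, uniqueness of the map.
\end{itemize}
Two small points to tighten. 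In Step~4, say explicitly why $\mu(\mathrm{dom}\,\varphi)=1$: $\gamma_{\text{opt}}$ is concentrated on the graph of $\partial\varphi\subseteq\mathrm{dom}\,\varphi\times\mathbb{R}^d$ and has first marginal $\mu$. In Step~5, the duality-based alternative presupposes attainment in Kantorovich duality by a pair $(\varphi,\varphi^{\ast})$ with $\varphi\in L^1(\mu)$ and $\varphi^{\ast}\in L^1(\nu)$, which you have not established. The midpoint argument needs no such input, so rely on it.

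What your argument produces is a proper lower semicontinuous convex $\varphi:\mathbb{R}^d\to\mathbb{R}\cup\{+\infty\}$, finite on a convex set of full $\mu$-measure. It is not a real-valued $\varphi$ as the statement literally asserts. Do not try to close this, because it cannot be closed in general. Take $d=1$, $\mu$ uniform on $[0,1]$ and $\nu=\mathcal{N}(0,1)$. For any convex $\varphi$ with $(\varphi')_{\#}\mu=\nu$, $\varphi'$ is a nondecreasing map pushing $\mu$ to $\nu$. It therefore equals the quantile function $F_\nu^{-1}$ a.e.\ on $(0,1)$, which is unbounded. A real-valued convex function on $\mathbb{R}$ is Lipschitz on $[0,1]$, so $\varphi$ must be $+\infty$ off $[0,1]$. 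The codomain should be read as $\mathbb{R}\cup\{+\infty\}$, consistent with the paper's definition of $\mathcal{F}_{\mathrm{CVX}}$.

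Your Step~4 does yield a finite potential when $\mu$ has an everywhere positive density, as for the Gaussian-mixture sources $\mathbb{P}_k^{\epsilon}$. In that case $\mathrm{dom}\,\varphi$ is a convex set of full Lebesgue measure, so its closure is $\mathbb{R}^d$, and a convex set has the same interior as its closure. The distinction is not cosmetic: the paper's proof of the distributional-universality theorem invokes a real-valued Brenier potential for an arbitrary absolutely continuous $\mu$.
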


Theorem~\ref{Thm:Fundamental:OT} shows that the gradient of a convex potential can characterize the transport map between continuous distributions. 
Proposition~\ref{prop-kl-connection} and Theorem~\ref{Thm:Fundamental:OT} enable us to reformulate~\eqref{Eq-SDRO-phi-star} as an equivalent optimization problem over convex gradient transport maps, i.e., 
\begin{equation}\label{Eq-SDRO-convex-functional}
\max_{\varphi_0,\varphi_1\in\mathcal{F}_{\mathrm{CVX}}}~\left\{ 
\Phi(\phi_{\text{opt}};\mathbb{P}_0,\mathbb{P}_1) - \sum_{k \in \mathbb{K}} \lambda_k \mathcal{S}_{\Reg}(\widehat{\mathbb{P}}_k,\mathbb{P}_k):~
\mathbb{P}_k = (\nabla\varphi_k)_{\#}\mathbb{P}_k^{\epsilon}
\right\}, 
\end{equation}
where $\mathcal{F}_{\mathrm{CVX}}$ is the set of convex functions $\varphi:\mathbb R^d\to\mathbb R\cup\{+\infty\}$ and the distribution $\mathbb{P}_k^{\epsilon}$ for each $k \in \mathbb{K}$ is defined in Eq.~\eqref{Eq-pk-epsilon}. 
Note that we use continuous distributions $\mathbb{P}_k^{\epsilon}$ as source distributions of the transport maps for all $k \in \mathbb{K}$, thanks to the connection between Sinkhorn discrepancy and KL divergence shown in Proposition~\ref{prop-kl-connection}. 

For computation, we restrict the convex function $\varphi_k\in C^2(\mathbb R^d)$ to be strongly convex, and $\nabla\varphi_k$ to be a global diffeomorphism for each $k\in\mathbb{K}$. 
In formulation~\eqref{Eq-SDRO-convex-functional}, given strongly convex functions $\varphi_0$ and $\varphi_1$, the log-density terms in risk function $\Phi(\phi_{\text{opt}};\mathbb{P}_0,\mathbb{P}_1)$ can be explicitly computed by the log-density form of the change-of-variable formula~\citep{lai2025principles}. Specifically, for any sample $\boldsymbol{\omega}_k \sim \mathbb{P}_k$, we have 
\begin{equation}\label{Eq-change-of-variable-1}
    \log \, p_{k} (\boldsymbol{\omega}_k) = \log \, p_{k}^{\Reg} (\boldsymbol{z}_k) - \log \, \Big | \det \nabla^2 \varphi_k (\boldsymbol{z}_k) \Big |, \quad \forall k \in \mathbb{K}, 
\end{equation} 
where $\boldsymbol{z}_k = \nabla \varphi_{k}^{-1} (\boldsymbol{\omega}_k)$, and $p_{k}^{\Reg}$ represents the probability density function of $\mathbb{P}_k^{\Reg}$ for each $k \in \mathbb{K}$. 
Based on the Fenchel–Young inequality, the gradient of a differentiable strongly convex function $\varphi$ is the inverse of the gradient of its convex conjugate, i.e., $\nabla \varphi^{-1} = \nabla \varphi^{\ast}$.  
In this case, we have $\boldsymbol{z}_k = \arg \max_{\boldsymbol{z} \in \mathbb{R}^d} \, \, \{ \boldsymbol{\omega}_k^{\top}\boldsymbol{z} - \varphi_{k}(\boldsymbol{z}) \}$. 
Similarly, to evaluate the log-density of sample $\boldsymbol{\omega}_k \sim \mathbb{P}_k$ at the other distribution $\mathbb{P}_{|1-k|}^{\epsilon}$, i.e., $\log \, p_{|1-k|} (\boldsymbol{\omega}_k)$, we trace $\boldsymbol{\omega}_k$ back to its corresponding source sample $\bar{\boldsymbol{z}}_k$ in $\mathbb{P}_{|1-k|}^{\epsilon}$:  
\begin{equation}\label{Eq-trace-source-sample}
    \bar{\boldsymbol{z}}_k = \nabla \varphi_{|1-k|}^{-1} (\boldsymbol{\omega}_k) = \arg \max_{\boldsymbol{z} \in \mathbb{R}^d} \, \, \Big \{ \boldsymbol{\omega}_k^{\top}\boldsymbol{z} - \varphi_{|1-k|}(\boldsymbol{z}) \Big \}, \quad \forall k \in \mathbb{K}, 
\end{equation}
and then we have 
\begin{equation}\label{Eq-change-of-variable-2}
    \log \, p_{|1-k|} (\boldsymbol{\omega}_k) = \log \, p_{|1-k|}^{\Reg} (\bar{\boldsymbol{z}}_k) - \log \, \Big | \det \nabla^2 \varphi_{|1-k|} (\bar{\boldsymbol{z}}_k) \Big |, \quad \forall k \in \mathbb{K}. 
\end{equation}
Since the potential $\varphi_k$ in~\eqref{Eq-SDRO-convex-functional} is assumed to be twice differentiable and strongly convex, its Hessian determinant is strictly positive, i.e., $\Big | \det \nabla^2 \varphi_k (\boldsymbol{z}) \Big | = \det \nabla^2 \varphi_k (\boldsymbol{z})$, and the log-determinant operators in Eqs.~\eqref{Eq-change-of-variable-1} and~\eqref{Eq-change-of-variable-2} are well defined.
In high-dimensional applications, the exact values of the log-determinant-Hessian operators are computationally expensive, but can be estimated efficiently by the stochastic Lanczos quadrature (SLQ) in~\cite{ubaru2017fast, huang2020convex}. 

Although formulation~\eqref{Eq-SDRO-convex-functional} exposes the transport structure of the least-favorable distributions, it remains an infinite-dimensional optimization. 
Moreover, the induced densities, inverse maps, and log-determinant terms are generally unavailable in closed form. 
However, this formulation offers a new perspective on generating least-favorable distributions by characterizing the gradients of convex potentials. 
In the next section, we propose a generative method to solve~\eqref{Eq-SDRO-convex-functional}. 

\section{A HyCNN-Based Generative Method}\label{sec-algo}

Flow-based generative models represent a target distribution as the pushforward of a simple reference distribution through an invertible, differentiable map, allowing efficient density evaluation and sample generation~\citep{lai2025principles}. 
In flow-based DRO, these models can also generate least-favorable distributions when guided by proper loss functions~\citep{xu2024flow}.  
For problem~\eqref{Eq-SDRO-convex-functional}, a natural approach is to parameterize the convex potentials by the Input Convex Neural Network (ICNN)~\citep{amos2017input, huang2020convex}.  
In this section, we develop a generative framework that parametrizes convex potentials by a variant of ICNN, i.e., \textit{Hyper Input Convex Neural Network (HyCNN)}, extending the classical ICNN with more flexible architectures~\citep{hundrieser2026hyper}. 

\subsection{The Architecture of HyCNN}\label{sec-algo-hycnn}

In this subsection, we first introduce the HyCNN architecture. 
Let the number of layers of HyCNN be $L$, and the output of HyCNN given an input $\boldsymbol{x} \in \mathbb{R}^d$ be $\mathsf{HyCNN}(\boldsymbol{x}) \in \mathbb{R}$. 
Figure~\ref{fig:hycnn} illustrates the architecture of HyCNN, including the detailed structure of each block. The forward propagation process of HyCNN can be reformulated as follows
\begin{subequations}\label{Eq-HyCNN}
    \begin{equation} \label{Eq-HyCNN-a}
        \mathsf{HyCNN}(\boldsymbol{x}) := y_{L+1} = \boldsymbol{V}_L \boldsymbol{y}_{L} + \boldsymbol{W}_L \boldsymbol{x} + b_{L},
    \end{equation}
    \begin{equation}
        \boldsymbol{y}_{l+1} := \bar{\sigma} \Big( \boldsymbol{V}_l^{(1)} \boldsymbol{y}_{l} + \boldsymbol{W}_{l}^{(1)} \boldsymbol{x} + \boldsymbol{b}_{l}^{(1)}, \boldsymbol{V}_l^{(2)} \boldsymbol{y}_{l} + \boldsymbol{W}_{l}^{(2)} \boldsymbol{x} + \boldsymbol{b}_{l}^{(2)} \Big), \quad \forall l \in \{0, \cdots, L-1\}. 
    \end{equation}
\end{subequations}
where $\{ \boldsymbol{V}, \boldsymbol{W}, \boldsymbol{b}\}$ are trainable parameters, $\boldsymbol{y}_{0}:=\boldsymbol{0}$, and $\bar{\sigma}: \mathbb{R}^2 \to \mathbb{R}$ is a scalar activation function that is convex and component-wise non-decreasing, applied element-wise to vector inputs. 

\begin{figure}[!htb]
    \centering
    \includegraphics[width=\linewidth]{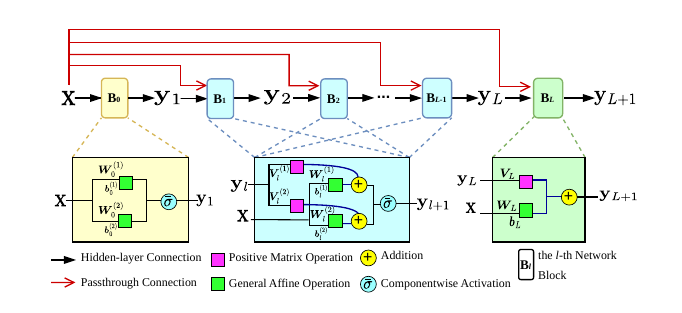}
    \caption{The architecture of HyCNN}
    \label{fig:hycnn}
\end{figure}

As shown in Figure~\ref{fig:hycnn}, in each hidden layer $l \ge 1$ of HyCNN, a ``passthrough'' layer (red arrow) directly connects the input vector with the hidden units. 
Since the nonnegative sums of convex functions are also convex and the composition of a convex and convex non-decreasing function is also convex, according to Proposition 2.1 in~\cite{hundrieser2026hyper}, this construction ensures the convexity of HyCNNs. 
\begin{proposition}[Convexity of HyCNN~\cite{hundrieser2026hyper}]
    Suppose that $\bar{\sigma}: \mathbb{R}^2 \to \mathbb{R}$ is convex and component-wise non-decreasing, and weight matrices $\{ \boldsymbol{V}_l^{(1)}, \boldsymbol{V}_l^{(2)} \}_{l=0}^{L-1}$ and $\boldsymbol{V}_L$ are component-wise nonnegative. Then, $\mathsf{HyCNN}(\boldsymbol{x})$ in Eq.~\eqref{Eq-HyCNN-a} is convex in $\boldsymbol{x}$.
\end{proposition}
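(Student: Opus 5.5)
We argue by induction on the layer index that every coordinate of each hidden vector $\boldsymbol{y}_l$, viewed as a function of $\boldsymbol{x}$, is convex. The output $y_{L+1}$ is then convex, because it is a nonnegative combination of convex functions plus an affine term.

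\textbf{Base case.} Since $\boldsymbol{y}_0 = \boldsymbol{0}$, each coordinate of $\boldsymbol{y}_0$ is constant and therefore convex. The first layer also makes this concrete: its arguments $\boldsymbol{W}_0^{(j)}\boldsymbol{x}+\boldsymbol{b}_0^{(j)}$ are affine, so every coordinate of $\boldsymbol{y}_1$ is $\bar\sigma$ applied to two affine functions.

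\textbf{Inductive step.} Assume every coordinate of $\boldsymbol{y}_l$ is convex in $\boldsymbol{x}$. For $j\in\{1,2\}$, set $g^{(j)}(\boldsymbol{x}) := \boldsymbol{V}_l^{(j)}\boldsymbol{y}_l(\boldsymbol{x}) + \boldsymbol{W}_l^{(j)}\boldsymbol{x} + \boldsymbol{b}_l^{(j)}$. Each coordinate of $g^{(j)}$ is a nonnegative combination of convex functions (by nonnegativity of $\boldsymbol{V}_l^{(j)}$) plus an affine function, so it is convex.

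It remains to show that $h(\boldsymbol{x}) := \bar\sigma(g_i^{(1)}(\boldsymbol{x}), g_i^{(2)}(\boldsymbol{x}))$ is convex, where $g_i^{(1)}, g_i^{(2)}$ are convex. Fix $\boldsymbol{x},\boldsymbol{x}'$ and $\theta\in[0,1]$, and write $\boldsymbol{x}_\theta = \theta\boldsymbol{x} + (1-\theta)\boldsymbol{x}'$. Convexity of $g_i^{(j)}$ gives, for each $j$,
\[
g_i^{(j)}(\boldsymbol{x}_\theta) \le \theta g_i^{(j)}(\boldsymbol{x}) + (1-\theta) g_i^{(j)}(\boldsymbol{x}').
\]
Since $\bar\sigma$ is non-decreasing in each argument separately, it is non-decreasing in the componentwise order on $\mathbb{R}^2$. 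We can therefore apply $\bar\sigma$ to both sides of these two inequalities simultaneously, and then use joint convexity of $\bar\sigma$:
\[
h(\boldsymbol{x}_\theta) \le \bar\sigma\bigl(\theta\, g_i(\boldsymbol{x}) + (1-\theta)\, g_i(\boldsymbol{x}')\bigr) \le \theta h(\boldsymbol{x}) + (1-\theta) h(\boldsymbol{x}').
\]
Here $g_i = (g_i^{(1)}, g_i^{(2)})$. This proves the inductive step.

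\textbf{Conclusion.} By induction, every coordinate of $\boldsymbol{y}_L$ is convex in $\boldsymbol{x}$. Because $\boldsymbol{V}_L \ge 0$ componentwise, $\boldsymbol{V}_L\boldsymbol{y}_L$ is convex, and adding the affine term $\boldsymbol{W}_L\boldsymbol{x}+b_L$ preserves convexity. Hence $\mathsf{HyCNN}(\boldsymbol{x})$ is convex in $\boldsymbol{x}$.

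The main point requiring care is the two-argument composition rule in the inductive step. The usual scalar rule (convex non-decreasing of convex) must be extended to a jointly convex, componentwise monotone outer function. The two ingredients used above are exactly what make this work: the componentwise inequalities for $g_i^{(1)}$ and $g_i^{(2)}$ are propagated together through $\bar\sigma$ by its monotonicity, and only then is its joint convexity applied. No differentiability of $\bar\sigma$ is needed.
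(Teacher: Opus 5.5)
Your proof is correct and follows the same layer-by-layer argument the paper gives by citation to the HyCNN reference: nonnegative sums of convex functions are convex, and composing a convex, monotone outer function with convex inner functions preserves convexity. Your explicit check of the two-argument composition step, where you push both componentwise inequalities through $\bar\sigma$ by monotonicity before applying its joint convexity, fills in a detail that the paper states only in its scalar form.
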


In this paper, the non-negativity of weight matrices $\{ \boldsymbol{V}_l^{(1)}, \boldsymbol{V}_l^{(2)} \}_{l=0}^{L-1}$ and $\boldsymbol{V}_L$ is enforced by softplus operators. 
Note that when we choose $\bar{\sigma}(a_1, a_2) = \sigma(a_1)$, where $\sigma: \mathbb{R} \to \mathbb{R}$ is an activation function that is convex and non-decreasing (e.g., ReLU), the HyCNN reduces to the standard ICNN architecture~\cite{amos2017input}. 
Thus, HyCNNs can be interpreted as a generalization of ICNNs. 
When we choose a maxout unit~\cite{goodfellow2013maxout} as our activation function, i.e., $\bar{\sigma}(a_1, a_2) = \max(a_1, a_2)$, the resulting HyCNN is piecewise-affine but is nondifferentiable at ties.
Therefore, to ensure the trained HyCNN is differentiable, we use a smooth log-sum-exp approximation of the maxout activation function with parameter $\tau>0$:
\begin{equation} \label{Eq-smooth-maxout}
    \bar{\sigma}_{\tau}(a_1, a_2) := \tau \cdot \log \, \Big(e^{a_1/\tau}+e^{a_2/\tau}\Big), 
\end{equation}
which is referred to as the \textit{smooth} maxout activation function below. 

\subsection{Approximation for Convex Potentials}\label{sec-algo-framework}

In this subsection, we present the procedure for training HyCNNs. Given sample $\boldsymbol{z} \in \Omega$, denote the two HyCNNs we use for the problem~\eqref{Eq-SDRO-convex-functional} by $\mathsf{HyCNN}_0(\boldsymbol{z})$ and $\mathsf{HyCNN}_1(\boldsymbol{z})$, respectively. 
Denote the approximated convex function by $\widetilde{\varphi}_k$ for each $k \in \mathbb{K}$. 
To ensure the gradient of convex potentials is invertible, as the smooth maxout activation makes $\mathrm{HyCNN}_k$ twice continuously differentiable, we further add a positive quadratic term to ensure strong convexity, i.e., $\widetilde{\varphi}_k(\boldsymbol{z}) = \mathsf{HyCNN}_k(\boldsymbol{z}) + \frac{\alpha}{2} \| \boldsymbol{z}\|_2^2$ where $\alpha$ is a sufficient small positive number, such that the Hessian matrix $H_{\widetilde{\varphi}_k} \succeq \alpha \boldsymbol{I} \succ \boldsymbol{0}$. 
Unlike block-wise progressive methods that train a series of maps in~\cite{xu2024flow}, we directly learn one convex-gradient map for each hypothesis, since a composition of convex-gradient maps may not be the gradient of a convex potential.

The training process is given by the following Algorithm~\ref{algorithm-HyperCNN}. 
For the gradient of the objective function, in Step 5, we propose two $N$-sample estimators to approximate the gradient of the risk function and the Sinkhorn discrepancy, respectively; see Algorithms~\ref{algorithm-risk-estimator} and~\ref{algorithm-sinkhorn} in the next subsection for details. 
As shown, to generate the samples for estimators, we first draw $N$ samples from the continuous reference distribution $\mathbb{P}_k^{\Reg}$ (Step 2), and then map them to the target distribution space by the gradient of trained convex potentials (Step 3). 
In Step 6, the stochastic gradient ascent algorithm is used to update the network parameters. 
\begin{algorithm}[!ht] 
\caption{ Convex Potentials Approximation based on HyCNNs. }
\label{algorithm-HyperCNN}
\textbf{Require:} Number of epochs $T$, training samples $\{\boldsymbol{x}_k^{(i)}\}_{i=1}^{n_k}$, reference distributions $\mathbb{P}_{k,0} = \mathbb{P}_k^{\Reg}$. \\ 
\textbf{Output:} Convex potential approximations $\widetilde{\varphi}_{k, T}$ for all $k \in \mathbb{K}$. \\
    1: Initialize strongly convex potential $\widetilde{\varphi}_{k, 0}$ with parameters $\boldsymbol{\theta}_{k, 0} \in \Theta$ for each $k \in \mathbb{K}$;  \\ 
    2: \textbf{for } $t = 0, \cdots, T-1$ \textbf{ do} \\ 
    3: \quad Draw $N$ i.i.d. samples $\{\boldsymbol{z}_{k}^{(i)}\}_{i=1}^N$ from initial distribution $\mathbb{P}_{k, 0}$ for each $k \in \mathbb{K}$; \\ 
    4: \quad Generate samples $\{ \boldsymbol{\omega}_{k, t}^{(i)}\}_{i=1}^N \leftarrow  \{ \nabla \widetilde{\varphi}_{k, t}( \boldsymbol{z}_k^{(i)})\}_{i=1}^N$ in target distribution $\mathbb{P}_{k, t}$ for each $k \in \mathbb{K}$; \\ 
    5: \quad Estimate the gradient of objective function of~\eqref{Eq-SDRO-convex-functional} by Algorithms~\ref{algorithm-risk-estimator} and~\ref{algorithm-sinkhorn}: 
    \begin{equation}\nonumber
    \begin{aligned}
         \nabla_{\boldsymbol{\theta}_{k,t}} F(\boldsymbol{\theta}_{0,t}, \boldsymbol{\theta}_{1,t}) \leftarrow \  
         & \mathsf{Risk\_Gradient\_Estimator}(\{\boldsymbol{\omega}_0^{(i)}\}_{i=1}^N, \{\boldsymbol{\omega}_1^{(i)}\}_{i=1}^N, \widetilde{\varphi}_{0, t}, \widetilde{\varphi}_{1, t}, \boldsymbol{\theta}_{k,t}) \\ 
         & \quad - \nabla_{\boldsymbol{\theta}_{k,t}} \sum_{k \in \mathbb{K}} \lambda_k  \cdot \mathsf{Sinkhorn\_Estimator} (\{\boldsymbol{x}_k^{(i)}\}_{i=1}^{n_k}, \{ \boldsymbol{\omega}_{k, t}^{(i)}\}_{i=1}^N), \quad \forall k \in \mathbb{K}; 
    \end{aligned}
    \end{equation}
    6: \quad Update $\boldsymbol{\theta}_{k, t+1} \leftarrow  \boldsymbol{\theta}_{k, t} + \eta_t\cdot \nabla_{\boldsymbol{\theta}_{k,t}}F(\boldsymbol{\theta}_{0,t}, \boldsymbol{\theta}_{1,t})$, where $\eta_t$ is the learning rate; \\
    7: \textbf{end for} 
\end{algorithm}

\subsection{Stochastic Gradient Estimation} \label{sec-algo-estimator}

Next, we develop stochastic estimators required by Algorithm~\ref{algorithm-HyperCNN} to solve Problem~\eqref{Eq-SDRO-convex-functional}. 
Let $\{\boldsymbol{z}_0^{(i)}\}_{i=1}^N$ and $\{ \boldsymbol{z}_1^{(i)}\}_{i=1}^N$ be $N$ independent and identically distributed (i.i.d.) samples from reference distributions $\mathbb{P}_0^{\epsilon}$ and $\mathbb{P}_1^{\epsilon}$ (see Eq.~\ref{Eq-pk-epsilon}), respectively.  
These points are subsequently transformed into samples from the target distributions $\mathbb{P}_0$ and $\mathbb{P}_1$ via transport maps, i.e., $\boldsymbol{\omega}_0= \nabla\varphi_0 (\boldsymbol{z}_0)$ and $\boldsymbol{\omega}_1 = \nabla\varphi_1 (\boldsymbol{z}_1)$, where the underlying distributions satisfy $\mathbb{P}_0 = [\nabla\varphi_0]_{\#} \mathbb{P}_0^{\epsilon}$ and $\mathbb{P}_1 = [\nabla\varphi_1]_{\#} \mathbb{P}_1^{\epsilon}$. 

\subsubsection{Risk Gradient Estimator}

In this part, we present the gradient estimator of the risk function with respect to the parameters of HyCNNs in Algorithm~\ref{algorithm-HyperCNN}. 
For every $k \in \mathbb{K}$ and $\boldsymbol{\omega}_k \sim \mathbb{P}_k$, we define the basic element of the risk function as \[
h_k(\boldsymbol{\omega}_k, \boldsymbol{\theta}_0, \boldsymbol{\theta}_1) := \exp \Big [ \frac{1}{2} (\log \, p_{|1-k|} (\boldsymbol{\omega}_k) - \log \, p_{k} (\boldsymbol{\omega}_k) ) \Big ], \quad \forall k \in \mathbb{K}. \]
As an example, given convex functions $\widetilde{\varphi}_0$ and $\widetilde{\varphi}_1$ with parameters $\boldsymbol{\theta}_0$ and $\boldsymbol{\theta}_1$, respectively, the gradients of $h_0(\boldsymbol{\omega}_0, \boldsymbol{\theta}_0, \boldsymbol{\theta}_1)$ with respect to $\boldsymbol{\theta}_0$ and $\boldsymbol{\theta}_1$ are given by 
\begin{subequations}\label{Eq-risk-gradient}
    \begin{equation}\label{Eq-risk-gradient-0}
        \frac{\partial h_0(\boldsymbol{\omega}_0, \boldsymbol{\theta}_0, \boldsymbol{\theta}_1)}{\partial \boldsymbol{\theta}_0}  = \frac{h_0(\boldsymbol{\omega}_0, \boldsymbol{\theta}_0, \boldsymbol{\theta}_1)}{2} \cdot \Big [ \left( \frac{\partial \boldsymbol{\omega}_0}{\partial \boldsymbol{\theta}_0} \right)^\top [\nabla^2 \widetilde{\varphi}_1(\bar{\boldsymbol{z}}_0)]^{-1} \boldsymbol{g} + \frac{\partial }{\partial \boldsymbol{\theta}_0}  \log \, \det \nabla^2 \widetilde{\varphi}_0 (\boldsymbol{z}_0)\Big], 
    \end{equation}
    \begin{equation}\label{Eq-risk-gradient-1}
        \frac{\partial h_0(\boldsymbol{\omega}_0, \boldsymbol{\theta}_0, \boldsymbol{\theta}_1)}{\partial \boldsymbol{\theta}_1} = - \frac{h_0(\boldsymbol{\omega}_0, \boldsymbol{\theta}_0, \boldsymbol{\theta}_1)}{2} \cdot \Big[ \Big( \frac{\partial \nabla \widetilde{\varphi}_1 (\bar{\boldsymbol{z}}_0)}{\partial \boldsymbol{\theta}_1}\Big )^{\top} [ \nabla^2 \widetilde{\varphi}_1(\bar{\boldsymbol{z}}_0) ]^{-1} \boldsymbol{g} + \frac{\partial }{\partial \boldsymbol{\theta}_1} \log \, \det \nabla^2 \widetilde{\varphi}_{1} (\bar{\boldsymbol{z}}_0) \Big], 
    \end{equation}
\end{subequations}
where $\boldsymbol{g} := \frac{\partial \log \, p_{1} (\boldsymbol{\omega}_0)}{\partial \bar{\boldsymbol{z}}_0} =\frac{\partial  [ \log \, p_{1}^{\Reg} (\bar{\boldsymbol{z}}_0) - \log \, \det \nabla^2 \widetilde{\varphi}_{1} (\bar{\boldsymbol{z}}_0) ]}{\partial \bar{\boldsymbol{z}}_0}$, and $\bar{\boldsymbol{z}}_k$ is defined in Eq.~\eqref{Eq-trace-source-sample} for each $k \in \mathbb{K}$. 
For brevity, the detailed gradient derivation can be found in Appendix~\ref{app-sec-hypercnn-details-risk}. 
Eqs.~\eqref{Eq-risk-gradient-0} and \eqref{Eq-risk-gradient-0} indicate that directly differentiating the risk requires backpropagation through inverse maps and the log-determinant-Hessian (LDH), which is computationally expensive. 
Therefore, in the following, we develop a matrix-free stochastic gradient estimator for the LDH in Algorithm~\ref{algorithm-log-determinant-estimator}, and then present the estimator for the risk function in Algorithm~\ref{algorithm-risk-estimator}.

For the gradient of LDH, we first introduce an $\mathcal{O}(1)$-memory estimator for the gradient of the log-determinant~\citep{huang2020convex}. 
Specifically, for any invertible matrix $H$ with parameters $\boldsymbol{\theta}$: 
\begin{equation} \label{Eq-log-det-trace}
    \frac{\partial }{\partial \boldsymbol{\theta}}  \log \, \det \, H = \text{tr} (H^{-1} \frac{\partial H}{\partial \boldsymbol{\theta}}) = \mathbb{E}_{\boldsymbol{v}} [\boldsymbol{v}^{\top} H^{-1} \frac{\partial H}{\partial \boldsymbol{\theta}} \boldsymbol{v}], 
\end{equation}
where the last equality uses the Hutchinson trace estimator~\citep{hutchinson1989stochastic} with an isotropic random vector (e.g., Rademacher or standard Gaussian) $\boldsymbol{v} \in \mathbb{R}^{d}$ satisfying $\mathbb{E}[\boldsymbol{v}] = 0$ and $\mathbb{E}[\boldsymbol{v}\boldsymbol{v}^{\top}] = \boldsymbol{I}$. 
With exact linear solves, the Hutchinson construction offers an unbiased estimator of the gradient of the log-determinant. 
In this problem, $H$ is a symmetric positive definite Hessian matrix. Thus, $\boldsymbol{v}^{\top} H^{-1}$ in Eq.~\eqref{Eq-log-det-trace} can be efficiently solved by the following quadratic optimization problem 
\begin{equation}\label{Eq-quadratic-optimization}
    \min_{\boldsymbol{u}} \quad \frac{1}{2} \boldsymbol{u}^{\top} H \boldsymbol{u} - \boldsymbol{v}^{\top} \boldsymbol{u},  
\end{equation}
which has the unique minimizer $\boldsymbol{u}_{\text{opt}} = H^{-1}\boldsymbol{v} = (\boldsymbol{v}^{\top} H^{-1})^{\top}$. 
Then, the gradient of the LDH in our problem can be estimated as $\mathbb{E}_{\boldsymbol{v}} [\frac{\partial }{\partial \boldsymbol{\theta}} ( \boldsymbol{u}_{\text{opt}}^{\top} H \boldsymbol{v})]$. 
In this paper, we summarize this procedure as Algorithm~\ref{algorithm-log-determinant-estimator}. 

\begin{algorithm}[!ht] 
\caption{ $\mathsf{GLDH\_Estimator}$: Estimator for the Gradient of LDH. }
\label{algorithm-log-determinant-estimator}
\textbf{Require:} Symmetric positive definite matrix $H$, parameter $\boldsymbol{\theta}$, and sample size $M$. \\
    1: Initialize estimator $\boldsymbol{E} = \boldsymbol{0}$ with the same dimension as $\boldsymbol{\theta}$; \\
    2: \textbf{for } $m = 1, \cdots, M$ \textbf{ do} \\
    3: \quad Sample Rademacher random vector $\boldsymbol{v}^{(m)}$ with $d$ dimensions; \\  
    4: \quad $\boldsymbol{u}^{(m)} \leftarrow  \mathsf{ConjugateGradient}(H, \boldsymbol{v}^{(m)})$; \\  
    5: \quad $\overline{\boldsymbol{u}}^{(m)}
    \leftarrow
    \mathsf{StopGradient}
    \bigl(\boldsymbol{u}^{(m)}\bigr)$; \\
    6: \quad $\boldsymbol{E} \leftarrow  \boldsymbol{E}+ \nabla_{\boldsymbol{\theta}} \Big [ \left(\overline{\boldsymbol{u}}^{(m)}\right)^{\top} H \boldsymbol{v}^{(m)} \Big ]$; \\
    7: \textbf{end for} \\
    8: \textbf{Return }$\boldsymbol{E}/M$.  
\end{algorithm}

In Step 4 of Algorithm~\ref{algorithm-log-determinant-estimator}, the function $\mathsf{ConjugateGradient}(H, \boldsymbol{v}):= \arg\min_{\boldsymbol{u}} \, \frac{1}{2} \boldsymbol{u}^{\top} H \boldsymbol{u} - \boldsymbol{v}^{\top} \boldsymbol{u}$ represents that we use the conjugate gradient method to solve the unconstrained optimization problem~\eqref{Eq-quadratic-optimization}. 
In Step 5, $\mathsf{StopGradient}(\cdot)$ treats its argument as a constant
during backpropagation. Thus, in Step 6, only $H(\boldsymbol{\theta})$ is
differentiated, and the term $\nabla_{\boldsymbol{\theta}} [ \left(\overline{\boldsymbol{u}}^{(m)}\right)^{\top} H \boldsymbol{v}^{(m)} ]$ can be efficiently computed by Hessian-vector and vector-Jacobian products.

Then, the gradient of the risk function~\eqref{Eq-Phi-risk} can be estimated by Algorithm~\ref{algorithm-risk-estimator}.
Both functions $\mathsf{ConjugateGradient}$ in Step 5 and $\mathsf{GLDH\_Estimator}$ have been defined in Algorithm~\ref{algorithm-log-determinant-estimator}. 
In Step 6, we compute the value of the log-determinant in $h_k(\boldsymbol{\omega}_k, \boldsymbol{\theta}_0, \boldsymbol{\theta}_1)$ exactly when $d \le 32$ and estimate it by the stochastic Lanczos quadrature (SLQ)~\citep{ubaru2017fast, huang2020convex} when $d>32$.

\begin{algorithm}[!ht] 
\caption{ $\mathsf{Risk\_Gradient\_Estimator}$: Gradient Estimator for the Risk Function. }
\label{algorithm-risk-estimator}
\textbf{Require:} Generated samples $\{\boldsymbol{\omega}_k^{(i)}\}_{i=1}^N$ and convex potentials $\widetilde{\varphi}_{k}$ for all $k \in \mathbb{K}$, parameter $\boldsymbol{\theta}$. \\ 
    1: Initialize estimator $\boldsymbol{E} = \boldsymbol{0}$ with the same dimension as $\boldsymbol{\theta}$; \\
    2: \textbf{for } $k = 0, 1$ \textbf{ do} \\
    3: \quad \textbf{for } $i = 1, \cdots, N$ \textbf{ do} \\
    4: \quad \quad Compute $\boldsymbol{z}_k^{(i)} = \nabla \widetilde{\varphi}_k^{-1} (\boldsymbol{\omega}_k^{(i)})$ and $\bar{\boldsymbol{z}}_k^{(i)} = \nabla \widetilde{\varphi}_{|1-k|}^{-1} (\boldsymbol{\omega}_k^{(i)})$ by Eq.~\eqref{Eq-trace-source-sample}; \\
    5: \quad \quad Compute $\boldsymbol{u}_k^{(i)} \leftarrow \mathsf{ConjugateGradient} \Big(\nabla^2 \widetilde{\varphi}_{|1-k|}(\bar{\boldsymbol{z}}_k^{(i)}), \frac{\partial \log \, p_{|1-k|} (\boldsymbol{\omega}_k^{(i)})}{\partial \bar{\boldsymbol{z}}_k^{(i)}} \Big)$; \\ 
    6: \quad \quad Update the gradient estimator \begin{equation}\nonumber
        \begin{aligned}
            \boldsymbol{E} \leftarrow  \boldsymbol{E} & + \frac{h_k(\boldsymbol{\omega}_k^{(i)}, \boldsymbol{\theta}_0, \boldsymbol{\theta}_1)}{2} \cdot \Bigg \{ \Big[ \frac{\partial \Big(\boldsymbol{\omega}_k^{(i)} - \nabla \widetilde{\boldsymbol{\varphi}}_{|1-k|}(\bar{\boldsymbol{z}}_k^{(i)})\Big)}{\partial \boldsymbol{\theta}} \Big]^{\top} \boldsymbol{u}_k^{(i)} \\ & \quad \quad + \mathsf{GLDH\_Estimator}\Big(\nabla^2 \widetilde{\varphi}_{k}(\boldsymbol{z}_k^{(i)}), \boldsymbol{\theta} \Big)  - \mathsf{GLDH\_Estimator} \Big(\nabla^2 \widetilde{\varphi}_{|1-k|}(\bar{\boldsymbol{z}}_k^{(i)}), \boldsymbol{\theta} \Big) \Bigg \};
        \end{aligned}
    \end{equation}
    7: \quad \textbf{end for} \\
    8: \textbf{end for} \\
    9: \textbf{Return }$\boldsymbol{E}/N$. 
\end{algorithm}

\subsubsection{Sinkhorn Discrepancy Estimator}

Computing the Sinkhorn discrepancy term $\mathcal{S}_{\Reg}(\widehat{\mathbb{P}}_k,\mathbb{P}_k)$ exactly (see Definition~\ref{Def-Sinkhorn}) in the objective function is challenging. 
In this part, we present a differentiable finite-sample estimator for the Sinkhorn discrepancy. 
The gradient of this estimator can be computed efficiently by backpropagation and is used in Algorithm~\ref{algorithm-HyperCNN}. 

We first introduce the following proposition regarding the dual formulation of Sinkhorn discrepancy~\citep{chewi2025statistical, pooladian2021entropic}. 
\begin{proposition} \label{prop-Sinkhorn-duality}
Let $\mathbb{P}$ and $\mathbb{Q}$ be probability measures with finite second moments. The strong dual formulation of Sinkhorn discrepancy $\mathcal{S}_{\Reg}(\mathbb{P}, \mathbb{Q})$ is given by 
\begin{equation} \nonumber
    \sup_{f \in L^1(\mathbb{P}), g \in L^1(\mathbb{Q})} \quad \Big\{ \int f(\boldsymbol{x}) \ \diff \mathbb{P}(\boldsymbol{x}) + \int g(\boldsymbol{z}) \ \diff \mathbb{Q} (\boldsymbol{z}) - \Reg \iint \left[\exp \Big (\frac{f(\boldsymbol{x})+g(\boldsymbol{z}) -  c(\boldsymbol{x}, \boldsymbol{z})}{\Reg} \Big)-1 \right] \diff \mathbb{P}(\boldsymbol{x}) \diff \mathbb{Q}(\boldsymbol{z}) \Big\}, 
\end{equation}
Let $f_{\Reg}$ and $g_{\Reg}$ be the solutions of the dual formulation, which is also known as optimal entropic potentials. Then the transport plan 
\begin{equation} \nonumber
    \gamma_{\Reg}(\diff \boldsymbol{x}, \diff \boldsymbol{z}) = \exp\left(\frac{f_{\Reg}(\boldsymbol{x})+g_{\Reg}(\boldsymbol{z}) -  c(\boldsymbol{x}, \boldsymbol{z})}{\Reg}\right) \diff \mathbb{P}(\boldsymbol{x}) \diff \mathbb{Q}(\boldsymbol{z})
\end{equation}
is the unique solution to the Sinkhorn discrepancy, and the optimal entropic potentials satisfy
\begin{subequations}\label{Eq-dual-marginal}
    \begin{equation}
        \int \exp\left(\frac{f_{\Reg}(\boldsymbol{x})+g_{\Reg}(\boldsymbol{z}) - c(\boldsymbol{x}, \boldsymbol{z})}{\Reg}\right) \diff \mathbb{P}(\boldsymbol{x}) = 1, \quad \forall \boldsymbol{z} \in \Omega, 
    \end{equation}
    \begin{equation}
        \int \exp\left(\frac{f_{\Reg}(\boldsymbol{x})+g_{\Reg}(\boldsymbol{z}) -  c(\boldsymbol{x}, \boldsymbol{z})}{\Reg}\right) \diff \mathbb{Q}(\boldsymbol{z}) = 1, \quad \forall \boldsymbol{x} \in \Omega, 
    \end{equation}
\end{subequations}
for $\mathbb{Q}$-almost every $\boldsymbol{z}$ and $\mathbb{P}$-almost every $\boldsymbol{x}$, respectively. 
\end{proposition}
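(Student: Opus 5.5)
This is the standard entropic OT duality with the product reference measure $\mathbb{P}\otimes\mathbb{Q}$. The plan is to derive it by Fenchel--Rockafellar (Lagrangian) duality for the primal problem
\[
\inf_{\gamma\in\Gamma(\mathbb{P},\mathbb{Q})}\ \mathbb{E}_\gamma[c]+\Reg\,\mathcal{D}_{\mathrm{KL}}(\gamma\|\mathbb{P}\otimes\mathbb{Q}).
\]

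\emph{Reference measure.} When $\nu$ is replaced by $\mathbb{Q}$, the KL term changes only by $\mathcal{D}_{\mathrm{KL}}(\mathbb{Q}\|\nu)$, which is a constant in $\gamma$. So the minimizer over couplings is unchanged, and I work with $\mathbb{P}\otimes\mathbb{Q}$ as the reference throughout.

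\emph{Weak duality.} Relax the marginal constraints with multipliers $f\in L^1(\mathbb{P})$ and $g\in L^1(\mathbb{Q})$, and minimize over all nonnegative measures $\gamma\ll\mathbb{P}\otimes\mathbb{Q}$. For a density $r=\diff\gamma/\diff(\mathbb{P}\otimes\mathbb{Q})$, the pointwise problem is
\[
\inf_{r\ge0}\ (c-f-g)\,r+\Reg(r\log r-r+1).
\]
Its minimizer is $r=e^{(f+g-c)/\Reg}$, and the resulting value is exactly the dual integrand $-\Reg\,(e^{(f+g-c)/\Reg}-1)$. Using the unnormalized entropy $r\log r-r+1$ is legitimate because it agrees with KL on probability measures. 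Integrating gives dual $\le$ primal for every admissible pair $(f,g)$.

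\emph{Existence and strong duality.} The primal objective is strictly convex and weakly lower semicontinuous. The set $\Gamma(\mathbb{P},\mathbb{Q})$ is tight, hence weakly compact. The independent coupling has finite cost by the finite second moments. Therefore a unique minimizer $\gamma_\Reg$ exists.

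To close the duality gap I would construct potentials by the Schrödinger system. Define the Sinkhorn map
\[
g\mapsto f(\boldsymbol{x})=-\Reg\log\int e^{(g(\boldsymbol{z})-c(\boldsymbol{x},\boldsymbol{z}))/\Reg}\,\diff\mathbb{Q}(\boldsymbol{z}),
\]
and define $g$ from $f$ symmetrically. A fixed point exists by the standard argument: either a Hilbert-metric contraction after truncation to compact supports followed by approximation, or maximizing the concave dual directly, using the normalization $\int f\,\diff\mathbb{P}=\int g\,\diff\mathbb{Q}$ and bounds of the form $f\ge-c(\cdot,\cdot)$-type quadratic growth to obtain $L^1$ compactness.

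A fixed point satisfies the two marginal equations in \eqref{Eq-dual-marginal}. Hence $\gamma:=e^{(f+g-c)/\Reg}\,\diff\mathbb{P}\,\diff\mathbb{Q}$ lies in $\Gamma(\mathbb{P},\mathbb{Q})$. Plugging this $\gamma$ into the primal, the KL term becomes $\int(f+g-c)\,\diff\gamma/\Reg$. The primal value therefore equals $\int f\,\diff\mathbb{P}+\int g\,\diff\mathbb{Q}$, and the exponential term in the dual vanishes because $\int(e^{(f+g-c)/\Reg}-1)=0$. The primal and dual values coincide, so there is no gap. This $\gamma$ attains the primal, so by uniqueness $\gamma=\gamma_\Reg$, and $(f,g)=(f_\Reg,g_\Reg)$ are dual optimal.

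Conversely, any dual optimizer must satisfy the first-order conditions. These are obtained by varying $f$ and $g$ in the directions of $\mathbb{P}$- and $\mathbb{Q}$-integrable functions, which yields exactly \eqref{Eq-dual-marginal} almost everywhere. So every dual optimizer produces $\gamma_\Reg$ by the same formula.

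\emph{Main obstacle.} The delicate step is existence of integrable potentials when the supports are unbounded, i.e.\ under finite second moments only. My first attempt would be to restrict $\mathbb{P}$ and $\mathbb{Q}$ to balls, solve there where $c$ is bounded, and obtain uniform estimates
\[
|f(\boldsymbol{x})|\lesssim 1+\|\boldsymbol{x}\|^2,\qquad |g(\boldsymbol{z})|\lesssim 1+\|\boldsymbol{z}\|^2
\]
from the Sinkhorn map itself. I would then pass to the limit, using stability of entropic OT under weak convergence with moment control. This is the route in \cite{chewi2025statistical}, which I would cite for this step.
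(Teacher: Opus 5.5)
Your proposal is correct and essentially matches the paper, which gives no proof of its own but imports this standard entropic-OT duality from \cite{chewi2025statistical, pooladian2021entropic}; your sketch (Fenchel--Young weak duality, compactness and strict convexity for the primal, Schr\"odinger potentials closing the gap, with existence of $L^1$ potentials under second moments delegated to the same reference) is the standard argument behind that citation. Your switch to the reference measure $\mathbb{P}\otimes\mathbb{Q}$ is also the right reading: it leaves the optimal coupling unchanged, but the displayed dual value equals the $\nu$-referenced $\mathcal{S}_{\Reg}$ of Definition~\ref{Def-Sinkhorn} only up to the additive constant $\Reg\,\mathcal{D}_{\mathrm{KL}}(\mathbb{Q}\|\nu)$.
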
 

Since the target continuous distribution $\mathbb{P}_k$ is computationally intractable, we approximate it as empirical distributions formed by the generated samples, denoted by $\widetilde{\mathbb{P}}_0 = \frac{1}{N} \sum_{i=1}^{N} \delta_{\boldsymbol{\omega}_0^{(i)}}$ and $\widetilde{\mathbb{P}}_1 = \frac{1}{N} \sum_{i=1}^{N} \delta_{\boldsymbol{\omega}_1^{(i)}}$.  
Therefore, it suffices to compute the value $\mathcal{S}_{\Reg, (n_k, N)}(\widehat{\mathbb{P}}_k, \widetilde{\mathbb{P}}_k)$. 
The optimal entropic potentials between discrete distributions $\widehat{\mathbb{P}}_k$ and $\widetilde{\mathbb{P}}_k$, denoted by $f_{\Reg, k, (n_k, N)}$ and $g_{\Reg, k, (n_k, N)}$, can be obtained by Sinkhorn's algorithm~\citep{peyre2019computational, pooladian2021entropic}, leading to the following estimator for the barycentric projection of the optimal entropic coupling: 
\begin{equation}\nonumber
    T_{\Reg, k, (n_k, N)} (\boldsymbol{x}) = \frac{\frac{1}{N} \sum_{i=1}^{N} \boldsymbol{\omega}_k^{(i)} e^{\Big (g_{\Reg, k, (n_k, N)}(\boldsymbol{\omega}_k^{(i)}) - c(\boldsymbol{x}, \boldsymbol{\omega}_k^{(i)}) \Big )/\Reg}}{\frac{1}{N} \sum_{i=1}^{N} e^{\Big (g_{\Reg, k, (n_k, N)}(\boldsymbol{\omega}_k^{(i)}) -  c(\boldsymbol{x}, \boldsymbol{\omega}_k^{(i)}) \Big )/\Reg}}, \quad \forall k \in \mathbb{K}. 
\end{equation}

However, the discrete dual potential $g_{\Reg, k, (n_k, N)}$ is initially defined only on the generated support points and is thus not differentiable for all samples in $\Omega$. 
To make the optimized empirical Sinkhorn discrepancy differentiable with respect to the generated sample locations, we extend $g$ to the general continuous sample space for fixed $f$: 
\begin{equation}\label{Eq-sinkhorn-potential-extension}
    g_{\epsilon, k}^{c}(\boldsymbol{\omega}) = -\epsilon \log \left( \frac{1}{n_k} \sum_{j=1}^{n_k} \exp \Big( \frac{f_k(\boldsymbol{x}_k^{(j)}) -  c(\boldsymbol{x}_k^{(j)}, \boldsymbol{\omega}) }{\epsilon} \Big) \right), \quad \forall \boldsymbol{\omega} \in \Omega. 
\end{equation}
This extension agrees with the original potential $g$ on the empirical support at convergence while making it differentiable. 
Therefore, our differentiable estimator for the Sinkhorn discrepancy is given by 
\begin{equation}\label{Eq-estimator-sinkhorn}
    \widehat{\mathcal{S}}_{\Reg, (n_k, N)}(\widehat{\mathbb{P}}_k,\mathbb{P}_k) := \mathcal{S}_{\Reg, (n_k, N)}(\widehat{\mathbb{P}}_k, \widetilde{\mathbb{P}}_k) = \frac{1}{n_k} \sum_{j=1}^{n_k}f_{\Reg, k, (n_k, N)} (\boldsymbol{x}_k^{(j)}) + \frac{1}{N} \sum_{i=1}^{N}  g_{\epsilon, k}^{c}(\boldsymbol{\omega}_k^{(i)}), 
\end{equation}
and its gradient can be computed by backpropagation (see Appendix~\ref{app-sec-hypercnn-details-sinkhorn}). 

Therefore, the pseudo-code for estimating the differentiable Sinkhorn discrepancy is shown in Algorithm~\ref{algorithm-sinkhorn}. 
Steps 1-6 in Algorithm~\ref{algorithm-sinkhorn} are essentially Sinkhorn's algorithm~\citep{pooladian2021entropic}, and thus the computational complexity of each iteration is $\mathcal{O}(n_k \cdot N)$. 
This alternating algorithm computes the entropic potentials by iteratively updating $f$ and $g$ while satisfying one of the two optimal marginal conditions in Eq.~\eqref{Eq-dual-marginal}. 

\begin{algorithm}[!ht]
\caption{$\mathsf{Sinkhorn\_Estimator}$: Estimator for the Sinkhorn Discrepancy. }
\label{algorithm-sinkhorn}
\textbf{Require:} Training samples $\{\boldsymbol{x}_k^{(i)}\}_{i=1}^{n_k}$ from $\widehat{\mathbb{P}}_k$ and generated samples $\{ \boldsymbol{\omega}_{k}^{(i)}\}_{i=1}^N$. \\ 
    1: Initialize $i = 0$ and $f_{\epsilon, k}^{(0)} = 0$ for each $k$;  \\
    2: \textbf{while } not converged \textbf{ do} \\
    3: \quad update $g_{\epsilon, k}^{(i+1)}(\boldsymbol{\omega}_k) \leftarrow  -\Reg \log \, \frac{1}{n_k} \sum_{j=1}^{n_k} \exp \Big( \frac{f_{\epsilon, k}^{(i)}(\boldsymbol{x}_k^{(j)}) - c(\boldsymbol{x}_k^{(j)}, \boldsymbol{\omega}_k) }{\Reg} \Big)$ for each $k\in \mathbb{K}$; \\ 
    4: \quad update $f_{\epsilon, k}^{(i+1)}(\boldsymbol{x}_k) \leftarrow  -\Reg \log \, \frac{1}{N} \sum_{j=1}^{N} \exp \Big( \frac{g_{\epsilon, k}^{(i+1)}(\boldsymbol{\omega}_k^{(j)}) - c(\boldsymbol{x}_k, \boldsymbol{\omega}_k^{(j)})}{\Reg} \Big)$ for each $k\in \mathbb{K}$; \\ 
    5: \quad $i \leftarrow i+ 1$; \\
    6: \textbf{end while} \\ 
    7: Extend the entropic potential $g_{\epsilon, k}^{(i+1)}(\boldsymbol{\omega}_k)$ to continuous space by Eq.~\eqref{Eq-sinkhorn-potential-extension}; \\
    8: \textbf{Return } $\widetilde{\mathcal{S}}_{\Reg, (n_k, N)}(\widehat{\mathbb{P}}_k, \mathbb{P}_{k, t})$ computed by Eq.~\eqref{Eq-estimator-sinkhorn}. 
\end{algorithm}





\section{Representation Power of HyCNNs}\label{sec-theory-hycnn} 

This section provides theoretical analyses of the representation capacity of HyCNNs. 
Specifically, we focus on their representation power for both convex functions and their gradients, as well as the distributional universality of their induced transport maps. 

We refer to HyCNNs equipped with \textit{smooth} maxout activation functions $\bar{\sigma}_{\tau}$ (see Eq.~\eqref{Eq-smooth-maxout}) as \textit{smooth HyCNNs}. 
We first show the representation power of both standard and smooth HyCNNs. 
Let $L$ and $h$ denote the number of layers and the width at each hidden layer of HyCNNs, respectively. 
Since we take the strongly convex coefficient $\alpha \to 0$, these results hold in our strongly convex setting. 

In the following results, we take the strongly convex coefficient $\alpha \to 0$. 

\begin{theorem}[Representation Power of (Smooth) HyCNNs]\label{theorem-hycnn-rp}
    For any $L_f$-Lipschitz convex function $f: \mathbb{R}^d  \to \mathbb{R}$, compact convex set $\mathbb{D} \subset \mathbb{R}^d$, and $\varepsilon > 0$, assume that $\mathbb{D}$ has a diameter $D\in(0,\infty)$ such that $\|\boldsymbol{x}_1 - \boldsymbol{x}_2\| \le D$ for any $\boldsymbol{x}_1, \boldsymbol{x}_2 \in \mathbb{D}$, and then: 
    \begin{enumerate}
        \item There exists a hyper input convex neural network $\widehat{\mathsf{HyCNN}}$ with maxout activation functions satisfying $\sup_{\boldsymbol{x} \in \mathbb{D}} | \widehat{\mathsf{HyCNN}}(\boldsymbol{x}) - f(\boldsymbol{x}) | \le \varepsilon$ and $L \cdot h = \mathcal{O}((\frac{DL_f} {\varepsilon})^d)$. \label{theorem-hycnn-rp-maxout}
        \item There exists a smooth HyCNN with a sufficiently small parameter $\tau>0$, denoted by $\widehat{\mathsf{HyCNN}}^{\tau}$, satisfying $\sup_{\boldsymbol{x} \in \mathbb{D}} | \widehat{\mathsf{HyCNN}}^{\tau}(\boldsymbol{x}) - f(\boldsymbol{x}) | \le \varepsilon$ and $L \cdot h = \mathcal{O}((\frac{DL_f} {\varepsilon})^d)$. \label{theorem-hycnn-rp-smooth-1}
        \item There exists a sequence of smooth HyCNNs, denoted by $\{\widehat{\mathsf{HyCNN}}_n^{\tau}\}_{n=1}^{\infty}$, satisfying  $\widehat{\mathsf{HyCNN}}_n^{\tau}\to f$ uniformly on $\mathbb{D}$, 
        when $\tau$ is sufficiently small and $L \cdot h = \mathcal{O}((\frac{DL_f} {\varepsilon})^d)$. 
        \label{theorem-hycnn-rp-smooth-2}
    \end{enumerate}
\end{theorem}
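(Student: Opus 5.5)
The plan is to first build a piecewise-affine max-of-affine approximant of $f$, then realize it exactly with a maxout HyCNN, and finally pass to the smooth activation by a uniform perturbation bound.

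\emph{Step 1 (affine approximant).} Cover $\mathbb{D}$ by a grid $\{\boldsymbol{x}_i\}_{i=1}^{m}$ of mesh $\delta = \varepsilon/(2L_f)$, with $m = \mathcal{O}((D/\delta)^d) = \mathcal{O}((DL_f/\varepsilon)^d)$ points. At each $\boldsymbol{x}_i$ pick a subgradient $\boldsymbol{g}_i \in \partial f(\boldsymbol{x}_i)$; Lipschitzness gives $\|\boldsymbol{g}_i\|\le L_f$. Set $g(\boldsymbol{x}) := \max_i \{ f(\boldsymbol{x}_i) + \boldsymbol{g}_i^\top(\boldsymbol{x}-\boldsymbol{x}_i)\}$. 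Convexity gives $g\le f$. For the other direction, given $\boldsymbol{x}\in\mathbb{D}$, choose $\boldsymbol{x}_i$ with $\|\boldsymbol{x}-\boldsymbol{x}_i\|\le\delta$. Then
\[
f(\boldsymbol{x}) - g(\boldsymbol{x}) \le |f(\boldsymbol{x})-f(\boldsymbol{x}_i)| + \|\boldsymbol{g}_i\|\,\|\boldsymbol{x}-\boldsymbol{x}_i\| \le 2L_f\delta = \varepsilon .
\]

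\emph{Step 2 (exact maxout realization, part \ref{theorem-hycnn-rp-maxout}).} We represent $g$ exactly by a HyCNN with $\bar\sigma=\max$ and nonnegative $\boldsymbol{V}$'s. The simplest route is a ``running max'' network with $L=m$ layers and width $h=1$. Take $y_1=\max(a_1,a_2)$, where the $a_j$ are the affine pieces injected through the passthrough weights $\boldsymbol{W}_0^{(1)},\boldsymbol{W}_0^{(2)}$. Then set $y_{l+1}=\max(1\cdot y_l + 0\cdot\boldsymbol{x} + 0,\ 0\cdot y_l + \boldsymbol{g}_{l+2}^\top\boldsymbol{x} + b_{l+2})$, and finish with output $y_{L+1}=y_L$ (so $\boldsymbol{V}_L=1$, $\boldsymbol{W}_L=0$). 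All $\boldsymbol{V}$ weights lie in $\{0,1\}$, hence are nonnegative. Then $L\cdot h = \mathcal{O}(m)$. Alternatively, a tournament tree of width $m/2$ and depth $\log m$ gives the same product bound up to logs. The important observation is that passthrough layers are exactly what allow new affine pieces of $\boldsymbol{x}$ to be inserted at every layer; this is why a maxout HyCNN, unlike a plain maxout ICNN, needs no extra width for identity copies.

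\emph{Step 3 (smoothing, parts \ref{theorem-hycnn-rp-smooth-1}--\ref{theorem-hycnn-rp-smooth-2}).} Replace $\max$ by $\bar\sigma_\tau$ in the same network. Then $0\le \bar\sigma_\tau(a_1,a_2)-\max(a_1,a_2)\le \tau\log 2$. Moreover, $\bar\sigma_\tau$ is 1-Lipschitz in the $\ell_\infty$ sense with respect to $(a_1,a_2)$, since its gradient lies in the simplex. Let $e_l$ be the error at layer $l$. Because the weights multiplying $y_l$ are in $\{0,1\}$, induction gives $e_{l+1}\le e_l+\tau\log2$. The output error is therefore at most $L\tau\log 2$ uniformly on $\mathbb{D}$. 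Building the maxout network of Step 2 at accuracy $\varepsilon/2$ and choosing $\tau\le \varepsilon/(2L\log 2)$ yields part \ref{theorem-hycnn-rp-smooth-1} with the same $L\cdot h$. For part \ref{theorem-hycnn-rp-smooth-2}, take $\varepsilon_n=1/n$, build the corresponding networks, and choose $\tau_n\to0$ as above. This gives uniform convergence on $\mathbb{D}$.

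For general (non-$\{0,1\}$) nonnegative weights, the error amplification per layer is bounded by $\|\boldsymbol{V}_l\|_\infty$, and the same argument goes through with $\tau$ scaled by the product of these norms. The $\alpha\to0$ quadratic term contributes at most $\frac{\alpha}{2}\max_{\mathbb{D}}\|\boldsymbol{x}\|^2$, which is absorbed by taking $\alpha$ small.

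The main obstacle I expect is the bookkeeping of the complexity bound in Step 2. The paper's definition requires $\boldsymbol{y}_0=\boldsymbol{0}$, a fixed width $h$ at each layer, and a bias-only combination in $\bar\sigma$ of two branches. I must check that the running-max construction fits this form literally, in particular how the first layer handles two fresh pieces. I must also check that the $L\cdot h$ count is not inflated, for example by padding $m$ to be odd or even. The smoothing step is routine once the $\{0,1\}$ weight structure is fixed.
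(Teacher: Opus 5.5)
Your proposal is correct and follows essentially the same route as the paper: take the max of supporting hyperplanes on an $\varepsilon/(2L_f)$-net, realize it exactly with a width-$1$, depth-$\mathcal{O}(J)$ sequential maxout HyCNN, then smooth with a $\tau\log 2$ error per activation and take $\varepsilon_n = 1/n$ for part (III). Your version is slightly cleaner than the paper's in two places. You drop the superfluous offset terms $r_j$ and the $[\boldsymbol{x};-\boldsymbol{x}]$ input-doubling, which are unnecessary because only the $\boldsymbol{V}$ weights must be nonnegative. You also make the paper's unspecified constant $C$ explicit as $L$, using the $\ell_\infty$ 1-Lipschitzness of $\bar\sigma_\tau$ and the $\{0,1\}$ weights on $y_l$.
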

The proof of Theorem~\ref{theorem-hycnn-rp} is provided in Appendix~\ref{app-sec-theo-hycnn-rp}.  
Compared to existing qualitative representation power theorems of ICNNs~\citep{chen2018optimal, huang2020convex}, Theorem~\ref{theorem-hycnn-rp} extends this result to HyCNNs and provides quantitative guidelines for hyperparameter selection. 

Although these properties of (smooth) HyCNNs show that they can uniformly approximate convex potentials on a compact set, convergence of potentials does not generally imply convergence of the gradient fields~\citep{huang2020convex}.
Theorem~\ref{theo-hycnn-gradient-convergence} shows that the smooth HyCNNs approximate the gradient of convex functions well. 

\begin{theorem} [Gradient Convergence of Smooth HyCNNs] \label{theo-hycnn-gradient-convergence}
    Let $f:\mathbb{R}^d\to\mathbb{R}$ be a Lipschitz continuous convex function, and $\{\widehat{\mathsf{HyCNN}}_n^{\tau}\}_{n=1}^{\infty}$ be a sequence of smooth HyCNNs with hyperparameters $L_n \cdot h_n = \mathcal{O}({\varepsilon_n}^{-d})$ for any $\varepsilon_n>0$, where $\widehat{\mathsf{HyCNN}}_n^{\tau}: \mathbb{R}^d\to\mathbb{R}$. 
    Assume that $ \widehat{\mathsf{HyCNN}}_n^{\tau} \to f $. Then 
    \begin{enumerate}
        \item For almost every $\boldsymbol{x} \in \mathbb{R}^d$, $\nabla \widehat{\mathsf{HyCNN}}_n^{\tau} (\boldsymbol{x}) \to \nabla f(\boldsymbol{x})$.  \label{theo-hycnn-gradient-convergence-1}
        \item If $f$ is Lipschitz smooth, the gradient sequence $\nabla \widehat{\mathsf{HyCNN}}_n$ converges to $\nabla f$ uniformly at a rate of $\mathcal{O}\left(\sqrt{\varepsilon_n} \right)$. \label{theo-hycnn-gradient-convergence-2}
    \end{enumerate} 
\end{theorem}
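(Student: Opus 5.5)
Both parts rest on the same fact: if a sequence of differentiable convex functions converges uniformly to a convex limit, then gradients are controlled by first-order convexity inequalities. Write $f_n := \widehat{\mathsf{HyCNN}}_n^{\tau}$. Each $f_n$ is convex and $C^1$ (indeed $C^2$), by the convexity proposition for HyCNNs and the smoothness of $\bar{\sigma}_{\tau}$ in Eq.~\eqref{Eq-smooth-maxout}. By Theorem~\ref{theorem-hycnn-rp}, with the width/depth budget $L_n h_n = \mathcal{O}(\varepsilon_n^{-d})$ we may take $\sup_{\mathbb{D}} |f_n - f| \le \varepsilon_n$ on any fixed compact convex $\mathbb{D}$, with $\varepsilon_n \to 0$. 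I will work on an arbitrary ball $\mathbb{D}$ and exhaust $\mathbb{R}^d$ by countably many such balls.

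\textbf{Part (I).} Fix $\boldsymbol{x}$ in the interior of $\mathbb{D}$ at which $f$ is differentiable; by Rademacher's theorem, or Alexandrov's for convex functions, this is almost every point. First, the gradients are bounded near $\boldsymbol{x}$. Convexity gives
\[
\nabla f_n(\boldsymbol{x})^{\top}\boldsymbol{u}\le \frac{f_n(\boldsymbol{x}+r\boldsymbol{u})-f_n(\boldsymbol{x})}{r}
\]
for unit $\boldsymbol{u}$ and small $r>0$. Combined with uniform convergence and the Lipschitz property of $f$, this yields $\|\nabla f_n(\boldsymbol{x})\|\le L_f + 2\varepsilon_n/r$. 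Hence every subsequence has a further subsequence with $\nabla f_n(\boldsymbol{x})\to \boldsymbol{p}$. Passing to the limit in $f_n(\boldsymbol{y})\ge f_n(\boldsymbol{x})+\nabla f_n(\boldsymbol{x})^{\top}(\boldsymbol{y}-\boldsymbol{x})$ for $\boldsymbol{y}$ near $\boldsymbol{x}$ shows that $\boldsymbol{p}\in\partial f(\boldsymbol{x})=\{\nabla f(\boldsymbol{x})\}$. Since the limit point is unique, the whole sequence converges. A countable union of the exceptional null sets over the balls exhausting $\mathbb{R}^d$ is still null, which proves Part (I). This is the standard argument of Rockafellar (Theorem 24.5/25.7) specialized to our setting.

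\textbf{Part (II).} Assume $\nabla f$ is $\beta$-Lipschitz, and set $e_n := f_n - f$, so that $|e_n|\le\varepsilon_n$ on $\mathbb{D}$. For $\boldsymbol{x}$ in a slightly shrunken set $\mathbb{D}'$ (so that $\boldsymbol{x}+t\boldsymbol{u}\in\mathbb{D}$) and any unit vector $\boldsymbol{u}$, convexity of $f_n$ together with the descent lemma for $f$ gives
\[
t\,\nabla f_n(\boldsymbol{x})^{\top}\boldsymbol{u}\le f_n(\boldsymbol{x}+t\boldsymbol{u})-f_n(\boldsymbol{x})\le t\,\nabla f(\boldsymbol{x})^{\top}\boldsymbol{u}+\tfrac{\beta}{2}t^2+2\varepsilon_n .
\]
Dividing by $t$, we get $(\nabla f_n(\boldsymbol{x})-\nabla f(\boldsymbol{x}))^{\top}\boldsymbol{u}\le \beta t/2+2\varepsilon_n/t$. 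Choosing $t=2\sqrt{\varepsilon_n/\beta}$ makes the right-hand side $2\sqrt{\beta\varepsilon_n}$. Taking the supremum over $\boldsymbol{u}$, in particular over $\pm$ directions, yields $\sup_{\mathbb{D}'}\|\nabla f_n-\nabla f\|\le 2\sqrt{\beta\varepsilon_n}=\mathcal{O}(\sqrt{\varepsilon_n})$. Note that only convexity of $f_n$ is used, not smoothness of the network.

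\textbf{Main obstacle.} The delicate point is the domain. Uniform convergence is only guaranteed on the compact set $\mathbb{D}$ from Theorem~\ref{theorem-hycnn-rp}, while the statement is phrased on $\mathbb{R}^d$. In the argument above, the step size $t\to 0$ makes the margin needed between $\mathbb{D}'$ and $\mathbb{D}$ vanish for large $n$. So I would state the uniform rate on compact subsets of the interior, or on $\mathbb{D}$ after enlarging it by a fixed radius, and interpret the theorem's convergence hypothesis as locally uniform convergence.
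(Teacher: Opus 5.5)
Your proposal is correct and follows the paper's route. Part (II) is exactly the paper's argument: the descent lemma for $f$ combined with the first-order convexity inequality for the network, with step $2\sqrt{\varepsilon_n/M}$ giving the bound $2\sqrt{M\varepsilon_n}$. For Part (I) the paper simply cites Theorem 2 of Huang et al.\ (convex potential flows), whose content is the Rockafellar-type bounded-gradient and subgradient-limit argument you write out. Your shrinking of $\mathbb{D}$ to $\mathbb{D}'$ also makes explicit the requirement $\boldsymbol{y}\in\mathbb{D}$, which the paper's proof imposes without checking that it can be met near the boundary.
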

The proof of Theorem~\ref{theo-hycnn-gradient-convergence} is provided in Appendix~\ref{appsec-theo-hycnn-gradient-convergence}. 
Generally, Theorem~\ref{theo-hycnn-gradient-convergence} holds for any differentiable convex function sequence. 
Note that the uniform convergence assumption in Theorem~\ref{theo-hycnn-gradient-convergence} can be verified using the representation power of smooth HyCNNs on compact sets (see Theorem~\ref{theorem-hycnn-rp}\ref{theorem-hycnn-rp-smooth-2}). 

Based on the results above and Brenier's theorem, the following Theorem~\ref{theo-hycnn-universality} shows that smooth HyCNNs are distributionally universal in our problem. 
\begin{theorem}[Distributional Universality of Smooth HyCNNs] \label{theo-hycnn-universality}
    Let $\mu,\nu\in\mathscr{P}(\Omega)$ be regarded as probability measures on $\mathbb{R}^d$ supported on $\Omega$. 
    Assume that $\mu$ is absolutely continuous with respect to the $d$-dimensional Lebesgue measure on $\mathbb{R}^d$, and that $\mu$ and $\nu$ have finite second moments. 
    Then there exists a sequence of smooth HyCNNs, denoted by $\{\widehat{\mathsf{HyCNN}}_n^{\tau}\}_{n=1}^{\infty}$, such that $(\nabla\widehat{\mathsf{HyCNN}}_n^{\tau})_{\#} \mu \Rightarrow \nu$, where $\Rightarrow$ denotes weak convergence of probability measures on $\mathbb{R}^d$. 
\end{theorem}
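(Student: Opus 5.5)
The plan is to combine Brenier's theorem (Theorem~\ref{Thm:Fundamental:OT}) with the approximation results of Theorems~\ref{theorem-hycnn-rp} and~\ref{theo-hycnn-gradient-convergence}. Weak convergence will then follow from almost-everywhere convergence of the maps, via dominated convergence on bounded continuous test functions.

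\textbf{Step 1: Brenier potential.} Since $\mu \ll$ Lebesgue and both measures have finite second moments, Theorem~\ref{Thm:Fundamental:OT} gives a proper convex $\varphi$ with $(\nabla\varphi)_{\#}\mu=\nu$. Two defects must be handled: $\varphi$ need not be finite everywhere, and it need not be globally Lipschitz. Theorems~\ref{theorem-hycnn-rp} and~\ref{theo-hycnn-gradient-convergence} require Lipschitz convex functions.

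\textbf{Step 2: Lipschitz truncation.} I would approximate $\nu$ by compactly supported measures rather than truncating $\varphi$ directly. Fix $R_m\uparrow\infty$ and let $\nu_m := (\pi_m\circ\nabla\varphi)_{\#}\mu$, where $\pi_m$ is the projection onto the ball $B_{R_m}$. Then $\nu_m\Rightarrow\nu$, because $\pi_m\circ\nabla\varphi\to\nabla\varphi$ pointwise $\mu$-a.e. A second application of Brenier gives a convex $\varphi_m$ with $(\nabla\varphi_m)_{\#}\mu=\nu_m$. Since $\nu_m$ is supported in $B_{R_m}$, we may take $\varphi_m(x)=\sup_{y\in B_{R_m}}\{x^\top y-\varphi_m^\ast(y)\}$, which is finite, convex, and $R_m$-Lipschitz on $\mathbb{R}^d$. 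This Legendre-type representation is standard for the Brenier potential of a compactly supported target.

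\textbf{Step 3: HyCNN approximation of each $\varphi_m$.} Take compact convex sets $\mathbb{D}_j=\overline{B_j}$. By Theorem~\ref{theorem-hycnn-rp}\ref{theorem-hycnn-rp-smooth-1}, for each $j$ there is a smooth HyCNN $\psi_{m,j}$ with $\sup_{\mathbb{D}_j}|\psi_{m,j}-\varphi_m|\le 1/j$. Theorem~\ref{theo-hycnn-gradient-convergence}\ref{theo-hycnn-gradient-convergence-1} is stated for convergence on $\mathbb{R}^d$, but its underlying argument (Rockafellar's theorem: locally uniform convergence of differentiable convex functions implies gradient convergence at every differentiability point of the limit) only needs local uniform convergence. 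Hence $\nabla\psi_{m,j}\to\nabla\varphi_m$ at every differentiability point of $\varphi_m$, which is Lebesgue-a.e. and therefore $\mu$-a.e.

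\textbf{Step 4: pushforward convergence and diagonalization.} For bounded continuous $g$, dominated convergence gives $\int g\circ\nabla\psi_{m,j}\,d\mu\to\int g\circ\nabla\varphi_m\,d\mu=\int g\,d\nu_m$. So $(\nabla\psi_{m,j})_{\#}\mu\Rightarrow\nu_m$ as $j\to\infty$. Weak convergence on $\mathscr{P}(\mathbb{R}^d)$ is metrizable, for instance by the bounded-Lipschitz metric $d_{BL}$. Choose $j(m)$ with $d_{BL}((\nabla\psi_{m,j(m)})_{\#}\mu,\nu_m)\le 1/m$. Combined with $d_{BL}(\nu_m,\nu)\to0$, the diagonal sequence $\widehat{\mathsf{HyCNN}}_n^{\tau}:=\psi_{n,j(n)}$ satisfies the claim. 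The strong-convexity term $\frac{\alpha}{2}\|x\|^2$ with $\alpha\to0$ can be absorbed here: take $\alpha_n\to0$ fast enough along the diagonal, since it changes gradients only by $\alpha_n x$, which tends to $0$ locally uniformly.

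\textbf{Main obstacle.} I expect the hardest step to be Step 2, namely guaranteeing a globally finite Lipschitz convex potential to which Theorem~\ref{theorem-hycnn-rp} applies. Brenier potentials can be infinite outside the convex hull of the support of $\mu$ and have unbounded gradients. If the sup-representation becomes delicate, I would fall back to approximating $\nu$ in $W_2$ by finitely supported measures $\nu_m=\sum_i w_i\delta_{y_i}$. These have semi-discrete Brenier potentials $\varphi_m(x)=\max_i\{x^\top y_i-c_i\}$, which are explicitly finite, Lipschitz, and themselves exactly representable by maxout HyCNNs.
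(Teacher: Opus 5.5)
Your argument is correct. Its skeleton matches the paper's: Brenier, then Theorem~\ref{theorem-hycnn-rp}\ref{theorem-hycnn-rp-smooth-1}, then a.e.\ gradient convergence via Theorem~\ref{theo-hycnn-gradient-convergence}\ref{theo-hycnn-gradient-convergence-1}, then almost-sure convergence under $\mu$ implying weak convergence. The two routes differ in how the potential is made approximable.

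The paper works with a single potential. It takes the Brenier potential $f$ to be finite on all of $\mathbb{R}^d$, as its statement of Theorem~\ref{Thm:Fundamental:OT} asserts, so $f$ is Lipschitz near each cube $[-n,n]^d$. It then picks one smooth HyCNN within $1/n$ of $f$ there and gets locally uniform convergence directly, with no truncation of $\nu$, no second transport map and no diagonalization.

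Your concern in Step~1 is justified, because in general no finite-everywhere Brenier potential exists. For example, take $d=1$, $\mu$ uniform on $[0,1]$ and $\nu=\mathcal{N}(0,1)$. The monotone map blows up at both endpoints, so every convex potential equals $+\infty$ outside $[0,1]$. The paper's shortcut is therefore valid only in cases such as $\mathrm{supp}\,\mu=\mathbb{R}^d$: there $\mathrm{dom}\,\varphi$ cannot miss an open half-space and so is all of $\mathbb{R}^d$. This case does cover the Gaussian-mixture references $\mathbb{P}_k^{\epsilon}$ the paper actually uses.

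Your truncation $\nu_m=(\pi_m\circ\nabla\varphi)_{\#}\mu$ buys the theorem in the generality stated. It also lets Theorem~\ref{theorem-hycnn-rp} be applied verbatim to a globally $R_m$-Lipschitz function. The Legendre-type modification you call standard does hold: the modified potential is $\le\varphi_m$, with equality and the same subdifferential wherever $\nabla\varphi_m(x)\in B_{R_m}$, i.e.\ $\mu$-a.e. The price is a second Brenier map and a metrized diagonal argument.

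A lighter fix that keeps the paper's single-sequence structure is also available. Exhaust the open convex set $\mathrm{int}(\mathrm{dom}\,\varphi)$, which has full $\mu$-measure, by compact convex sets on which $\varphi$ is Lipschitz. Then invoke Rockafellar's gradient-convergence theorem on that open set rather than on $\mathbb{R}^d$.
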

The proof of Theorem~\ref{theo-hycnn-universality} is provided in Appendix~\ref{app-sec-theo-hycnn-universality}.
It demonstrates that smooth HyCNN gradient maps are distributionally universal for approximating target distributions.
For a reference distribution $\mathbb{P}^{\epsilon} \in \mathscr{P}(\Omega)$ and any candidate least-favorable distribution $\mathbb{P}^{\star} \in \mathscr{P}(\Omega)$ in our problem, assume that both $\mathbb{P}^{\epsilon}$ and $\mathbb{P}^{\star}$ have finite second moments, and $\mathbb{P}^{\epsilon}$ is absolutely continuous with respect to the Lebesgue measure on $\mathbb{R}^d$. 
Then, the gradient map of smooth HyCNNs is distributionally universal for approximating any candidate least-favorable distribution. 

\section{Experiments}\label{Sec-results}

In this section, we conduct experiments to assess the effectiveness of the proposed method on datasets of varying training sample sizes and dimensions. 
Specifically, we evaluate its performance on small sample sizes using a synthetic Gaussian mixture dataset (Section~\ref{Sec-results-Gaussian}) and the MNIST dataset (Section~\ref{Sec-results-MNIST}), and on large sample sizes using the Higgs dataset (Section~\ref{Sec-results-Higgs}). 
We also provide a qualitative illustration of our method on a high-dimensional human face generation task (Section~\ref{Sec-results-FFHQ}). 
All experiments were conducted on a laptop with an
NVIDIA GeForce RTX 4060 GPU using Python. 
Compared to the existing methods, our approach can handle hypothesis testing problems across different sample sizes and dimensions, achieving higher accuracy and robustness. 
Codes and experimental results are available at \href{https://github.com/Arthas819/SDRHT}{https://github.com/Arthas819/SDRHT}. 

\subsection{Synthetic Gaussian Mixture Dataset}\label{Sec-results-Gaussian}

\begin{figure}[!htb]
  \centering

  \begin{subfigure}{0.32\linewidth}
    \centering
    \includegraphics[width=\linewidth,height=0.25\textheight,keepaspectratio]{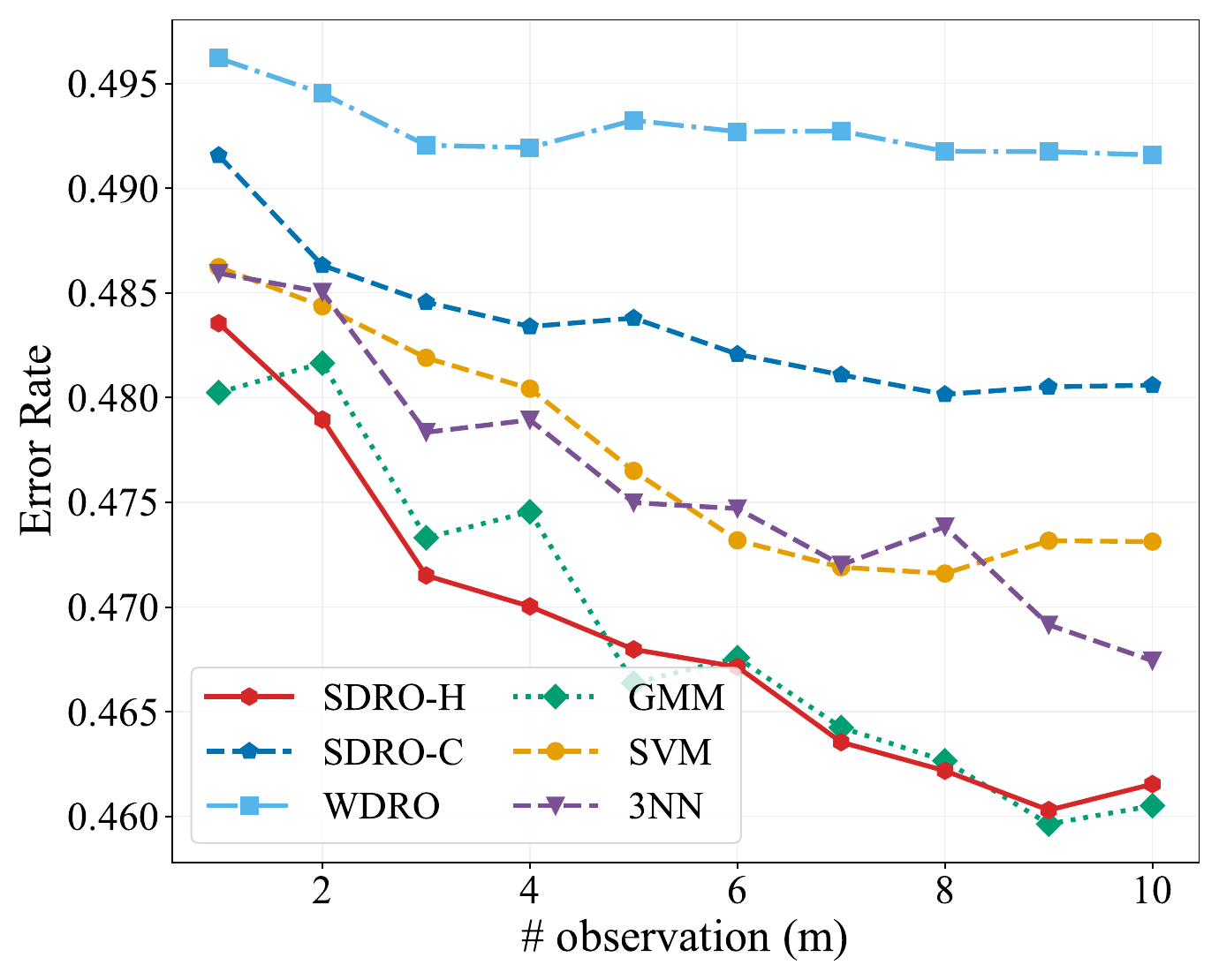}
    \caption{$d=4$}
  \end{subfigure}
  \hfill
  \begin{subfigure}{0.32\linewidth}
    \centering
    \includegraphics[width=\linewidth,height=0.25\textheight,keepaspectratio]{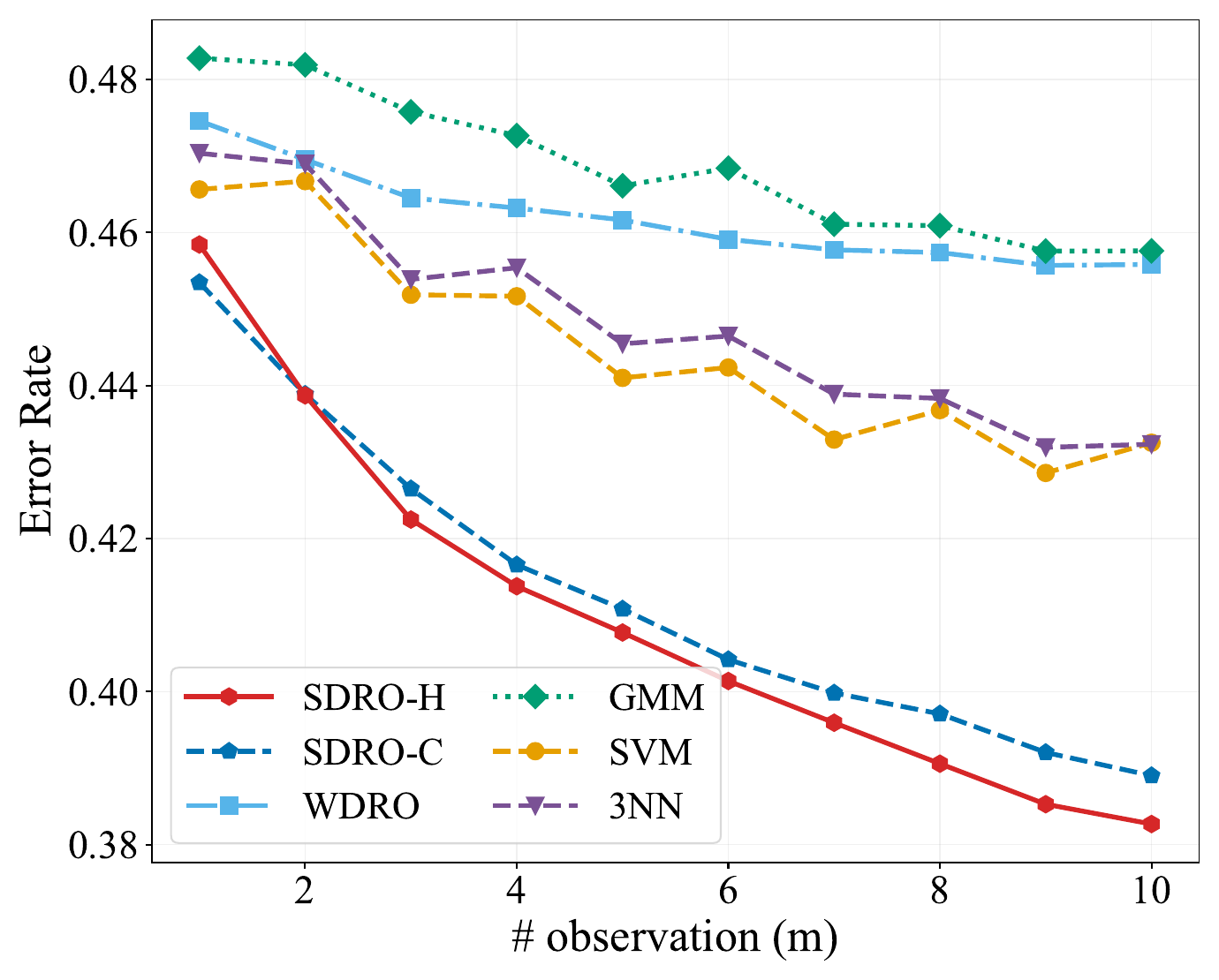}
    \caption{$d=10$}
  \end{subfigure}
  \hfill
  \begin{subfigure}{0.32\linewidth}
    \centering
    \includegraphics[width=\linewidth,height=0.25\textheight,keepaspectratio]{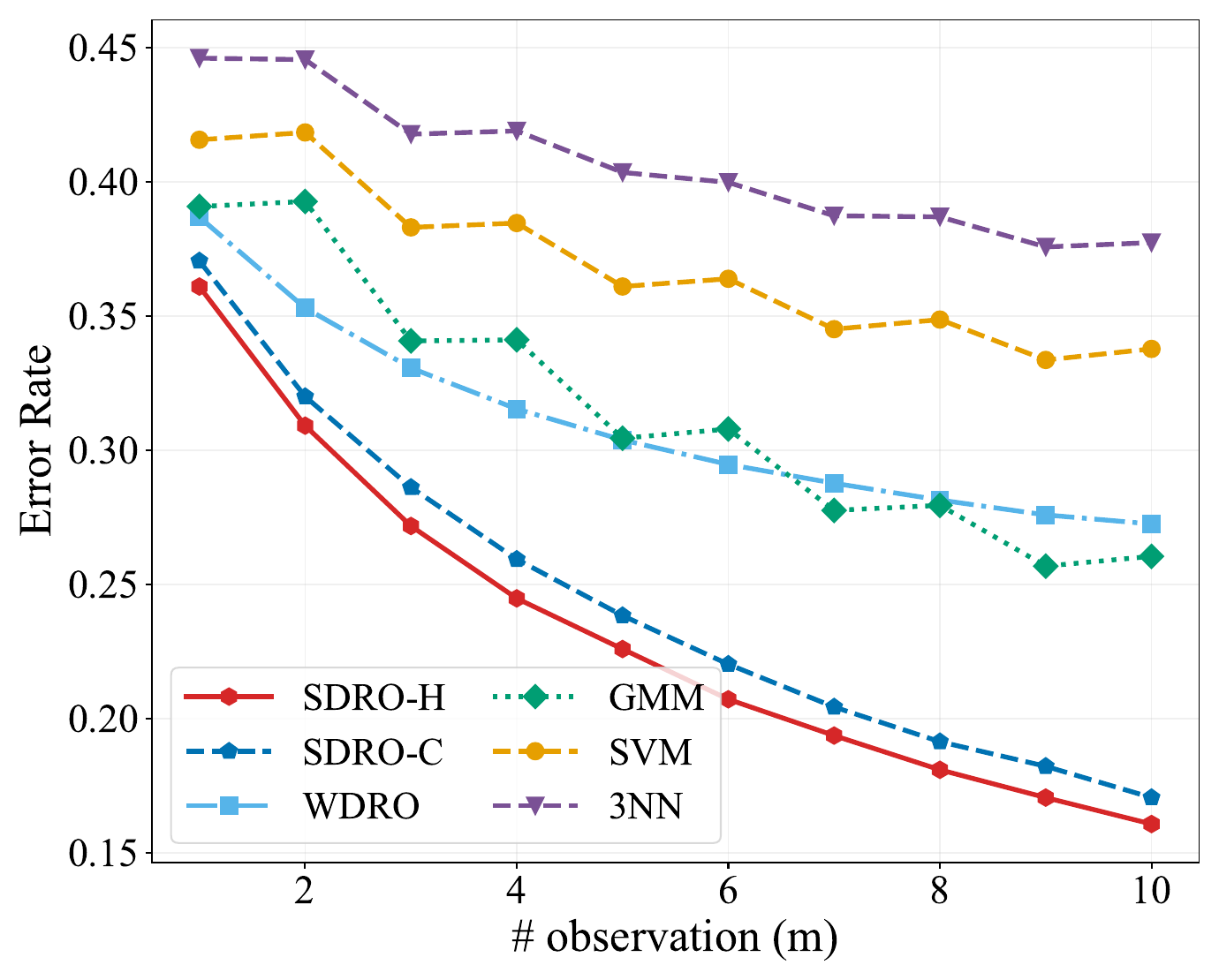}
    \caption{$d=30$}
  \end{subfigure}

  \vspace{0.5em}

  \begin{subfigure}{0.32\linewidth}
    \centering
    \includegraphics[width=\linewidth,height=0.25\textheight,keepaspectratio]{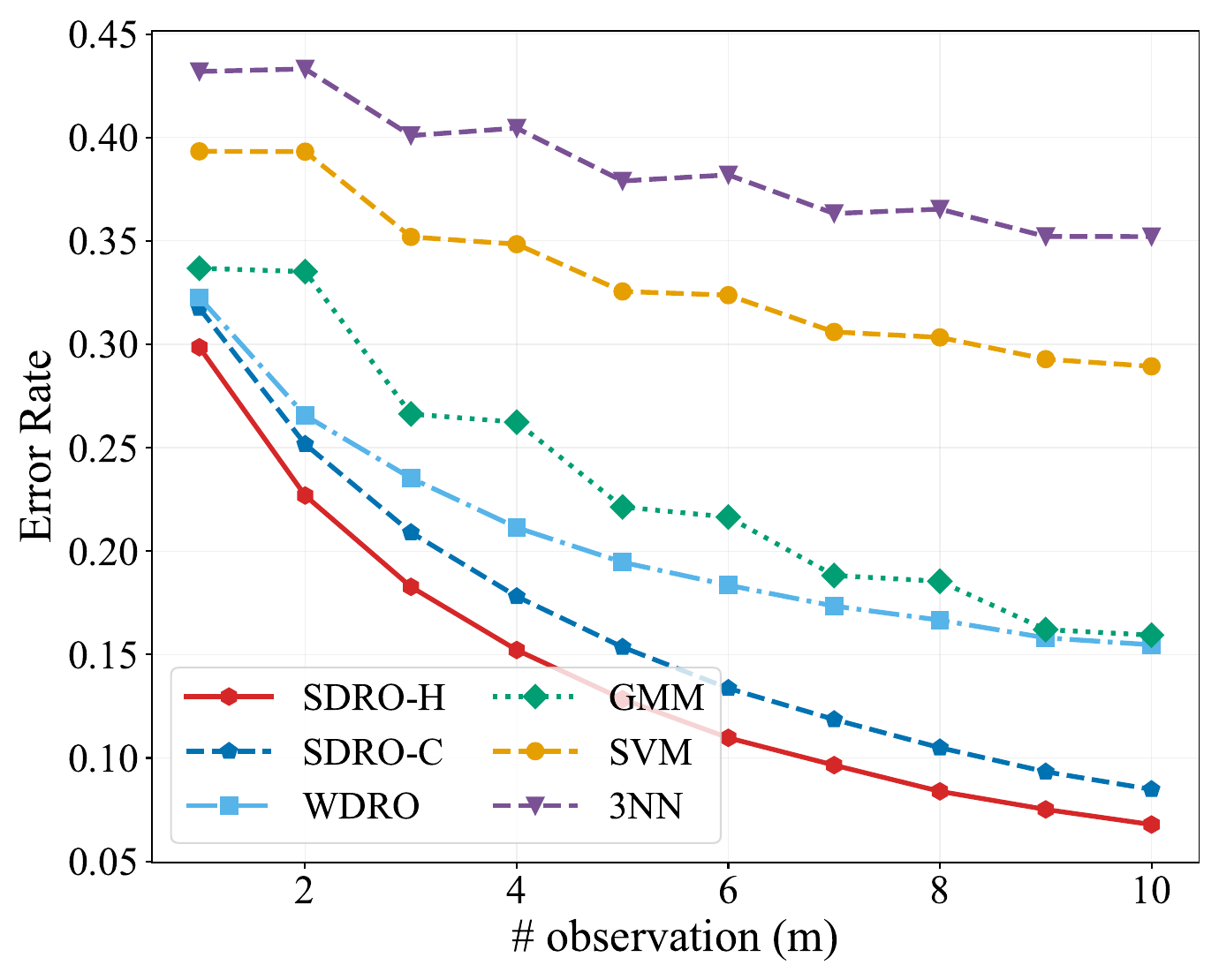}
    \caption{$d=50$}
  \end{subfigure}
  \hfill
  \begin{subfigure}{0.32\linewidth}
    \centering
    \includegraphics[width=\linewidth,height=0.25\textheight,keepaspectratio]{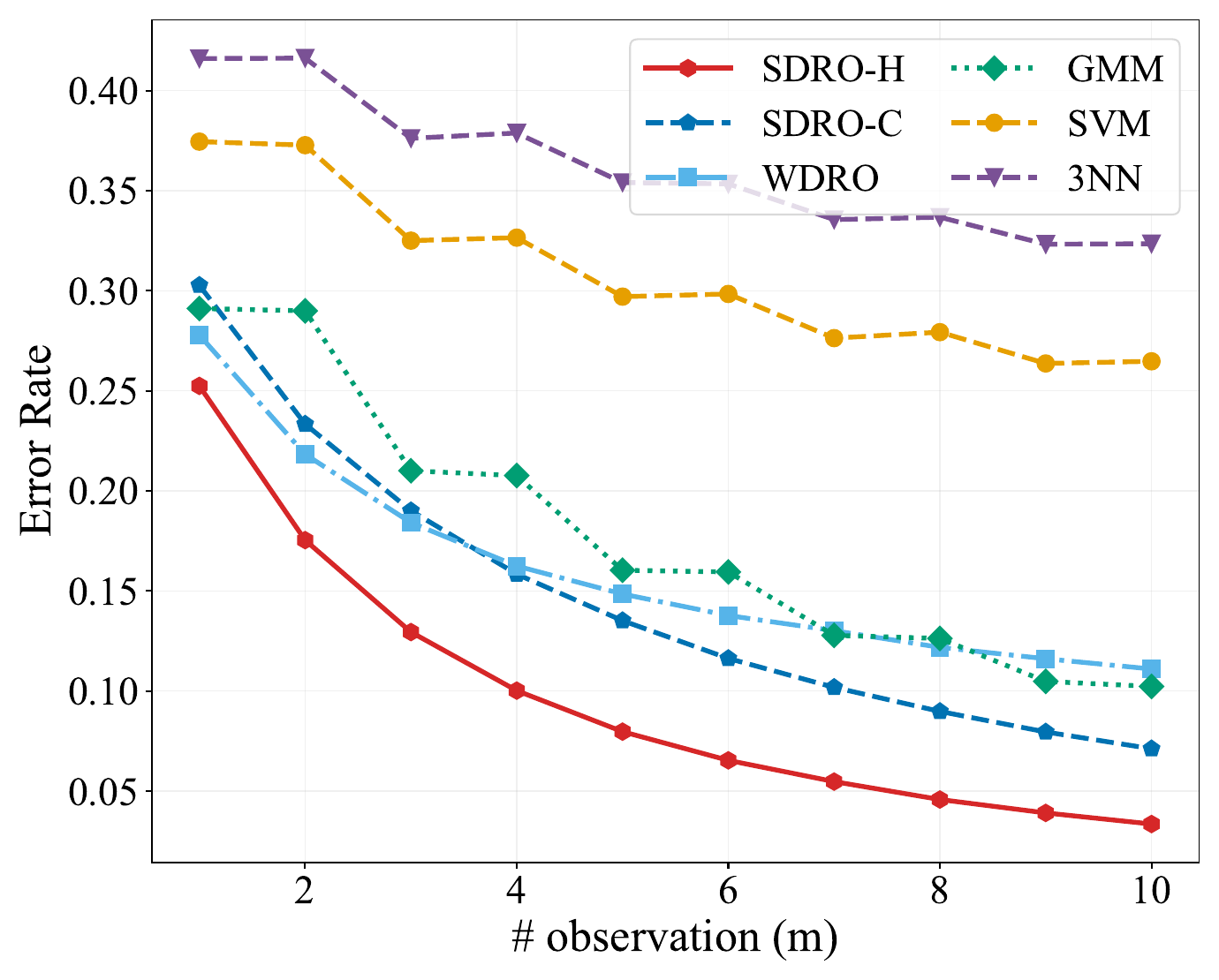}
    \caption{$d=70$}
  \end{subfigure}
  \hfill
  \begin{subfigure}{0.32\linewidth}
    \centering
    \includegraphics[width=\linewidth,height=0.25\textheight,keepaspectratio]{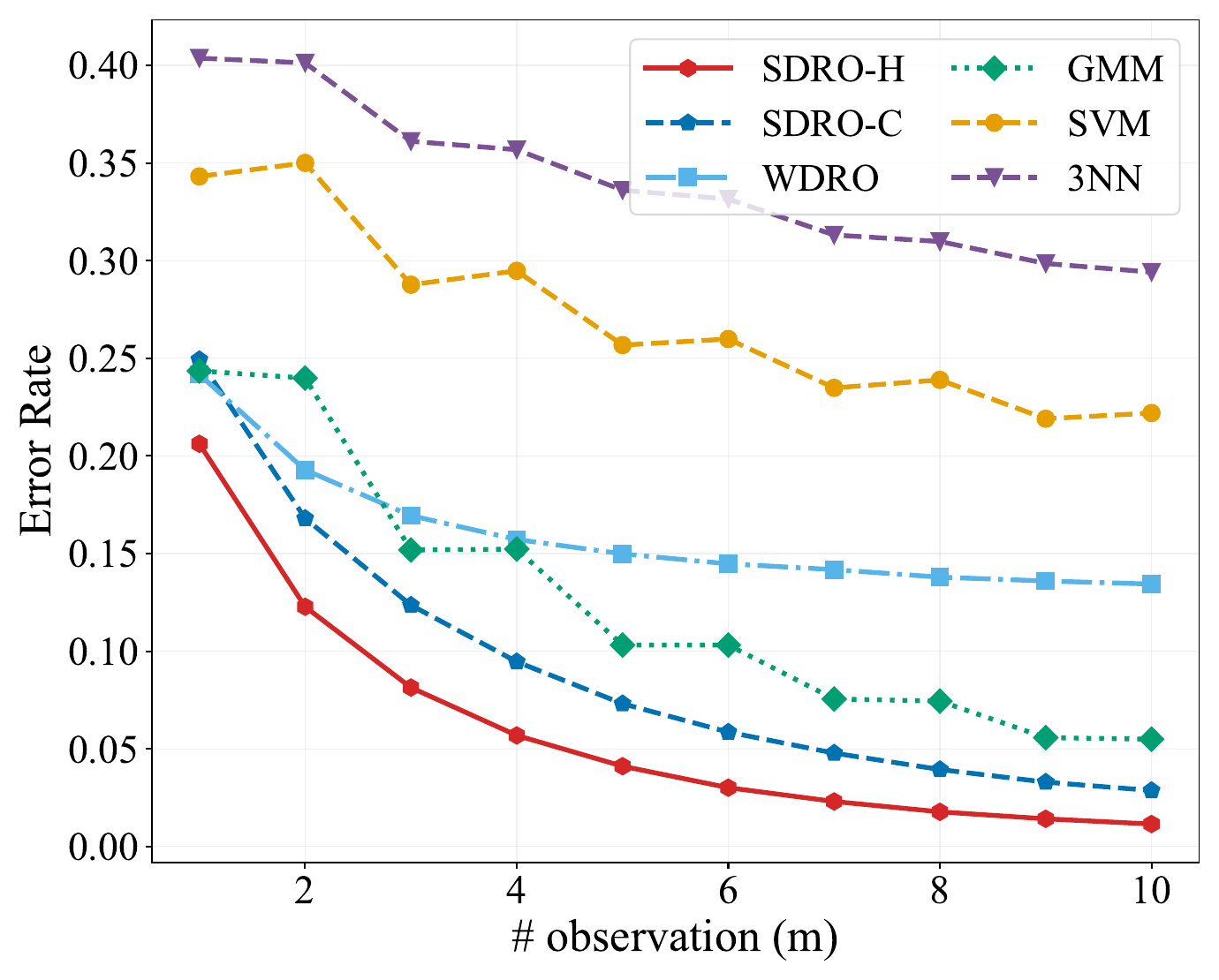}
    \caption{$d=100$}
  \end{subfigure}
  \caption{Error rate comparison on GMM dataset, averaged over 100 trials. \textbf{SDRO-H}: SDRHT method with HyCNN; \textbf{SDRO-C}: SDRHT method with standard ICNN. }
  \label{fig:Gaussian-error-rate}
\end{figure}

In this subsection, we consider a synthetic Gaussian mixture dataset based on~\cite{xie2021robust}. 
Let the samples under the two hypotheses be generated from distributions $0.5\mathcal{N}(0.4\boldsymbol{e}, I_{d}) + 0.5\mathcal{N}(-0.4\boldsymbol{e}, I_{d})$ and $0.5\mathcal{N}(0.4\boldsymbol{f}, I_{d}) + 0.5\mathcal{N}(-0.4\boldsymbol{f}, I_{d})$, respectively. Here $\boldsymbol{e} \in \mathbb{R}^{d}$ is a vector with all entries equal to 1, and $\boldsymbol{f} \in \mathbb{R}^{d}$ is a vector with the first $\left \lceil d/2 \right \rceil $ entries equal to 1 and the remaining entries equal to $-1$. Consider a setting with a small number of training samples $n_0 = n_1 = 10$, and then test on 1000 new samples from each mixture model. 
We compare our method against Wasserstein DRO-based hypothesis testing (WDRO)~\cite{xie2021robust}, Gaussian Mixture Model (GMM), kernel support vector machine (SVM) with radial basis function (RBF) kernel, and a three-layer neural network (3NN) with hidden layer dimensions $(32, 32, 32)$. 
The details of the WDRO method are provided in Appendix~\ref{app-sec-wdro}. 
All hyperparameters, including $\lambda$ and $\epsilon$ in SDRO-H (SDRHT with HyCNN) and SDRO-C (SDRHT with standard ICNN), ambiguity set radii, and kernel bandwidth in WDRO, are determined by cross-validation. 

\begin{table}[!htb]
  \centering
  \caption{Computation time (s) of methods on Gaussian mixture dataset, averaged over 100 trials.}
    \begin{tabular}{rrrrrrr}
    \toprule
    \multicolumn{1}{c}{\multirow{2}[4]{*}{$d$}} & \multicolumn{6}{c}{\textbf{Computation Time (s)}} \\
\cmidrule{2-7}          & \multicolumn{1}{c}{\textbf{SDRO-H}} & \multicolumn{1}{c}{\textbf{SDRO-C}} & \multicolumn{1}{c}{\textbf{WDRO}} & \multicolumn{1}{c}{\textbf{GMM}} & \multicolumn{1}{c}{\textbf{SVM}} & \multicolumn{1}{c}{\textbf{3NN}} \\
    \midrule
    4     & 1.906  & 1.749  & 0.283  & 0.075  & $<$0.000  & 0.052  \\
    10    & 1.507  & 1.802  & 0.275  & 0.002  & 0.001  & 0.040  \\
    30    & 1.292  & 1.900  & 0.271  & 0.002  & $<$0.000  & 0.034  \\
    50    & 1.753  & 1.897  & 0.285  & 0.002  & 0.001  & 0.033  \\
    70    & 1.614  & 1.907  & 0.290  & 0.002  & $<$0.000  & 0.026  \\
    100   & 1.725  & 1.844  & 0.327  & 0.002  & $<$0.000  & 0.028  \\
    \bottomrule
    \end{tabular}%
  \label{tab:Gaussian-com-time}%
\end{table}%

Figure~\ref{fig:Gaussian-error-rate} compares the classification error rate of selected methods over varying dimensions $d \in \{4, 10, 30, 50, 70, 100\}$. 
The horizontal axis in Figure~\ref{fig:Gaussian-error-rate} represents that there are $m$ independent observations in a testing sample, and the decision is made by majority rule~\citep{xie2021robust}.  
As illustrated, HyCNN is more suitable for approximating convex potentials in SDRO than standard ICNN (though the performance of SDRO-C remains competitive in Figures~\ref{fig:Gaussian-error-rate}b-\ref{fig:Gaussian-error-rate}f), and the SDRO-H method achieves the lowest classification error rate in almost all dimensions.  
These results demonstrate the superior performance of our method on hypothesis-testing tasks with small sample sizes compared to selected baselines. 
Table~\ref{tab:Gaussian-com-time} shows the average computation time of all methods. 
Although the training time of SDRO is higher than that of other baselines, it remains under 2s and remains stable as $d$ increases. 

Figure~\ref{fig:Gaussian_CV} reports the accuracy of SDRO-H and WDRO across different hyperparameter combinations when $n_0=n_1=10$ and $d=100$. As illustrated, the accuracy under the best parameter combination of WDRO is inferior to that of SDRO-H. 
The result of SDRO-H indicates that one may improve accuracy by moderately increasing the penalty parameter $\lambda$ and decreasing the entropic regularization parameter $\epsilon$. 
This is because a small $\lambda$ leads to excessive conservatism, and a large $\epsilon$ induces stronger smoothing and may dilute the correlation between samples and classification labels. 
The result of WDRO demonstrates that under an appropriate bandwidth selection, the classification accuracy is not sensitive to $\lambda$. 

\begin{figure}[!htb]
  \centering

  \begin{subfigure}{0.49\linewidth}
    \centering
    \includegraphics[width=\linewidth,height=\textheight,keepaspectratio]{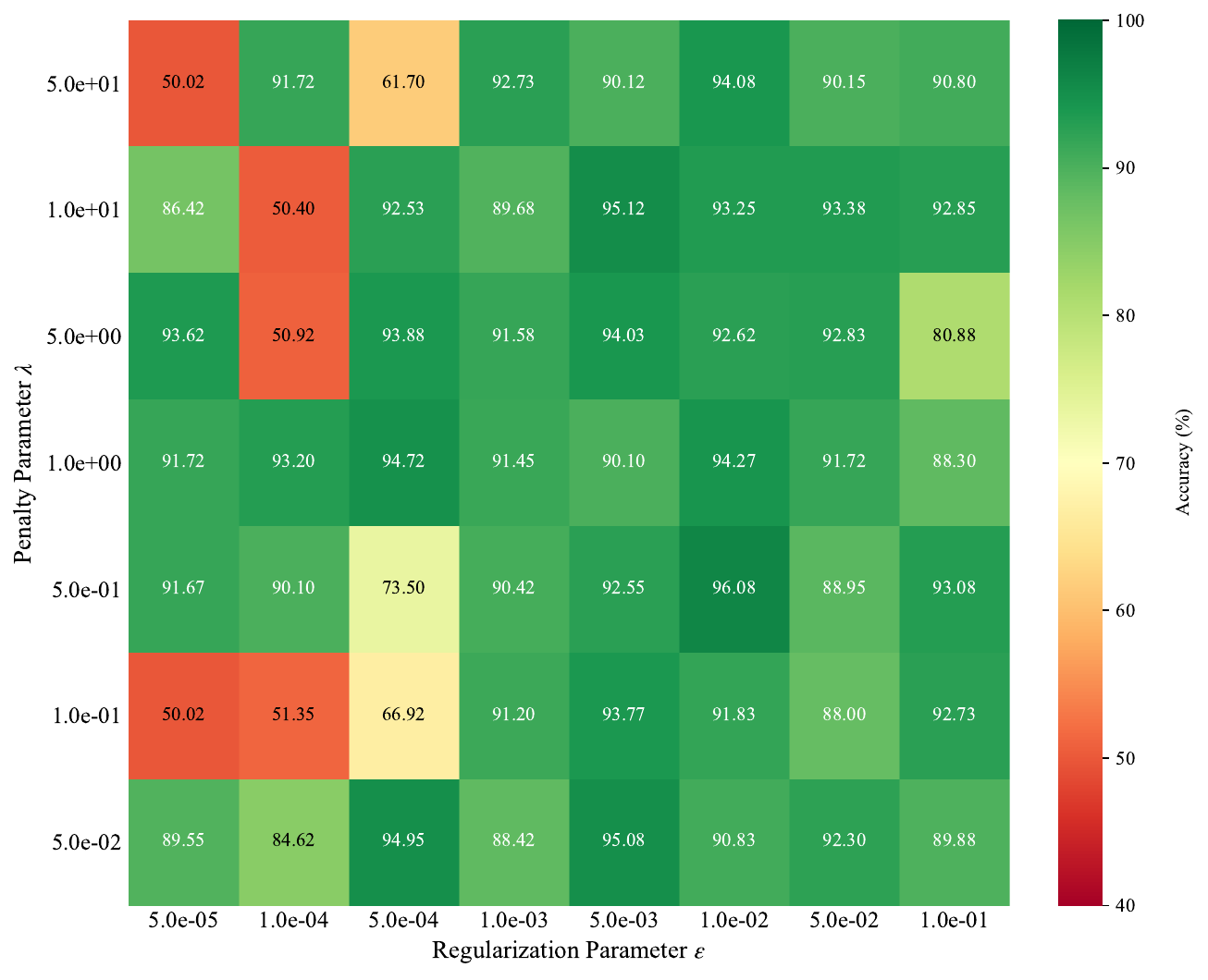}
    \caption{SDRO-H}
  \end{subfigure}
  \begin{subfigure}{0.49\linewidth}
    \centering
    \includegraphics[width=\linewidth,height=\textheight,keepaspectratio]{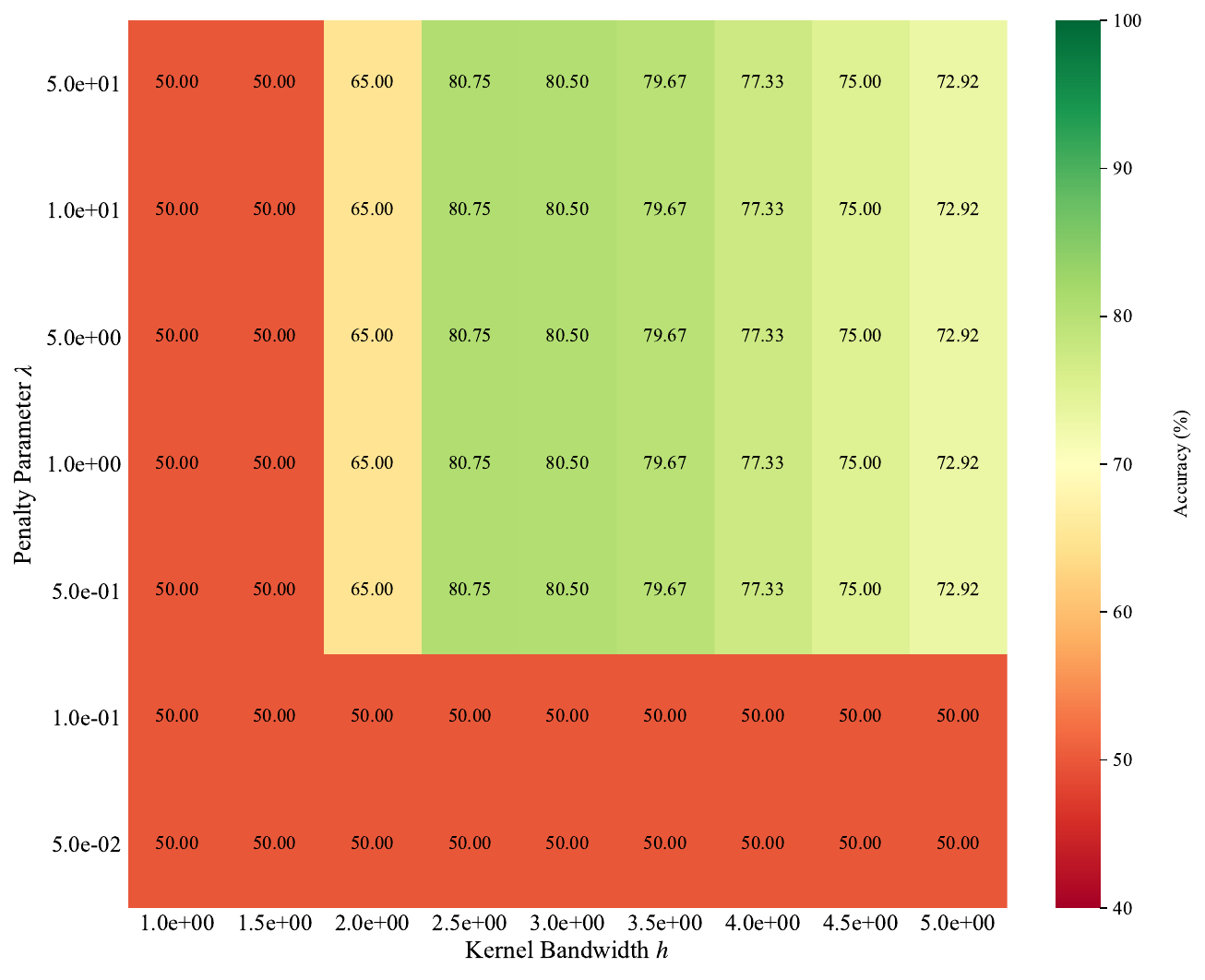}
    \caption{WDRO}
  \end{subfigure} 
  \caption{Average accuracy (\%) of SDRO-H and WDRO on the GMM dataset with different hyperparameters ($n_0=n_1=10, d=100$). }
  \label{fig:Gaussian_CV}
\end{figure}

\begin{table}[!htb]
  \centering
  \caption{Robustness comparison on the GMM dataset after PGD attack, averaged over $m\in [10]$ and 5 trials (accuracy \%, $n_0=n_1=10, d=50$). Bold represents the highest accuracy in each column. }
    \begin{tabular}{rrrrrrrrr}
    \toprule
    \multirow{2}[4]{*}{\textbf{Method}} & \multicolumn{4}{c}{\textbf{PGD-$L_2$ Attack}} & \multicolumn{4}{c}{\textbf{PGD-$L_\infty$ Attack}} \\
\cmidrule(lr){2-5} \cmidrule(lr){6-9}          & \multicolumn{1}{c}{\textbf{$\Delta$ = 0.025}} & \multicolumn{1}{c}{\textbf{$\Delta$ = 0.05}} & \multicolumn{1}{c}{\textbf{$\Delta$ = 0.075}} & \multicolumn{1}{c}{\textbf{$\Delta$ = 0.1}} & \multicolumn{1}{c}{\textbf{$\Delta$ = 0.05}} & \multicolumn{1}{c}{\textbf{$\Delta$ = 0.1}} & \multicolumn{1}{c}{\textbf{$\Delta$ = 0.15}} & \multicolumn{1}{c}{\textbf{$\Delta$ = 0.2}} \\
    \midrule
    \textbf{SDRO-H} & \textbf{85.801} & \textbf{85.769} & \textbf{85.736} & \textbf{85.737} & \textbf{85.467} & \textbf{85.165} & \textbf{84.906} & \textbf{84.811} \\
    \textbf{SDRO-C} & 82.333 & 82.347 & 82.332 & 82.309 & 82.175 & 82.018 & 81.867 & 81.774 \\
    \textbf{WDRO} & 69.693 & 68.902 & 68.108 & 67.924 & 62.440 & 54.711 & 48.096 & 44.256 \\
    \textbf{GMM} & 73.401 & 72.392 & 71.367 & 71.121 & 63.100  & 51.579 & 41.804 & 36.414 \\
    \textbf{SVM} & 64.477 & 63.628 & 62.774 & 62.561 & 55.957 & 47.171 & 40.166 & 36.166 \\
    \textbf{3NN} & 58.187 & 57.021 & 55.906 & 55.638 & 47.635 & 37.184 & 29.483 & 25.509 \\
    \bottomrule
    \end{tabular}%
  \label{tab:attack_Gaussian}%
\end{table}%

To evaluate the adversarial robustness of our model on adversarial examples, we apply adversarial perturbations to the test data using the projected gradient descent (PGD) attack~\cite{madry2017towards, xu2024flow}. 
Specifically, for a testing sample $\boldsymbol{\omega} \sim \mathbb{P}_k$, the attacked sample for fixed detector $\phi_{\text{opt}}$, defined by a point-wise perturbation problem, is given by 
\begin{equation} \label{Eq-attack}
    \boldsymbol{\omega}_{\text{attacked}} := \boldsymbol{\omega} + \boldsymbol{\delta}_{\boldsymbol{\omega}}^{*}, \quad \boldsymbol{\delta}_{\boldsymbol{\omega}}^{*} := \arg \max_{\| \boldsymbol{\delta}_{\boldsymbol{\omega}} \|_p \le \Delta} \quad L_k(\phi_{\text{opt}}, \, \boldsymbol{\omega} + \boldsymbol{\delta}_{\boldsymbol{\omega}}). 
\end{equation}
In the following, we assess robustness using $L_2$ and $L_{\infty}$ norm constraints, i.e., $p=2$ and $p\to \infty$ in Eq.~\eqref{Eq-attack}. 
In Table~\ref{tab:attack_Gaussian}, we compare the classification accuracy of all methods across different levels of $\Delta$. 
As shown in Table~\ref{tab:attack_Gaussian}, the accuracies of the proposed methods remain higher and more stable than those of the baselines as the attack intensity $\Delta$ increases under both $L_2$ and $L_\infty$ constraints, and SDRO-H consistently yields the best test accuracy across all values of $\Delta$. 

\subsection{MNIST Handwritten Digits Classification}\label{Sec-results-MNIST}

This subsection considers the MNIST handwritten digits dataset ($d$=784). 
In each trial, we randomly select 5 digits as class 0, and the remaining as class 1. 
We select $n^{\prime}$ samples for each digit of each class to form the two training sets, and therefore $n_0 = n_1 = 5n^{\prime}$. 
The testing data includes 1000 images from both classes. 
For this dataset, we further include logistic regression (LR) and Flow-based DRO (FDRO)~\cite{xu2024flow} as baselines. 
The FDRO method, which characterizes least-favorable distributions from a generative perspective, is suitable for high-dimensional robust hypothesis testing and requires training convolutional autoencoders. 
In the following, the label `SDRO' represents our SDRHT method with HyCNN. 
All hyperparameters are determined by cross-validation. 

\begin{figure}[!htb]
  \centering

  \begin{subfigure}{0.32\linewidth}
    \centering
    \includegraphics[width=\linewidth,height=0.25\textheight,keepaspectratio]{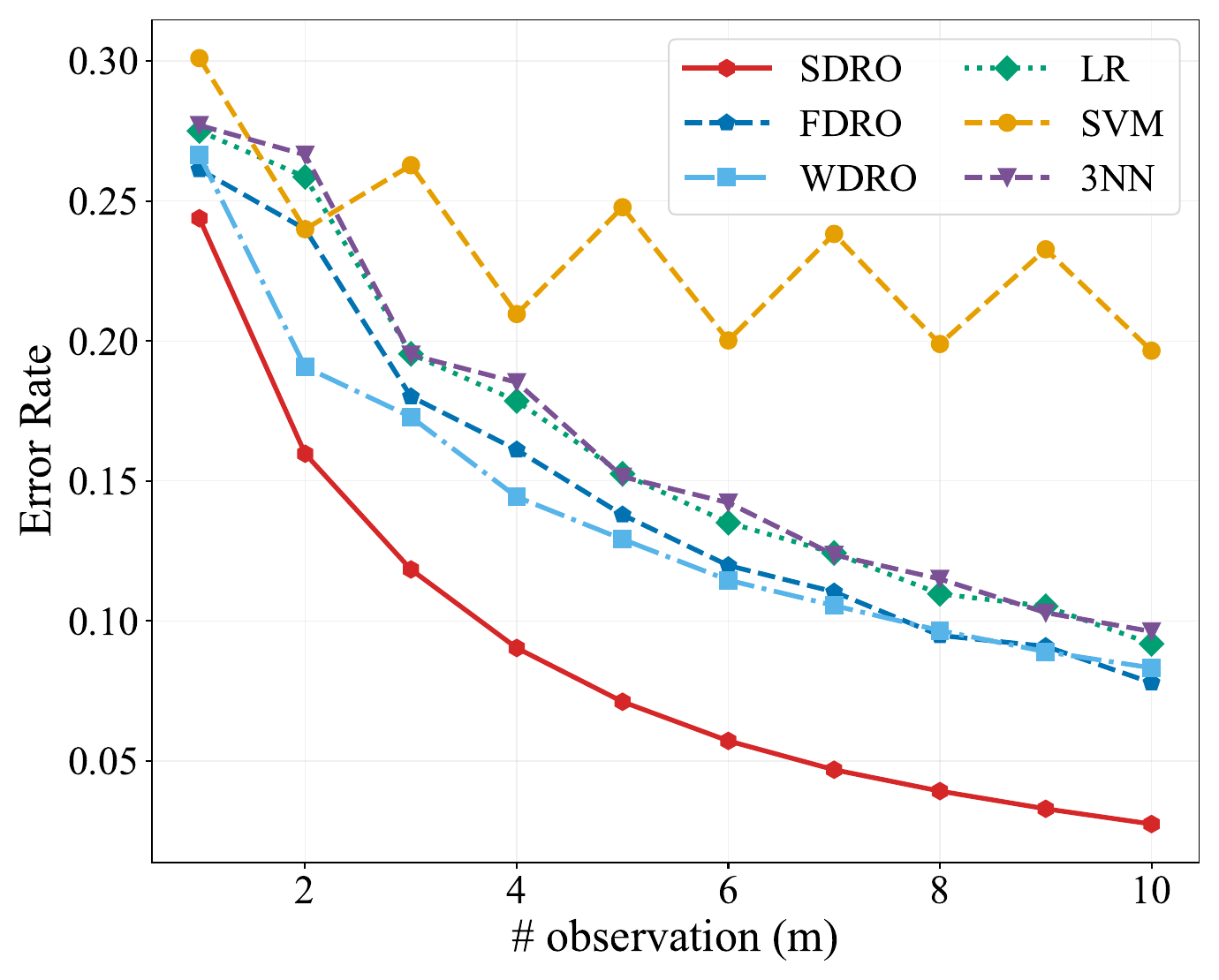}
    \caption{$n^{\prime}=2, n_0 = n_1=10$}
  \end{subfigure}
  \hfill
  \begin{subfigure}{0.32\linewidth}
    \centering
    \includegraphics[width=\linewidth,height=0.25\textheight,keepaspectratio]{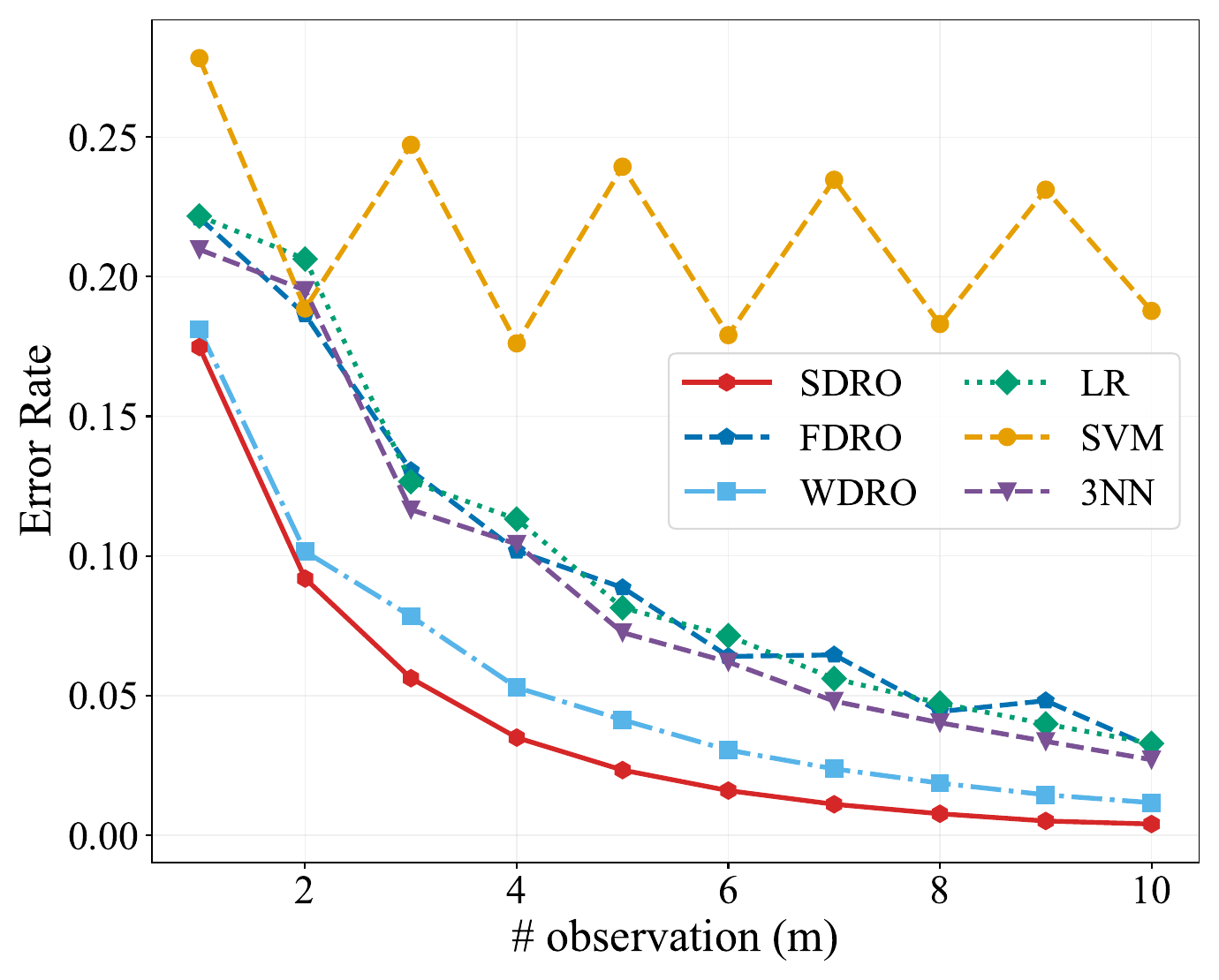}
    \caption{$n^{\prime}=5, n_0 = n_1=25$}
  \end{subfigure}
  \hfill
  \begin{subfigure}{0.32\linewidth}
    \centering
    \includegraphics[width=\linewidth,height=0.25\textheight,keepaspectratio]{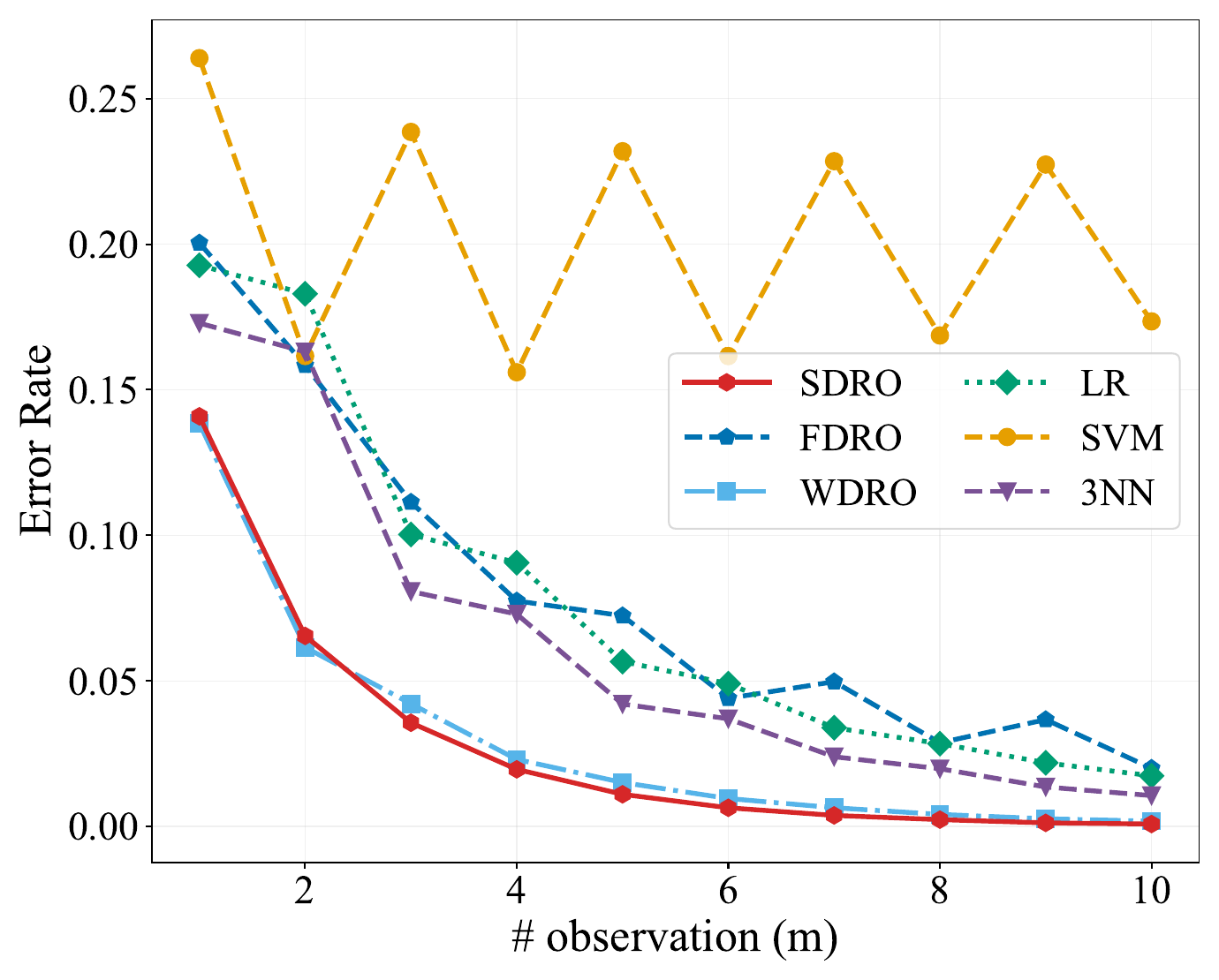}
    \caption{$n^{\prime}=10, n_0 = n_1=50$}
  \end{subfigure}
  \caption{Error rate comparison on the MNIST dataset, averaged over 100 trials. }
  \label{fig:MNIST-error-rate}
\end{figure}

Figure~\ref{fig:MNIST-error-rate} illustrates that the proposed method achieves a lower error rate than existing well-tuned DRO methods and common classifiers across different training set sizes. 
Table~\ref{tab:MNIST-com-time} shows the average computation time of all methods, illustrating that the training time of SDRO is acceptable and stable as $n^{\prime}$ increases. 
To evaluate robustness, we compare all methods under both $L_2$ and $L_\infty$ PGD attacks (see Eq.~\eqref{Eq-attack}) in Table~\ref{tab:attack_MNIST}.  
As shown, SDRO achieves the highest accuracy across all non-zero attack intensities under both norms, indicating the superior robustness of our method. 
Under the PGD-$L_\infty$ attack, the three DRO-based models demonstrate better robustness than other baselines. 

\begin{table}[!htb]
  \centering
  \caption{Computation time (s) of methods on the MNIST dataset, averaged over 100 trials. }
    \begin{tabular}{rrrrrrr}
    \toprule
    \multicolumn{1}{c}{\multirow{2}[0]{*}{$\boldsymbol{n}^{\prime}$}} & \multicolumn{6}{c}{\textbf{Computation Time (s)}} \\
     \cmidrule{2-7}
     & \multicolumn{1}{c}{\textbf{SDRO}} &  \multicolumn{1}{c}{\textbf{FDRO}} & \multicolumn{1}{c}{\textbf{WDRO}} & \multicolumn{1}{c}{\textbf{LR}} & \multicolumn{1}{c}{\textbf{SVM}} & \multicolumn{1}{c}{\textbf{3NN}} \\
    \midrule 
    2     & 1.943  & 4.821  & 0.022  & 0.008  & $<$0.001  & 0.167  \\
    5     & 1.477  & 8.999  & 0.045  & 0.011  & 0.001  & 0.231  \\
    10    & 1.465  & 17.153  & 0.135  & 0.018  & 0.001  & 0.269  \\
    \bottomrule
    \end{tabular}%
  \label{tab:MNIST-com-time}%
\end{table}%

\begin{table}[!htb]
  \centering
  \caption{Robustness comparison on MNIST dataset after PGD attack, averaged over $m\in [10]$ and 5 trials (accuracy \%, $n_0=n_1=25$).}
    \begin{tabular}{rrrrrrrrr}
    \toprule
    \multirow{2}[4]{*}{\textbf{Method}} & \multicolumn{4}{c}{\textbf{PGD-$L_2$ Attack}} & \multicolumn{4}{c}{\textbf{PGD-$L_\infty$ Attack}} \\
\cmidrule(lr){2-5} \cmidrule(lr){6-9}         & \multicolumn{1}{c}{\textbf{$\Delta$ = 0.025}} & \multicolumn{1}{c}{\textbf{$\Delta$ = 0.05}} & \multicolumn{1}{c}{\textbf{$\Delta$ = 0.075}} & \multicolumn{1}{c}{\textbf{$\Delta$ = 0.1}} & \multicolumn{1}{c}{\textbf{$\Delta$ = 0.05}} & \multicolumn{1}{c}{\textbf{$\Delta$ = 0.1}} & \multicolumn{1}{c}{\textbf{$\Delta$ = 0.15}} & \multicolumn{1}{c}{\textbf{$\Delta$ = 0.2}} \\
    \midrule
    \textbf{SDRO} & \textbf{95.75 } & \textbf{95.70 } & \textbf{95.67 } & \textbf{95.67 } & \textbf{93.53 } & \textbf{90.60 } & \textbf{87.46 } & \textbf{85.47 } \\
    \textbf{FDRO} & 91.04  & 90.93  & 90.87  & 90.87  & 86.51  & 80.92  & 75.28  & 71.91  \\
    \textbf{WDRO} & 94.54  & 94.47  & 94.42  & 94.42  & 91.18  & 85.63  & 79.42  & 75.56  \\
    \textbf{LR} & 91.15  & 90.98  & 90.89  & 90.89  & 84.79  & 75.80  & 67.27  & 62.34  \\
    \textbf{SVM} & 76.67  & 76.50  & 76.44  & 76.44  & 72.65  & 69.02  & 66.21  & 64.73  \\
    \textbf{3NN} & 91.39  & 91.21  & 91.11  & 91.11  & 82.10  & 68.34  & 55.00  & 48.09  \\
    \bottomrule
    \end{tabular}%
  \label{tab:attack_MNIST}%
\end{table}%

We next visualize how the trained convex functions induce adversarial distributional attacks on their opposite classes. 
In each trial, we select two digits that are easily confused in handwritten form and place them in separate classes. 
To generate a more intuitive and clear trajectory, in the training phase, we first train a variational autoencoder, including two parts, $\mathsf{Encoder}$ and $\mathsf{Decoder}$, to embed all images into a 32-dimensional latent space. 
Then, the convex potentials are trained on this latent space with a small training sample size. 
In the generation phase, we randomly select an image $\boldsymbol{z}_k$ from each class $k \in \mathbb{K}$, encode it into the latent space, denoted by $\widetilde{\boldsymbol{z}}_k:= \mathsf{Encoder}(\boldsymbol{z}_k)$, and then map $\widetilde{\boldsymbol{z}}_k$ by the gradient of the convex potential of the other class, i.e., $\nabla \widetilde{\varphi}_{|1-k|}(\widetilde{\boldsymbol{z}}_k)$, and finally draw the attacked image $\mathsf{Decoder} \Big( \nabla \widetilde{\varphi}_{|1-k|}(\widetilde{\boldsymbol{z}}_k) \Big)$. 
Figure~\ref{fig:MNIST-attack} illustrates the trajectory of injecting adversarial attacks on digits in both classes in 7 trials. 
As illustrated, our adversarial attack naturally achieves conversions of different classes of numbers through convex gradient mapping in a few steps.  

\begin{figure}[!htb]
  \centering
  \begin{subfigure}{0.48\linewidth}
    \centering
    \includegraphics[width=\linewidth,height=\textheight,keepaspectratio]{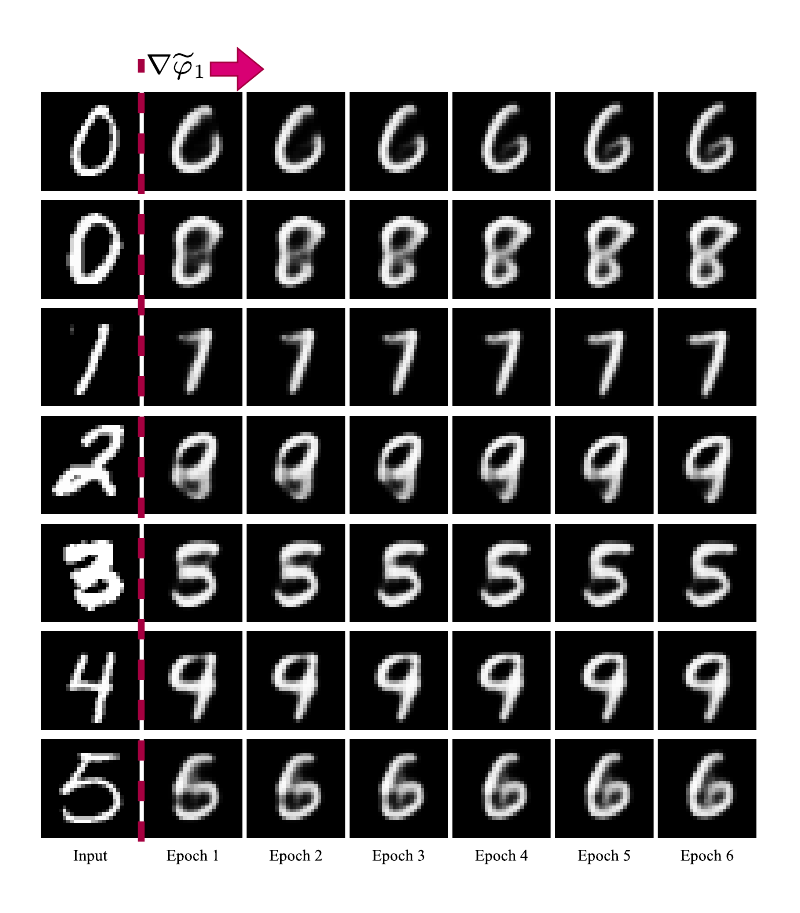}
    \caption{Adversarial attack for Class 0 by $\nabla \widetilde{\varphi}_{1}$}
  \end{subfigure}
  \begin{subfigure}{0.48\linewidth}
    \centering
    \includegraphics[width=\linewidth,height=\textheight,keepaspectratio]{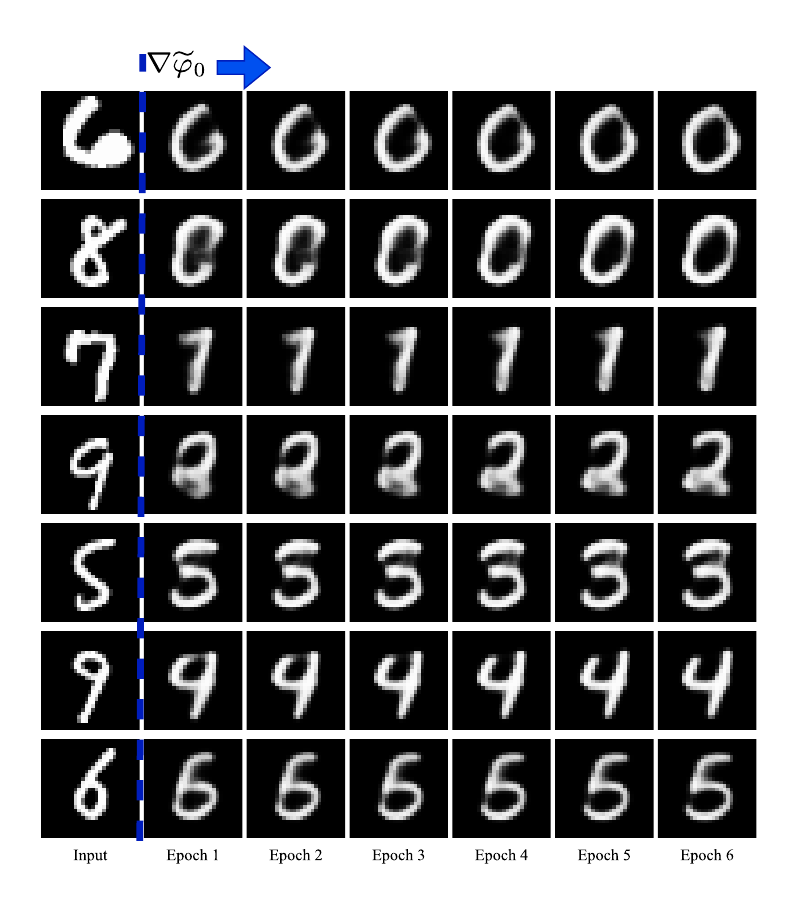}
    \caption{Adversarial attack for Class 1 by $\nabla \widetilde{\varphi}_{0}$}
  \end{subfigure} 
  \caption{Adversarial attacks on digits by trained convex gradients. The first column displays the original images, while the following six columns show the attacked images from Epochs 1 to 6. }
  \label{fig:MNIST-attack}
\end{figure}

\subsection{High-energy Physics Higgs Dataset} \label{Sec-results-Higgs}

In this subsection, we evaluate our method on the Higgs dataset, which distinguishes between a signal process where new theoretical Higgs bosons (HIGGS) are produced, and a background process with the identical decay products but distinct kinematics~\cite {baldi2014searching}. 
We consider 21 original features and large training sample sizes. 

Table~\ref{tab:higgs-accuracy} reports the accuracy and computation time of our method against other baselines over varying training sample sizes. 
As shown, the SDRO method achieves the highest accuracy in a relatively low computation time. 
Note that Table~\ref{tab:higgs-accuracy} does not include the result of WDRO, since when the sample size is large, WDRO did not produce an acceptable solution within the 1,800-second time limit. 
This is because WDRO requires solving a linear program with $\mathcal{O}(n_0 n_1)$ decision variables (see Appendix~\ref{app-sec-wdro}) and becomes computationally prohibitive for large sample sizes. 

\begin{table}[!htb]
  \centering
  \caption{Accuracy ($\%$) and computation time (s) of methods on the Higgs dataset, averaged over observations $m \in [10]$ and 10 trials. Bold represents the highest accuracy in each row.}
    \begin{tabular}{c|rrrrr|rrrrr}
    \toprule
    $n_0=n_1$ & \multicolumn{5}{c|}{\textbf{Accuracy (\%)}} & \multicolumn{5}{c}{\textbf{Computation Time (s)}} \\
    \cmidrule(lr){2-6} \cmidrule(lr){7-11}   
    ($\times 10^3$)  & \multicolumn{1}{c}{\textbf{SDRO}} & \multicolumn{1}{c}{\textbf{FDRO}} & \multicolumn{1}{c}{\textbf{LR}} & \multicolumn{1}{c}{\textbf{SVM}} & \multicolumn{1}{c|}{\textbf{3NN}} & \multicolumn{1}{c}{\textbf{SDRO}} & \multicolumn{1}{c}{\textbf{FDRO}} & \multicolumn{1}{c}{\textbf{LR}} & \multicolumn{1}{c}{\textbf{SVM}} & \multicolumn{1}{c}{\textbf{3NN}} \\
    \midrule
    \textbf{1} & \textbf{68.09 } & 62.07  & 59.73  & 64.13  & 57.92  & 1.11  & 2.09  & 0.02  & 0.18  & 5.09  \\
    \textbf{2} & \textbf{71.32 } & 65.56  & 60.45  & 66.67  & 58.70  & 1.06  & 2.06  & 0.01  & 0.58  & 11.06  \\
    \textbf{3} & \textbf{72.83 } & 67.20  & 60.68  & 67.87  & 58.97  & 1.13  & 2.69  & 0.01  & 0.74  & 9.86  \\
    \textbf{4} & \textbf{73.88 } & 68.03  & 60.95  & 68.88  & 60.17  & 1.64  & 3.65  & 0.02  & 1.51  & 11.21  \\
    \textbf{5} & \textbf{74.62 } & 68.77  & 61.01  & 69.56  & 61.00  & 1.79  & 3.56  & 0.02  & 2.20  & 11.16  \\
    \textbf{6} & \textbf{75.37 } & 69.15  & 61.07  & 70.27  & 61.54  & 1.87  & 3.30  & 0.02  & 3.01  & 13.74  \\
    \textbf{7} & \textbf{75.71 } & 69.21  & 61.09  & 70.67  & 62.86  & 2.37  & 3.35  & 0.03  & 4.46  & 15.90  \\
    \textbf{8} & \textbf{76.25 } & 69.60  & 61.22  & 71.13  & 63.66  & 2.47  & 3.30  & 0.03  & 5.66  & 16.71  \\
    \textbf{9} & \textbf{76.26 } & 69.71  & 61.18  & 71.14  & 63.85  & 2.54  & 3.73  & 0.02  & 7.33  & 18.53  \\
    \textbf{10} & \textbf{76.66 } & 69.91  & 61.29  & 71.59  & 64.89  & 2.96  & 3.52 & 0.03  & 8.65  & 21.41  \\
    \bottomrule
    \end{tabular} 
    \begin{minipage}{0.95\linewidth}
        \small Note: The WDRO method did not produce an acceptable solution within the 1,800-second time limit. 
    \end{minipage}
  \label{tab:higgs-accuracy}%
\end{table}%

In Table~\ref{tab:higgs-robustness}, we show that SDRO remains the highest accuracy over 70\% across all non-zero attack intensities under both norms. 
This result demonstrates that our method can still maintain its accuracy even under significant adversarial perturbations compared to other baselines.

\begin{table}[!htb]
  \centering
  \caption{Robustness comparison on Higgs dataset after PGD attack, averaged over $m\in [10]$ and 5 trials (accuracy \%, $n_0=n_1=2,000$).}
    \begin{tabular}{rrrrrrrrr}
    \toprule
    \multirow{2}[4]{*}{\textbf{Method}} & \multicolumn{4}{c}{\textbf{\textbf{PGD-$L_2$ Attack}}} & \multicolumn{4}{c}{\textbf{PGD-$L_\infty$ Attack}} \\
    \cmidrule(lr){2-5} \cmidrule(lr){6-9} 
    & \multicolumn{1}{c}{\textbf{$\Delta$ = 0.025}} & \multicolumn{1}{c}{\textbf{$\Delta$ = 0.05}} & \multicolumn{1}{c}{\textbf{$\Delta$ = 0.075}} & \multicolumn{1}{c}{\textbf{$\Delta$ = 0.1}} & \multicolumn{1}{c}{\textbf{$\Delta$ = 0.025}} & \multicolumn{1}{c}{\textbf{$\Delta$ = 0.05}} & \multicolumn{1}{c}{\textbf{$\Delta$ = 0.075}} & \multicolumn{1}{c}{\textbf{$\Delta$ = 0.1}} \\
    \midrule
    \textbf{SDRO} & \textbf{72.342}  &\textbf{71.998}  & \textbf{71.887}  & \textbf{71.891}  & \textbf{73.352}  & \textbf{72.758}  & \textbf{72.436}  & \textbf{72.284}  \\
    \textbf{FDRO} & 66.999  & 64.888  & 62.736  & 60.835  & 62.780  & 55.396  & 48.914  & 44.461  \\
    \textbf{LR} & 62.945  & 60.913  & 59.193  & 57.582  & 59.708  & 54.046  & 48.528  & 44.412  \\
    \textbf{SVM} & 67.669  & 65.849  & 64.191  & 62.716  & 64.524  & 58.350  & 53.250  & 49.614  \\
    \textbf{3NN} & 58.204  & 54.669  & 50.782  & 48.015  & 50.264  & 38.658  & 30.418  & 26.266  \\
    \bottomrule
    \end{tabular}%
  \label{tab:higgs-robustness}%
\end{table}%

\subsection{Human Face Generation} \label{Sec-results-FFHQ}

In this subsection, similar to~\cite{xu2025gradient}, we present a qualitative experiment on the human face dataset FFHQ~\cite{karras2019style} that illustrates how our method generates adversarial-attacked high-dimensional images. 

We take the gender attribute of the FFHQ dataset as the binary label in the following image generation experiment. 
We labeled Class 0 as \textit{Female} and Class 1 as \textit{Male}. 
In the training phase, we first map the original $1024 \times 1024$ images to a 512-dimensional latent space by a pre-trained adversarial latent autoencoder~\cite{pidhorskyi2020adversarial}. 
Then, we train two HyCNNs by Algorithm~\ref{algorithm-HyperCNN} on the latent space. 
In the generation phase, we randomly select three images in each class. These samples are first encoded into a latent space, then mapped by our trained convex gradient, and finally recovered in the original space by the pre-trained decoder. 

For all experiments, we set $n_0 = n_1 = 500, \lambda = 1, \epsilon = 0.1, T = 30$, and each HyCNN has a single hidden layer with 128 units. 
Figure~\ref{fig:FFHQ-gender} shows the trajectories of generating the attacked images of all selected samples in the gender classification task. 
In each subfigure, the first column shows the original image, and the other 12 columns represent the images generated using the convex gradient maps trained for 30 epochs, respectively.
As illustrated, the samples we generated underwent meaningful modifications to the original images, such that the generated images gradually acquired visual characteristics associated with the opposite dataset label after a small number of training epochs. 

\begin{figure}[!htb]
  \centering
  \begin{subfigure}{\linewidth}
    \centering
    \includegraphics[width=\linewidth,height=\textheight,keepaspectratio]{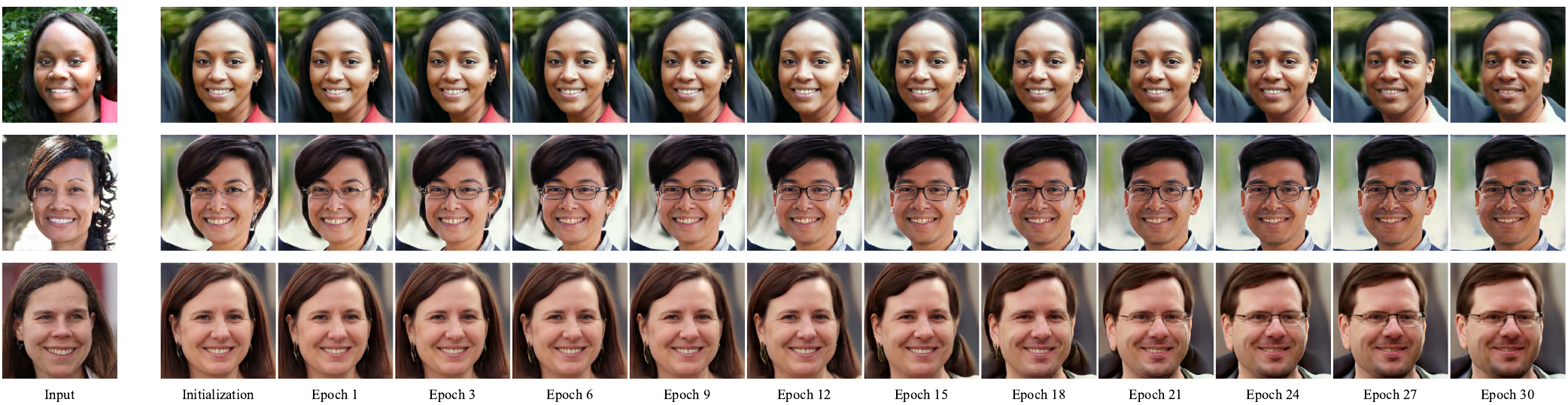}
    \caption{\textit{Female} to \textit{Male}}
  \end{subfigure}
  \begin{subfigure}{\linewidth}
    \centering
    \includegraphics[width=\linewidth,height=\textheight,keepaspectratio]{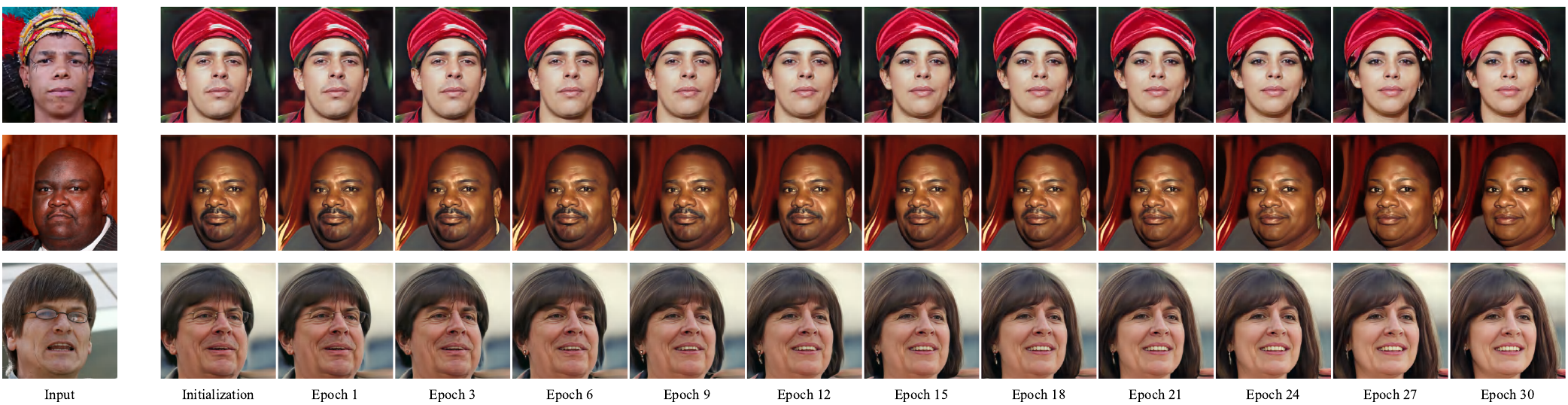}
    \caption{\textit{Male} to \textit{Female}}
  \end{subfigure} 
  \caption{The trajectory of image generation between \textit{Female} (Class 0) and \textit{Male} (Class 1). The first column displays the original images, while the following columns show the images generated during 30 epochs. }
  \label{fig:FFHQ-gender}
\end{figure}

\section{Conclusions and Discussions}\label{sec-conclusion}

As a fundamental method in statistics, robust hypothesis testing addresses noisy data, distributional shifts, and measurement errors in real-world, uncertain environments. 
This paper develops a generative framework for Sinkhorn distributionally robust hypothesis testing (SDRHT), overcoming the non-continuity of the least-favorable distributions of WDRO-based testing and the computational challenges of existing SDRHT methods. 
By exploiting a conditional-KL representation of Sinkhorn ambiguity sets and strong duality, this framework reformulates the infinite-dimensional problem as an optimization over parametrized convex potentials. 
We further establish approximation guarantees for the potentials and their gradients, as well as the distributional universality of the induced transport maps. 
Numerical results demonstrate the superior performance and robustness of our method compared to existing methods on tasks with different sample sizes and dimensions. 

There are several topics worth investigating for future work. 
First, it is interesting to extend our method to multi-class hypothesis testing, exploring its closed-form optimal detector and efficient algorithms. 
Then, since the transport map between distributions is the gradient of a convex potential, directly parameterizing the gradient instead of the potential may yield a better approximation of the transport map. 
Finally, for distributions with extremely high dimensions, e.g., high-resolution images, it is important to develop methods that efficiently solve transport maps while maintaining the convexity of potentials.



\bibliographystyle{IEEEtran}
\bibliography{references}

\newpage

\appendix

\newpage 

\section*{Appendix}

\section{Summary of Notations}

For integer $K \in \mathbb{Z}_+$, define $\left [ K \right ] := \left \{ 1,\cdots,K \right \}$, which is the set of positive running indices to $K$. 
The list of important sets, spaces, and notations used in this paper is shown in Table~\ref{T1-notations}. 
The list of important functions is shown in Table~\ref{T2-notations}. 

\begin{table}[htbp]
	\centering
	\caption {Notations List}
	\label{T1-notations}
  \renewcommand{\arraystretch}{1.0} 
	\begin{tabular}[c]{llc}
            \toprule
               \textbf{Spaces}& \textbf{Description} \\
                $\Omega$ & Closed sample space, $\Omega \subseteq \mathbb{R}^d$. \\
                $\mathcal{F}$ & 
                Functional space, which is not necessarily compact. \\ 
            \midrule
            \multicolumn{2}{l}{\textbf{Basic Sets}} \\  
                $\mathbb{K}$ & Indices set of hypotheses, $\mathbb{K} = \{0,1 \}$. \\
                $\mathscr{P}\left(\Omega\right)$ & The set of all probability distributions on $\Omega$. \\
                $\mathcal{M}\left(\Omega\right)$ & The set of measures on $\Omega$. \\ 
                $\Gamma(\mathbb{P}, \mathbb{Q})$ & Joint distribution set of marginal distribution $\mathbb{P}$ and $ \mathbb{Q}$. \\ 
                $\mathcal{P}_k $ &  The distribution set under hypotheses $H_k$, $\mathcal{P}_k \subset \mathscr{P}\left(\Omega\right)$ for each $k \in \mathbb{K}$. \\
                $\mathcal{F}_{\mathrm{CVX}}$ & Set of convex functions $\varphi: \mathbb{R}^d \to \mathbb{R} \cup\{+\infty\}$. \\
                \midrule
               \multicolumn{2}{l}{\textbf{Parameters}} \\
               $\Reg$ & Entropic regularization parameter in Sinkhorn discrepancy, $\Reg \ge 0$.  \\
               $\rho_k$ & Radius for Sinkhorn-based ambiguity set, for each $k \in \mathbb{K}$. \\ 
               $\lambda_k$ & Penalty parameter in soft-constraint problem~\eqref{Eq:soft:constrained:SDRO}, $\lambda_k \ge 0$ for each $k \in \mathbb{K}$. \\
                
                \midrule
               \multicolumn{2}{l}{\textbf{Distributions}} \\
                $\widehat{\mathbb{P}}_k$ & Empirical distribution with $n_k$ samples for each $k \in \mathbb{K}$. \\
                $\mathcal{K}_{\boldsymbol{x}_k^{(i)}, \epsilon}$ & Kernel extension for any sample point $i \in [n_k]$ and each $k \in \mathbb{K}$. \\
                $\mathbb{P}_k^{\epsilon}$ & Mixture of continuous distributions $\mathcal{K}_{\boldsymbol{x}_k^{(i)}, \epsilon}$ for all $i \in [n_k]$, see Eq.~\eqref{Eq-pk-epsilon}. \\ 
                $\widetilde{\mathbb{P}}_k$ & Discrete approximation of distribution $\mathbb{P}_k^{\epsilon}$ with $N$ samples, for each $k \in \mathbb{K}$. \\

                \midrule
               \multicolumn{2}{l}{\textbf{Discrepancies}} \\
                $\mathcal{S}_{\Reg}(\mathbb{P}, \mathbb{Q})$ & Sinkhorn discrepancy from distribution $\mathbb{P}$ to $ \mathbb{Q}$, see Definition~\ref{Def-Sinkhorn}. \\
                $\mathcal{D}_{\mathrm{KL}}(\mathbb{P}, \mathbb{Q})$ & Kullback–Leibler divergence (KL divergence) from distribution $\mathbb{P}$ to $ \mathbb{Q}$. \\
                
                \midrule
               \multicolumn{2}{l}{\textbf{Ambiguity Sets}} \\
                $\mathbb{B}_{\rho_k}(\widehat{\mathbb{P}}_k)$ & Sinkhorn discrepancy-based ambiguity set centered at $\widehat{\mathbb{P}}_k$ for each $k \in \mathbb{K}$, see Eq.~\eqref{Eq:hard:constrained:SDRO-ambiguity}. \\
                $\mathbb{B}_{\bar{\rho}_k}(\mathbb{P}_k^{\epsilon})$ & Equivalent KL-based ambiguity set of $\mathbb{B}_{\rho_k}(\widehat{\mathbb{P}}_k)$ centered at $\mathbb{P}_k^{\epsilon}$, see Proposition~\ref{prop-kl-connection}.  \\
            \bottomrule
        \end{tabular}
\end{table}

\begin{table}[htbp]
	\centering
	\caption {Functions List}
	\label{T2-notations}
  \renewcommand{\arraystretch}{1.0} 
	\begin{tabular}[c]{llc}
            \toprule
               \textbf{Functions}& \textbf{Description} \\
               \midrule
                $c: \Omega\times \Omega \to \mathbb{R}$ & Transport cost function satisfying Assumption~\ref{assumption-tcost}, we choose $c(\boldsymbol{x},\boldsymbol{z}) := \frac{1}{2}\|\boldsymbol{x} - \boldsymbol{z}\|_2^2$. \\
               
                $\ell: \mathbb{R} \to \mathbb{R}_+ \cup \{\infty\}$ & Generating function (see Definition~\ref{Def-generating}), we choose $\ell(t) = \exp(t)$. \\ 
                
                $\phi: \Omega \to \mathbb{R}$ & Detector for hypothesis testing, which is measurable and upper semicontinuous. \\

                $\phi_{\text{opt}}: \Omega \to \mathbb{R}$ & Optimal detector for generating function $\ell(t) = \exp(t)$, $\phi_{\text{opt}}  = \frac{1}{2}\log (p_{0}/p_{1} )$. \\

                $\varphi_1, \varphi_2: \mathbb{R}^d \to \mathbb{R}$ & Convex functions, decision variables in the problem~\eqref{Eq-SDRO-convex-functional}. \\

                $\varphi^{\ast}: \mathbb{R}^d \to \mathbb{R} \cup \{\infty\}$ & Convex conjugate of a convex function $\varphi : \mathbb{R}^d \to \mathbb{R} \cup \{\infty\}$. \\

                $\mathsf{HyCNN}_k: \mathbb{R}^d \to \mathbb{R} $ & Convex function formed by the HyCNN for each $k \in \mathbb{K}$. \\

                $\widetilde{\varphi}_k: \mathbb{R}^d \to \mathbb{R}$ & Strongly convex function, $\widetilde{\varphi}_k(\boldsymbol{z}) = \mathsf{HyCNN}_k(\boldsymbol{z}) + \frac{\alpha}{2} \| \boldsymbol{z}\|_2^2$ for each $k \in \mathbb{K}$ and $\alpha > 0$. \\ 

                $L_{k}: \Omega\to \mathbb{R}$ & Loss function of a given detector $\phi$ for sample $\boldsymbol{\omega}$, satisfying Assumption~\ref{assumption-2}. \\ 
                
                $\Phi: \mathscr{P}(\Omega)^2 \to \mathbb{R}$ & Convex surrogate risk of the detector $\phi$.  \\ 


                $g_{\epsilon, k}^{c}$ & Continuous extension for the optimal entropic potential $g_{\Reg, k, (n_k, N)}$ in Algorithm~\ref{algorithm-sinkhorn}. \\

                $\widehat{\mathcal{S}}_{\Reg, (n_k, N)} $ & Finite sample estimator for the Sinkhorn discrepancy. \\
            \bottomrule
        \end{tabular}
\end{table}

\section{Technical Analyses and Proofs} 

\subsection{Proof of Proposition~\ref{prop-kl-connection} } \label{app-sec-prop-kl-connection}

\begin{proof}[Proof of Proposition~\ref{prop-kl-connection}]

By the definition of Sinkhorn discrepancy, for distributions $\mathbb{P}, \mathbb{Q} \in \mathscr{P}(\Omega)$, it is equivalent to 
\begin{equation} \label{prop1-prove-sinkhorn}
    \begin{aligned}
        \mathcal{S}_{\Reg}(\mathbb{P}, \mathbb{Q})  = \inf_{\gamma\in\Gamma(\mathbb{P}, \mathbb{Q})} \quad \int_{\Omega \times \Omega} \Big(c(\boldsymbol{x},\boldsymbol{z}) + \Reg \cdot  \log\frac{\diff\gamma(\boldsymbol{x},\boldsymbol{z})}{\diff\mathbb{P}(\boldsymbol{x})\diff \nu(\boldsymbol{z})}\Big)\diff\gamma(\boldsymbol{x},\boldsymbol{z}) .
    \end{aligned}
\end{equation}
Taking the reference distribution as the empirical distribution $\widehat{\mathbb{P}} \in \mathscr{P}(\Omega)$, Eq.~\eqref{prop1-prove-sinkhorn} can be further reformulated to 
\begin{equation} \label{Eq-Sinkhorn-kl-inf}
    \begin{aligned}
        \mathcal{S}_{\Reg}(\widehat{\mathbb{P}}, \mathbb{Q}) & = \inf_{\gamma\in\Gamma(\widehat{\mathbb{P}}, \mathbb{Q})} \quad \mathbb{E}_{\boldsymbol{x} \sim \widehat{\mathbb{P}}} \Big [ \int_{\Omega} \Big( c(\boldsymbol{x},\boldsymbol{z}) + \Reg  \log\frac{\diff \gamma_{\boldsymbol{x}}(\boldsymbol{z})}{\diff \nu(\boldsymbol{z})} \Big)\diff \boldsymbol{z} \Big] \\
        & = \inf_{\gamma\in\Gamma(\widehat{\mathbb{P}}, \mathbb{Q})} \quad  \Reg \cdot \mathbb{E}_{\boldsymbol{x} \sim \widehat{\mathbb{P}}} \Big [ \int_{\Omega}  \log\frac{\diff\gamma_{\boldsymbol{x}}(\boldsymbol{z})}{e^{-c(\boldsymbol{x}, \boldsymbol{z})} \cdot \diff \nu(\boldsymbol{z})} \diff \boldsymbol{z} \Big] \\
        & = \inf_{\gamma\in\Gamma(\widehat{\mathbb{P}}, \mathbb{Q})} \quad \Reg \cdot \mathbb{E}_{\boldsymbol{x} \sim \widehat{\mathbb{P}}} \Big [\int_{\Omega}  \log\frac{\diff\gamma_{\boldsymbol{x}}(\boldsymbol{z})}{\diff \mathcal{K}_{\boldsymbol{x}, \epsilon}(\boldsymbol{z})}\diff \boldsymbol{z}  - \log\, \int_{\mathbb{R}^d}e^{-c(\boldsymbol{x}, \boldsymbol{z})/ \Reg} \diff \boldsymbol{z}\Big]  \\
        & = \inf_{\gamma\in\Gamma(\widehat{\mathbb{P}}, \mathbb{Q})} \quad \Reg \cdot \mathbb{E}_{\boldsymbol{x} \sim \widehat{\mathbb{P}}} \Big [ \mathcal{D}_{\mathrm{KL}} (\gamma_{\boldsymbol{x}}\| \mathcal{K}_{\boldsymbol{x}, \epsilon})\Big ] - \Reg \cdot \mathbb{E}_{\boldsymbol{x} \sim \widehat{\mathbb{P}}} \Big [ \log\, \int_{\mathbb{R}^d}e^{-c(\boldsymbol{x}, \boldsymbol{z})/ \Reg} \diff \boldsymbol{z}\Big], 
    \end{aligned}
\end{equation}
where 
\begin{equation}\nonumber
    \diff \mathcal{K}_{\boldsymbol{x}, \epsilon}(\boldsymbol{z}) := \frac{e^{-c(\boldsymbol{x}, \boldsymbol{z}) / \epsilon}  }{\int_{\mathbb{R}^d} e^{-c(\boldsymbol{x}, \boldsymbol{u}) / \epsilon} \cdot \diff \boldsymbol{u}} \cdot \mathrm{d} v(\boldsymbol{z}). 
\end{equation}
Thus, the ambiguity sets in Eq.~\eqref{Eq:hard:constrained:SDRO-ambiguity} is equivalent to 
\begin{equation}\nonumber
    \inf_{\gamma\in\Gamma(\widehat{\mathbb{P}}_k, \mathbb{Q})} \quad \Reg \cdot \mathbb{E}_{\boldsymbol{x} \sim \widehat{\mathbb{P}}_k} \Big [ \mathcal{D}_{\mathrm{KL}} (\gamma_{\boldsymbol{x}}\| \mathcal{K}_{\boldsymbol{x}, \epsilon})\Big ] \le \rho_k + \Reg \cdot \mathbb{E}_{\boldsymbol{x} \sim \widehat{\mathbb{P}}_k} \Big [ \log\, \int_{\mathbb{R}^d}e^{-c(\boldsymbol{x}, \boldsymbol{z})/ \Reg} \diff \boldsymbol{z}\Big], \quad \forall k \in \mathbb{K},
\end{equation}
that is, 
\begin{equation} \nonumber
    \mathbb{B}_{\bar{\rho}_k}(\mathbb{P}_k^{\epsilon}) := \{ \mathbb{Q}\in \mathscr{P}(\Omega): \inf_{\gamma\in\Gamma(\widehat{\mathbb{P}}_k, \mathbb{Q})} \, \,  \mathbb{E}_{\boldsymbol{x} \sim \widehat{\mathbb{P}}_k} \Big [ \mathcal{D}_{\mathrm{KL}} (\gamma_{\boldsymbol{x}}\| \mathcal{K}_{\boldsymbol{x}, \epsilon})\Big ] \le \bar{\rho}_k/\Reg \}, \quad \forall k \in \mathbb{K}, 
\end{equation}
where $\bar{\rho}_k := \rho_k +\Reg \mathbb{E}_{\boldsymbol{x}\sim \widehat{\mathbb{P}}_k}\Big [ \log\, \int_{\mathbb{R}^d}e^{-c(\boldsymbol{x}, \boldsymbol{z})/ \Reg} \diff \boldsymbol{z} \Big]$ and probability density $\mathrm{d} \mathbb{P}_k^{\epsilon}(\boldsymbol{z}) := \mathbb{E}_{\boldsymbol{x} \sim \widehat{\mathbb{P}}_k}\left[ \diff \mathcal{K}_{\boldsymbol{x}, \epsilon}(\boldsymbol{z}) \right]$. 
This completes the proof.     

\end{proof}

\subsection{Proof of Theorem~\ref{theo-strong-duality} }\label{app-sec-theo-strong-duality}

To prove Theorem~\ref{theo-strong-duality}, we first establish the weak compactness of the Sinkhorn ambiguity sets and the weak upper semicontinuity of the expected loss. 

Based on Eq.~\eqref{KL-ambiguity}, the following lemma holds. 
\begin{lemma} [Weak Compactness of the Ambiguity Sets] \label{lemma-weakly-compact}
    Under Assumption~\ref{assumption-tcost}, the ambiguity set defined in \eqref{KL-ambiguity} is weakly compact in $\mathcal P(\Omega)$ for any closed set $\Omega \subseteq \mathbb{R}^d$, $k \in \mathbb{K}$,  and $\bar{\rho}_k \ge 0$. 
\end{lemma}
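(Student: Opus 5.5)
We argue via Prokhorov's theorem: we show that the set $\mathbb{B}_{\bar{\rho}_k}(\mathbb{P}_k^{\epsilon})$ in \eqref{KL-ambiguity} is tight and weakly closed in $\mathscr{P}(\Omega)$. Since $\Omega$ is a closed subset of $\mathbb{R}^d$, it is Polish, so tightness together with closedness gives weak compactness. Throughout, write $J(\gamma) := \mathbb{E}_{\boldsymbol{x}\sim\widehat{\mathbb{P}}_k}[\mathcal{D}_{\mathrm{KL}}(\gamma_{\boldsymbol{x}}\|\mathcal{K}_{\boldsymbol{x},\epsilon})]$. Because $\widehat{\mathbb{P}}_k$ is a finite average of Diracs at $\boldsymbol{x}_k^{(i)}$, we have
\[
J(\gamma)=\frac{1}{n_k}\sum_{i=1}^{n_k}\mathcal{D}_{\mathrm{KL}}\big(\gamma_{\boldsymbol{x}_k^{(i)}}\,\|\,\mathcal{K}_{\boldsymbol{x}_k^{(i)},\epsilon}\big),
\]
and $\mathbb{Q}=\frac1{n_k}\sum_i \gamma_{\boldsymbol{x}_k^{(i)}}$. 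Assumption~\ref{assumption-tcost}\ref{assumption-marginal-distribution} guarantees that this disintegration exists. In addition, Assumption~\ref{assumption-tcost}\ref{assumption-tcost-bounded}--\ref{assumption-tcost-ex-bounded} makes each $\mathcal{K}_{\boldsymbol{x},\epsilon}$ a well-defined probability measure.

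\emph{Tightness.} Take $\mathbb{Q}$ in the set and a coupling $\gamma$ with $J(\gamma)\le \bar\rho_k/\epsilon+1$; such a $\gamma$ exists because the constraint is an infimum. Each term is nonnegative, so $\mathcal{D}_{\mathrm{KL}}(\gamma_{\boldsymbol{x}_k^{(i)}}\|\mathcal{K}_{\boldsymbol{x}_k^{(i)},\epsilon})\le n_k(\bar\rho_k/\epsilon+1)=:C$ for every $i$. The finite family $\{\mathcal{K}_{\boldsymbol{x}_k^{(i)},\epsilon}\}_i$ is tight. We then use the Donsker--Varadhan bound
\[
\mu(A)\le \frac{\log 2 + \mathcal{D}_{\mathrm{KL}}(\mu\|\kappa)}{\log\big(1+1/\kappa(A)\big)}.
\]
Choosing compact sets $K_\delta$ with $\kappa(K_\delta^c)$ small and applying the bound with $A=K_\delta^c$ makes $\gamma_{\boldsymbol{x}_k^{(i)}}(K_\delta^c)$ uniformly small. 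Averaging over $i$ gives a uniform bound on $\mathbb{Q}(K_\delta^c)$, which is tightness.

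\emph{Closedness.} Let $\mathbb{Q}_m\Rightarrow\mathbb{Q}$ with each $\mathbb{Q}_m$ in the set, and pick near-optimal $\gamma^m$ with $J(\gamma^m)\le \bar\rho_k/\epsilon+1/m$. For each $i$, the argument above shows that the sequence $\{\gamma^m_{\boldsymbol{x}_k^{(i)}}\}_m$ has KL bounded by $C$ and is therefore tight. Passing to a diagonal subsequence, $\gamma^m_{\boldsymbol{x}_k^{(i)}}\Rightarrow\mu_i$ for every $i$. Then $\frac1{n_k}\sum_i\mu_i=\lim_m \mathbb{Q}_m=\mathbb{Q}$, so $\gamma:=\frac1{n_k}\sum_i\delta_{\boldsymbol{x}_k^{(i)}}\otimes\mu_i\in\Gamma(\widehat{\mathbb{P}}_k,\mathbb{Q})$. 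Since the KL divergence is jointly weakly lower semicontinuous (from its Donsker--Varadhan variational form as a supremum over bounded continuous functions), $J(\gamma)\le\liminf_m J(\gamma^m)\le\bar\rho_k/\epsilon$. Hence $\mathbb{Q}$ lies in the set. Finally, $\Omega$ is closed, so by the Portmanteau theorem the limit $\mathbb{Q}$ is supported in $\Omega$.

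\emph{Main obstacle.} The delicate step is the closedness argument: we must extract a limiting coupling whose conditionals are limits of the conditionals, rather than merely a limiting joint measure. The finite support of $\widehat{\mathbb{P}}_k$ is exactly what makes this work, since the conditional $\gamma_{\boldsymbol{x}}$ is determined atom by atom. The first approach is the diagonal extraction described above. Should the reference measure $\nu$ be more general, the lower semicontinuity of the cost in Assumption~\ref{assumption-tcost}\ref{assumption-tcost-lower-sc} can instead be used, applying lower semicontinuity to the original Sinkhorn functional in Definition~\ref{Def-Sinkhorn}.
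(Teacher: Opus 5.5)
Your proof is correct, but it takes a different route from the paper's.

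\textbf{Tightness.} The paper does not prove tightness by hand. It uses joint convexity of the KL divergence to show that, for every coupling $\gamma\in\Gamma(\widehat{\mathbb{P}}_k,\mathbb{Q})$,
\[
\mathcal{D}_{\mathrm{KL}}(\mathbb{Q}\|\mathbb{P}_k^{\epsilon})\le \mathbb{E}_{\boldsymbol{x}\sim\widehat{\mathbb{P}}_k}\big[\mathcal{D}_{\mathrm{KL}}(\gamma_{\boldsymbol{x}}\|\mathcal{K}_{\boldsymbol{x},\epsilon})\big].
\]
Hence the Sinkhorn set sits inside the KL ball of radius $\bar\rho_k/\Reg$ around the Gaussian mixture $\mathbb{P}_k^{\epsilon}$. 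The paper then quotes weak compactness of KL balls from the literature, so only closedness remains. Your Donsker--Varadhan tail bound, applied atom by atom, is essentially the same inequality used one level lower. You could shorten that step by applying it once to $\mathbb{Q}$ against $\mathbb{P}_k^{\epsilon}$ after the Jensen bound.

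\textbf{Closedness.} The paper rewrites the constraint via the chain rule as $\inf_{\gamma}\mathcal{D}_{\mathrm{KL}}(\gamma\|M_{k,\Reg})$, where $\diff M_{k,\Reg}(\boldsymbol{x},\boldsymbol{z})=\diff\widehat{\mathbb{P}}_k(\boldsymbol{x})\,\diff\mathcal{K}_{\boldsymbol{x},\epsilon}(\boldsymbol{z})$. It then applies lower semicontinuity of KL to a limiting joint coupling. This means the obstacle you flag is not really one. Because $\gamma$ and $M_{k,\Reg}$ share the first marginal, a joint weak limit of couplings already suffices. You do not need to track conditionals, and the argument does not use the finite support of $\widehat{\mathbb{P}}_k$.

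\textbf{What your version buys.} Your argument is elementary and self-contained, and it supplies a step the paper only asserts. The paper claims the near-optimal couplings converge weakly to some $\gamma\in\Gamma(\widehat{\mathbb{P}}_k,\mathbb{Q})$ without justification. That requires tightness plus extraction of a subsequence, and you prove both explicitly. You also go straight to closedness of the sublevel set using $J(\gamma^m)\le\bar\rho_k/\Reg+1/m$, so any subsequence works. This avoids the liminf/subsequence bookkeeping the paper's lower-semicontinuity chain glosses over.
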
 
The proof of Lemma~\ref{lemma-weakly-compact} is provided in the following Appendix~\ref{app-sec-lemma-weakly-compact}. 
While~\cite{cescon2026sinkhorn} also confirms the weak compactness of Sinkhorn ambiguity sets based on the properties of Wasserstein balls, our proof follows directly from the KL representation in Proposition~\ref{prop-kl-connection} and holds under more general assumptions.  

\begin{lemma}[Weak Upper Semicontinuity of the Expected Loss]\label{lemma-wusc}
Under Assumption~\ref{assumption-2}\ref{assumption-2-el-bound}, $\mathbb{E}_{\boldsymbol{\omega} \sim \mathbb{Q}} [L_{k}(\phi, \boldsymbol{\omega})]$ is weakly upper semicontinuous in $\mathbb{Q} \in \mathbb{B}_{\bar{\rho}_k}(\mathbb{P}_k^{\epsilon})$ for any $k \in \mathbb{K}$ and $\phi \in \mathcal{F}$. 
\end{lemma}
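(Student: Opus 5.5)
The plan is to exploit the conditional-KL description of $\mathbb{B}_{\bar{\rho}_k}(\mathbb{P}_k^{\epsilon})$ from Proposition~\ref{prop-kl-connection} and the Donsker--Varadhan (Gibbs) variational inequality. These yield uniform integrability of $L_k(\phi,\cdot)$ over the ambiguity set, and uniform integrability together with weak convergence gives upper semicontinuity. Fix $\phi\in\mathcal{F}$ and $k$, and write $L:=L_k(\phi,\cdot)\ge 0$. We assume $\phi$ is continuous, as stated in the paper, so $L$ is continuous and nonnegative.

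\emph{Step 1 (uniform exponential-moment bound).} Take any $\mathbb{Q}\in\mathbb{B}_{\bar{\rho}_k}(\mathbb{P}_k^{\epsilon})$ and a coupling $\gamma\in\Gamma(\widehat{\mathbb{P}}_k,\mathbb{Q})$ with $\mathbb{E}_{\boldsymbol{x}\sim\widehat{\mathbb{P}}_k}[\mathcal{D}_{\mathrm{KL}}(\gamma_{\boldsymbol{x}}\|\mathcal{K}_{\boldsymbol{x},\epsilon})]\le \bar{\rho}_k/\epsilon+\delta$. If the infimum is not attained, we take a near-optimal coupling and let $\delta\downarrow 0$ at the end. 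For any measurable $g\ge 0$ and $\beta>0$, Donsker--Varadhan gives
\[
\beta\,\mathbb{E}_{\gamma_{\boldsymbol{x}}}[g]\le \mathcal{D}_{\mathrm{KL}}(\gamma_{\boldsymbol{x}}\|\mathcal{K}_{\boldsymbol{x},\epsilon})+\log \mathbb{E}_{\mathcal{K}_{\boldsymbol{x},\epsilon}}[e^{\beta g}].
\]
Averaging over $\boldsymbol{x}\sim\widehat{\mathbb{P}}_k$ and applying Jensen's inequality to the log term (concavity of $\log$) gives
\[
\mathbb{E}_{\mathbb{Q}}[g]\le \frac{1}{\beta}\Big(\frac{\bar{\rho}_k}{\epsilon}+\delta+\log\mathbb{E}_{\mathbb{P}_k^{\epsilon}}[e^{\beta g}]\Big),
\]
since $\mathbb{P}_k^{\epsilon}$ is exactly the $\widehat{\mathbb{P}}_k$-mixture of the kernels $\mathcal{K}_{\boldsymbol{x},\epsilon}$.

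\emph{Step 2 (uniform integrability).} Apply Step 1 with $g=L\mathbbm{1}\{L>M\}$. For that $g$ we have $e^{\beta g}\le 1+e^{\beta L}\mathbbm{1}\{L>M\}$, so
\[
\sup_{\mathbb{Q}}\mathbb{E}_{\mathbb{Q}}[L\mathbbm{1}\{L>M\}]\le \frac{1}{\beta}\Big(\frac{\bar{\rho}_k}{\epsilon}+\log\big(1+\mathbb{E}_{\mathbb{P}_k^{\epsilon}}[e^{\beta L}\mathbbm{1}\{L>M\}]\big)\Big).
\]
Assumption~\ref{assumption-2}\ref{assumption-2-el-bound} makes $e^{\beta L}$ integrable under $\mathbb{P}_k^{\epsilon}$ for every $\beta>0$, so by dominated convergence the log term tends to $0$ as $M\to\infty$. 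Choosing $\beta$ large first and then $M$ large makes the tail uniformly small. This step is where the assumption for \emph{all} $\beta_k>0$ is genuinely needed, and I expect it to be the main subtlety. The radius term $\bar{\rho}_k/\epsilon$ only vanishes in the limit $\beta\to\infty$, so the bound must hold for arbitrarily large $\beta$.

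\emph{Step 3 (conclusion).} Let $\mathbb{Q}_n\Rightarrow\mathbb{Q}$ within the set. For each $M$, the function $\min(L,M)$ is bounded and continuous, so $\mathbb{E}_{\mathbb{Q}_n}[\min(L,M)]\to\mathbb{E}_{\mathbb{Q}}[\min(L,M)]$. The difference $\mathbb{E}_{\mathbb{Q}_n}[L]-\mathbb{E}_{\mathbb{Q}_n}[\min(L,M)]$ is bounded by the uniform tail from Step 2. Hence
\[
\limsup_n\mathbb{E}_{\mathbb{Q}_n}[L]\le \mathbb{E}_{\mathbb{Q}}[L]+\sup_{\mathbb{Q}'}\mathbb{E}_{\mathbb{Q}'}[L\mathbbm{1}\{L>M\}],
\]
and letting $M\to\infty$ gives upper semicontinuity; in fact it gives continuity. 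The weak limit stays in the set by Lemma~\ref{lemma-weakly-compact}, though this is not even needed for the inequality itself. If $\phi$ were only upper semicontinuous, the same argument would apply with $\min(L,M)$ replaced by bounded continuous functions decreasing to it, using the Portmanteau theorem.
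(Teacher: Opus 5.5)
Your proposal is correct and follows essentially the same route as the paper. In both, a Donsker--Varadhan bound against $\mathbb{P}_k^{\epsilon}$, valid for every $\beta_k>0$, gives uniform integrability of $L_k(\phi,\cdot)$ over $\mathbb{B}_{\bar{\rho}_k}(\mathbb{P}_k^{\epsilon})$; truncating by $\min\{L_k,M\}$ and passing to the weak limit then gives the $\limsup$ inequality. The only cosmetic difference is that you apply Donsker--Varadhan to each conditional $\gamma_{\boldsymbol{x}}$ and then use concavity of $\log$. The paper instead first bounds $\mathcal{D}_{\mathrm{KL}}(\mathbb{Q}\|\mathbb{P}_k^{\epsilon})$ by the conditional KL via convexity, and then applies Donsker--Varadhan to $\mathbb{Q}$ directly.
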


The proof of Lemma~\ref{lemma-wusc} is provided in the following Appendix~\ref{app-sec-lemma-wusc}. 

\subsubsection{Proof of Lemma~\ref{lemma-weakly-compact} }\label{app-sec-lemma-weakly-compact}

\begin{proof}[Proof of Lemma~\ref{lemma-weakly-compact}]

To prove $\mathbb{B}_{\bar{\rho}_k}(\mathbb{P}_k^{\epsilon})$ is weakly compact, we first show its connection with KL-divergence-based ambiguity set. 
Since the KL-divergence is a convex function, by Jensen's inequality, we have 
\begin{equation}\nonumber
\begin{aligned}
        \mathcal{D}_{\mathrm{KL}} ( \mathbb{Q} \| \mathbb{P}_k^{\epsilon} ) & \le \inf_{\gamma\in\Gamma(\widehat{\mathbb{P}}_k, \mathbb{Q})} \, \,  \mathbb{E}_{\boldsymbol{x} \sim \widehat{\mathbb{P}}_k} \Big [ \mathcal{D}_{\mathrm{KL}} (\gamma_{\boldsymbol{x}}\| \mathcal{K}_{\boldsymbol{x}, \epsilon})\Big ] \\
        & \le \frac{\rho_k}{\Reg} + \mathbb{E}_{\boldsymbol{x} \sim \widehat{\mathbb{P}}_k} \Big [ \log\, \int_{\mathbb{R}^d}e^{-c(\boldsymbol{x}, \boldsymbol{z})/ \Reg} \diff \boldsymbol{z}\Big], \quad \forall k \in \mathbb{K},
\end{aligned}
\end{equation}
where probability distribution $\mathbb{Q} \in \mathscr{P}(\Omega)$. 
Let 
\begin{equation} \nonumber
    \mathbb{B}^{\prime}_{\bar{\rho}_k}(\mathbb{P}_k^{\epsilon}) := \{ \mathbb{Q}: \mathcal{D}_{\mathrm{KL}} ( \mathbb{Q} \| \mathbb{P}_k^{\epsilon} ) \le \bar{\rho}_k/\Reg \}, \quad \forall k \in \mathbb{K}. 
\end{equation}
According to Proposition 3.12 in~\cite{kuhn2025distributionally}, the KL-divergence-based ambiguity sets $\mathbb{B}^{\prime}_{\bar{\rho}_k}(\mathbb{P}_k^{\epsilon})$ for any $k \in \mathbb{K}$ are weakly compact, for any closed set $\Omega \subseteq \mathbb{R}^d$ and $k \in \mathbb{K}$, distribution $\mathbb{P}_k^{\epsilon} \in \mathscr{P}(\Omega)$ and $\bar{\rho}_k \ge 0$. 

Since $\mathbb{B}_{\bar{\rho}_k}(\mathbb{P}_k^{\epsilon}) \subseteq \mathbb{B}^{\prime}_{\bar{\rho}_k}(\mathbb{P}_k^{\epsilon})$ for any $k \in \mathbb{K}$, it suffices to prove that $\mathbb{B}_{\bar{\rho}_k}(\mathbb{P}_k^{\epsilon})$ is weakly closed, as weakly closed subsets of weakly compact sets are weakly compact~\citep[Lemma 1]{Shafiee25nash}. 
Define 
\begin{equation} \nonumber
    \diff M_{k, \Reg} ( \boldsymbol{x}, \boldsymbol{z}) := \diff \widehat{\mathbb{P}}_k(\boldsymbol{x}) \cdot 
    \diff \mathcal{K}_{\boldsymbol{x}, \epsilon}(\boldsymbol{z}) = \frac{e^{-c(\boldsymbol{x}, \boldsymbol{z}) / \epsilon}  }{\int_{\mathbb{R}^d} e^{-c(\boldsymbol{x}, \boldsymbol{u}) / \epsilon} \cdot \diff \boldsymbol{u}} \cdot \diff \widehat{\mathbb{P}}_k(\boldsymbol{x}) \cdot \mathrm{d} v(\boldsymbol{z}), \quad \forall k \in \mathbb{K},
\end{equation}
According to Eq.~\eqref{prop1-prove-sinkhorn}, we have 
\begin{equation} \nonumber
    \inf_{\gamma\in\Gamma(\widehat{\mathbb{P}}_k, \mathbb{Q})} \, \,  \mathbb{E}_{\boldsymbol{x} \sim \widehat{\mathbb{P}}_k} \Big [ \mathcal{D}_{\mathrm{KL}} (\gamma_{\boldsymbol{x}}\| \mathcal{K}_{\boldsymbol{x}, \epsilon})\Big ]  = \inf_{\gamma\in\Gamma(\widehat{\mathbb{P}}_k, \mathbb{Q})} \mathcal{D}_{\mathrm{KL}} (\gamma \| M_{k, \Reg} ), \quad \forall k \in \mathbb{K}. 
\end{equation}
According to Proposition 3.12 in~\cite{kuhn2025distributionally}, to prove the Sinkhorn-based ambiguity set is weakly closed, it suffices to prove that function $f_k(\mathbb{Q}):= \inf_{\gamma\in\Gamma(\widehat{\mathbb{P}}_k, \mathbb{Q})} \mathcal{D}_{\mathrm{KL}} (\gamma \| M_{k, \Reg} )$ is weakly lower semicontinuous in $\mathbb{Q}$ for each $k \in \mathbb{K}$, which implies that any sublevel set of $f(\mathbb{Q})$, i.e., $\{\mathbb{Q}: f(\mathbb{Q}) \le c\}$ for any $c \in \mathbb{R}$, is weakly closed. 
Let $\{ \mathbb{Q}_{j} \}_{j \in \mathbb{N}} \subset \mathscr{P}(\Omega)$ be any sequence of distributions that converges weakly to $\mathbb{Q}$. 
By the definition of infimum, there exists a sequence of couplings $\gamma_n \in \Gamma (\widehat{\mathbb{P}}_k, \mathbb{Q}_n)$ that converges weakly to $\gamma \in \Gamma (\widehat{\mathbb{P}}_k, \mathbb{Q})$ such that 
\begin{equation}\label{prove-KL-infimum}
    \mathcal{D}_{\mathrm{KL}} (\gamma_n \| M_{k, \Reg} ) \le f_k(\mathbb{Q}_n) + \frac{1}{n}, \quad \forall k \in \mathbb{K}. 
\end{equation}
Then, for each $k \in \mathbb{K}$, we have 
\begin{equation}\nonumber
    \begin{aligned}
        f_k(\mathbb{Q}) \le \mathcal{D}_{\mathrm{KL}} (\gamma \| M_{k, \Reg} ) \le \liminf_{j \in \mathbb{N}}\,\, \mathcal{D}_{\mathrm{KL}} (\gamma_j \| M_{k, \Reg} ) \le \liminf_{j \in \mathbb{N}}\,\, f_k(\mathbb{Q}_j), \quad \forall k \in \mathbb{K},
    \end{aligned}
\end{equation}
where the second inequality is due to the weak lower semi-continuity of $\mathcal{D}_{\mathrm{KL}} (\gamma \| M_{k, \Reg} )$ in $\gamma$~\citep[Proposition 3.12]{kuhn2025distributionally}, and the final inequality is due to Eq.~\eqref{prove-KL-infimum}. 
Therefore, for each $k \in \mathbb{K}$, the function $f_k(\mathbb{Q})$ is weakly lower semicontinuous in $\mathbb{Q}$ which implies that the ambiguity set $\mathbb{B}_{\bar{\rho}_k}(\mathbb{P}_k^{\epsilon})$ is weakly closed, and thereby weakly compact. 
This completes the proof. 
\end{proof}

\subsubsection{Proof of Lemma~\ref{lemma-wusc} } \label{app-sec-lemma-wusc}

\begin{proof}[Proof of Lemma~\ref{lemma-wusc}]

    We prove Lemma~\ref{lemma-wusc} in two steps. 
    Specifically, we prove that the loss function $L_k$ is uniformly integrable and $ \mathbb{E}_{\boldsymbol{\omega} \sim \mathbb{Q}}\Big[ L_{k}(\phi, \boldsymbol{\omega}) \Big]$ is weakly upper semicontinuous with respect to the ambiguity set $\mathbb{Q} \in \mathbb{B}_{\bar{\rho}_k}(\mathbb{P}_k^{\epsilon})$ for each $k \in \mathbb{K}$ and any fixed $\phi$, respectively. 
    
    \textbf{Step 1 of Lemma~\ref{lemma-wusc}:} Based on the Donsker-Varadhan variational representation of the KL divergence~\cite{donsker1983asymptotic}, for any measurable function $f: \Omega \to \mathbb{R}$ and probability distribution $\mathbb{Q}$, we have 
    \begin{equation} \label{eq-DV-relation}
         \mathbb{E}_{\boldsymbol{\omega} \sim \mathbb{Q}} [f] - \log \, \mathbb{E}_{\boldsymbol{\omega} \sim \mathbb{P}_k^{\epsilon}} [e^{f}] \le \mathcal{D}_{\mathrm{KL}}(\mathbb{Q}, \mathbb{P}_k^{\epsilon}) \le \inf_{\gamma\in\Gamma(\widehat{\mathbb{P}}_k, \mathbb{Q})} \, \,  \mathbb{E}_{\boldsymbol{x} \sim \widehat{\mathbb{P}}_k} \Big [ \mathcal{D}_{\mathrm{KL}} (\gamma_{\boldsymbol{x}}\| \mathcal{K}_{\boldsymbol{x}, \epsilon})\Big ] \le \frac{\bar{\rho}_k}{\Reg}, \quad \forall k \in \mathbb{K} . 
    \end{equation}
    According to the Donsker-Varadhan variational representation part in relation~\eqref{eq-DV-relation}, under Assumption~\ref{assumption-2}\ref{assumption-2-el-bound}, for $\beta_k > 0$, we have 
    \begin{equation} \label{Eq-E-Lk}
        \mathbb{E}_{\boldsymbol{\omega} \sim \mathbb{Q}} [L_{k}(\phi, \boldsymbol{\omega})] \le \frac{1}{\beta} \Big ( \mathcal{D}_{\mathrm{KL}}(\mathbb{Q}, \mathbb{P}_k^{\epsilon})  + \log \, \mathbb{E}_{\boldsymbol{\omega} \sim \mathbb{P}_k^{\epsilon}} [e^{\beta_k \cdot L_{k}(\phi, \boldsymbol{\omega})}] \Big ) < \infty,  \quad \forall k \in \mathbb{K}, 
    \end{equation}
    which implies that the inner problem of \eqref{Eq:hard:constrained:SDRO} is bounded. 
    For a truncation threshold $M > 0$, let $\mathds{1}_{\{L_{k}(\phi, \boldsymbol{\omega})>M\}}$ be an indicator function that equals to 1 if $L_{k}(\phi, \boldsymbol{\omega}) > M$ and 0 otherwise. Based on relation~\eqref{eq-DV-relation}, we also have 
    \begin{equation} \nonumber
        \mathbb{E}_{\boldsymbol{\omega} \sim \mathbb{Q}} [L_{k}(\phi, \boldsymbol{\omega})\cdot \mathds{1}_{\{L_{k}(\phi, \boldsymbol{\omega})>M\}}] \le \frac{1}{\beta} \Big ( \mathcal{D}_{\mathrm{KL}}(\mathbb{Q}, \mathbb{P}_k^{\epsilon})  + \log \, \mathbb{E}_{\boldsymbol{\omega} \sim \mathbb{P}_k^{\epsilon}} \Big[e^{\beta_k \cdot L_{k}(\phi, \boldsymbol{\omega})\cdot \mathds{1}_{\{L_{k}(\phi, \boldsymbol{\omega})>M\}}}\Big] \Big ), \quad \forall k \in \mathbb{K}, 
    \end{equation} 
    and it follows that
    \begin{equation}\nonumber
        \sup_{\mathbb{Q} \in \mathbb{B}_{\bar{\rho}_k}(\mathbb{P}_k^{\epsilon})} \,\, \mathbb{E}_{\boldsymbol{\omega} \sim \mathbb{Q}} [L_{k}(\phi, \boldsymbol{\omega})\cdot \mathds{1}_{\{L_{k}(\phi, \boldsymbol{\omega})>M\}}] \le \frac{\bar{\rho}_k}{\Reg \beta} + \frac{1}{\beta} \log \, \mathbb{E}_{\boldsymbol{\omega} \sim \mathbb{P}_k^{\epsilon}} \Big [e^{\beta_k \cdot L_{k}(\phi, \boldsymbol{\omega})\cdot \mathds{1}_{\{L_{k}(\phi, \boldsymbol{\omega})>M\}}}\Big ], \quad \forall k \in \mathbb{K}.
    \end{equation}

    When $M \to \infty$, we have $\mathds{1}_{\{L_k(\phi,\boldsymbol{\omega})>M\}} \to 0$
    for $\mathbb{P}_k^\epsilon$-almost every $\boldsymbol{\omega}$. Moreover, as both $\beta_k$ and $L_{k}(\phi, \boldsymbol{\omega})$ are non-negative, we have 
    \[
    e^{\beta_k \cdot L_k(\phi,\boldsymbol{\omega})
\mathds{1}_{\{L_k(\phi,\boldsymbol{\omega})>M\}}}
\le
e^{\beta_k \cdot L_k(\phi,\boldsymbol{\omega})}, \quad \forall k \in \mathbb{K},
    \]
    where $e^{\beta_k \cdot L_k(\phi,\boldsymbol{\omega})}$ is integrable under Assumption~\ref{assumption-2}\ref{assumption-2-el-bound}. Hence, by the dominated convergence theorem,
    \begin{equation}\nonumber
        \lim_{M \to \infty} \log\,\mathbb{E}_{\boldsymbol{\omega} \sim \mathbb{P}_k^{\epsilon}} \Big [e^{\beta_k \cdot L_{k}(\phi, \boldsymbol{\omega})\cdot \mathds{1}_{\{L_{k}(\phi, \boldsymbol{\omega})>M\}}}\Big ] = \ \log\, \mathbb{E}_{\boldsymbol{\omega} \sim \mathbb{P}_k^{\epsilon}} \Big [ e^0 \Big ] = 0, \quad \forall k \in \mathbb{K}. 
    \end{equation}
    Thus, we have 
    \begin{equation}
        \lim_{M \to \infty} \sup_{\mathbb{Q} \in \mathbb{B}_{\bar{\rho}_k}(\mathbb{P}_k^{\epsilon})} \,\, \mathbb{E}_{\boldsymbol{\omega} \sim \mathbb{Q}} \Big[L_{k}(\phi, \boldsymbol{\omega})\cdot \mathds{1}_{\{L_{k}(\phi, \boldsymbol{\omega})>M\}}\Big] \le \frac{\bar{\rho}_k}{\Reg \beta_k}, \quad \forall k \in \mathbb{K}. 
    \end{equation}
    Since Assumption~\ref{assumption-2}\ref{assumption-2-el-bound} guarantees that this property holds for any $\beta_k > 0$, let $\beta_k \to \infty$, and then we have 
    \begin{equation}\label{Eq-uniform-integrable}
        \lim_{M \to \infty} \sup_{\mathbb{Q} \in \mathbb{B}_{\bar{\rho}_k}(\mathbb{P}_k^{\epsilon})} \,\, \mathbb{E}_{\boldsymbol{\omega} \sim \mathbb{Q}} \Big[L_{k}(\phi, \boldsymbol{\omega})\cdot \mathds{1}_{\{L_{k}(\phi, \boldsymbol{\omega})>M\}}\Big]\le \inf_{\beta>0} \,\, \frac{\bar\rho_k}{\epsilon\beta_k} = 0, \quad \forall k \in \mathbb{K},
    \end{equation}
    which completes the proof of uniform integrability. 

    \textbf{Step 2 of Lemma~\ref{lemma-wusc}:}  We next prove the weak upper semicontinuity of $\mathbb{E}_{\boldsymbol{\omega} \sim \mathbb{Q}} [L_{k}(\phi, \boldsymbol{\omega})]$ in $\mathbb{Q} \in \mathbb{B}_{\bar{\rho}_k}(\mathbb{P}_k^{\epsilon})$. 
    We consider a continuous detector $\phi$ and a non-negative, non-decreasing, convex, and continuous generating function $\ell$. 
    Then, the loss functions $L_0$ and $L_1$ are non-negative and upper semicontinuous for all $k \in \mathbb{K}$. 
    By Lemma~\ref{lemma-weakly-compact}, the ambiguity set $\mathbb{B}_{\bar{\rho}_k}(\mathbb{P}_k^{\epsilon})$ is weakly closed. 
    For each $k \in \mathbb{K}$, let $\mathbb{Q}_{j} \in \mathbb{B}_{\bar{\rho}_k}(\mathbb{P}_k^{\epsilon})$ for all $j \in \mathbb{N}$ be any sequence of distributions that converges weakly to $\mathbb{Q}$, and then we have
    \begin{equation}\label{Eq-wusc-limsup}
        \begin{aligned}
            \limsup_{j \in N} \,  & \mathbb{E}_{\boldsymbol{\omega} \sim \mathbb{Q}_{j} }\Big [L_{k}(\phi, \boldsymbol{\omega})\Big]  \\ & \le \limsup_{j \in N} \, \mathbb{E}_{\boldsymbol{\omega} \sim \mathbb{Q}_{j} }\Big[ \min\{L_{k}(\phi, \boldsymbol{\omega}), M\}\Big] + \limsup_{j \in N} \, \mathbb{E}_{\boldsymbol{\omega} \sim \mathbb{Q}_j} \Big[(L_{k}(\phi, \boldsymbol{\omega})-M)\cdot \mathds{1}_{\{L_{k}(\phi, \boldsymbol{\omega})>M\}}\Big] \\
            & \le \mathbb{E}_{\boldsymbol{\omega} \sim \mathbb{Q} }\Big[ \min\{L_{k}(\phi, \boldsymbol{\omega}), M\}\Big] + \limsup_{j \in N} \, \mathbb{E}_{\boldsymbol{\omega} \sim \mathbb{Q}_j} \Big[(L_{k}(\phi, \boldsymbol{\omega})-M)\cdot \mathds{1}_{\{L_{k}(\phi, \boldsymbol{\omega})>M\}}\Big] \\
            & \le \mathbb{E}_{\boldsymbol{\omega} \sim \mathbb{Q} }\Big[ \min\{L_{k}(\phi, \boldsymbol{\omega}), M\}\Big] + \sup_{\mathbb{Q} \in \mathbb{B}_{\bar{\rho}_k}(\mathbb{P}_k^{\epsilon})} \,\, \mathbb{E}_{\boldsymbol{\omega} \sim \mathbb{Q}} [L_{k}(\phi, \boldsymbol{\omega})\cdot \mathds{1}_{\{L_{k}(\phi, \boldsymbol{\omega})>M\}}], \quad \forall k \in \mathbb{K},
        \end{aligned}
    \end{equation}
    where $M > 0$, the second inequality is due to Proposition 3.3 in~\cite{kuhn2025distributionally} for the upper semicontinuous and bounded function $\min\{L_{k}(\phi, \boldsymbol{\omega}), M\}$, and the third inequality follows because $\mathbb{Q}_j \in \mathbb{B}_{\bar{\rho}_k}(\mathbb{P}_k^{\epsilon})$ for any $j \in \mathbb{N}$ and $(L_{k}(\phi, \boldsymbol{\omega})-M)\cdot \mathds{1}_{\{L_{k}(\phi, \boldsymbol{\omega})>M\}} \le L_{k}(\phi, \boldsymbol{\omega})\cdot \mathds{1}_{\{L_{k}(\phi, \boldsymbol{\omega})>M\}}$. 
    Taking the limit as $M \to \infty$ on both sides of inequality \eqref{Eq-wusc-limsup}, we obtain
    \begin{equation} \nonumber
    \begin{aligned}
        \limsup_{j \in N} \, & \mathbb{E}_{\boldsymbol{\omega} \sim \mathbb{Q}_{j} } \Big [L_{k}(\phi, \boldsymbol{\omega})\Big] \\ 
        & \le \lim_{M \to \infty} \, \mathbb{E}_{\boldsymbol{\omega} \sim \mathbb{Q} }\Big[ \min\{L_{k}(\phi, \boldsymbol{\omega}), M\}\Big] + \lim_{M \to \infty} \sup_{\mathbb{Q} \in \mathbb{B}_{\bar{\rho}_k}(\mathbb{P}_k^{\epsilon})} \,\, \mathbb{E}_{\boldsymbol{\omega} \sim \mathbb{Q}} [L_{k}(\phi, \boldsymbol{\omega})\cdot \mathds{1}_{\{L_{k}(\phi, \boldsymbol{\omega})>M\}}] \\
        & = \lim_{M \to \infty} \, \mathbb{E}_{\boldsymbol{\omega} \sim \mathbb{Q} }\Big[ \min\{L_{k}(\phi, \boldsymbol{\omega}), M\}\Big] 
        = \mathbb{E}_{\boldsymbol{\omega} \sim \mathbb{Q}}\Big[ L_{k}(\phi, \boldsymbol{\omega}) \Big], \quad \forall k \in \mathbb{K} ,
    \end{aligned}
    \end{equation}
    where the first equality follows from Eq.~\eqref{Eq-uniform-integrable} and the second equality follows from the monotone convergence theorem. This completes the proof. 
\end{proof}

\subsubsection{Formal Proof of Theorem~\ref{theo-strong-duality}}

\begin{proof}[Proof of Theorem~\ref{theo-strong-duality}.] 

    For Theorem~\ref{theo-strong-duality}\ref{theo-strong-duality-hard}, the problem~\eqref{Eq:hard:constrained:SDRO} can be reformulated as 
    \begin{equation}\nonumber
        \inf_{\phi \in \mathcal{F}}\sup_{\mathbb{P}_k\in \mathbb{B}_{\rho_k}(\widehat{\mathbb{P}}_k), \forall k \in \mathbb{K}} \quad \sum_{k \in 
        \mathbb{K}} \mathbb{E}_{\boldsymbol{\omega} \sim \mathbb{P}_k } \Big[ L_{k}(\phi, \boldsymbol{\omega}) \Big]. 
    \end{equation}
    Note that $\mathbb{B}_{\bar{\rho}_k}(\mathbb{P}_k^{\epsilon})$ is weakly compact for any $\bar{\rho}_k \ge 0$ thanks to Proposition~\ref{prop-kl-connection} and Lemma~\ref{lemma-weakly-compact}, which applies because of Assumption~\ref{assumption-tcost}. In addition, $\mathbb{E}_{\boldsymbol{\omega} \sim \mathbb{Q}}\Big[ L_{k}(\phi, \boldsymbol{\omega}) \Big]$ is non-negative and weakly upper semicontinuous in $\mathbb{Q} \in \mathbb{B}_{\bar{\rho}_k}(\mathbb{P}_k^{\epsilon})$ for all $k \in \mathbb{K}$ and $\phi \in \mathcal{F}$, due to Definition~\ref{Def-generating} and Lemma~\ref{lemma-wusc}. 
    As $\Phi(\phi;\mathbb{P}_0,\mathbb{P}_1)$ is convex in $\phi$ and concave in distributions, Theorem 1(II) in~\cite{Shafiee25nash}, an extension of the lopsided minimax theorem~\citep[Theorem~5.15]{kuhn2025distributionally},  ensures that  
    \begin{equation}\nonumber
        \text{Optimal Value of~\eqref{Eq:hard:constrained:SDRO} = }\max_{\mathbb{P}_0\in \mathbb{B}_{\rho_0}(\widehat{\mathbb{P}}_0), \mathbb{P}_1\in \mathbb{B}_{\rho_1} (\widehat{\mathbb{P}}_1)}\,  \inf_{\phi \in \mathcal{F}} \, \Phi(\phi;\mathbb{P}_0,\mathbb{P}_1). 
    \end{equation} 

    For Theorem~\ref{theo-strong-duality}\ref{theo-strong-duality-soft}, the problem~\eqref{Eq:soft:constrained:SDRO} can be reformulated as
        \begin{equation}\nonumber
        \inf_{\phi \in \mathcal{F}}\sup_{\mathbb{P}_0, \mathbb{P}_1\in \mathscr{P}(\Omega)} \quad \sum_{k \in 
        \mathbb{K}} \mathbb{E}_{\boldsymbol{\omega} \sim \mathbb{P}_k } \Big[ L_{k}(\phi, \boldsymbol{\omega}) \Big] - \lambda_k \mathcal{S}_{\Reg}(\widehat{\mathbb{P}}_k,\mathbb{P}_k), 
    \end{equation}
    where $\lambda_k > 0$ for any $k \in \mathbb{K}$. 
    Define 
    \begin{equation}\nonumber
        H(\phi, \mathbb{P}_0, \mathbb{P}_1) := \sum_{k \in 
        \mathbb{K}} \mathbb{E}_{\boldsymbol{\omega} \sim \mathbb{P}_k } \Big[ L_{k}(\phi, \boldsymbol{\omega}) \Big] - \lambda_k \mathcal{S}_{\Reg}(\widehat{\mathbb{P}}_k,\mathbb{P}_k). 
    \end{equation}
    We next verify the level compactness of $H(\phi, \mathbb{P}_0, \mathbb{P}_1)$ required for the lopsided minimax theorem~\citep[Theorem~5.15]{kuhn2025distributionally}. 
    By Eqs.~\eqref{Eq-Sinkhorn-kl-inf}, \eqref{eq-DV-relation}, and~\eqref{Eq-E-Lk}, for any fixed $\phi$, we have 
    \begin{equation}\label{Eq-H-upper-bound}
    \begin{aligned}
        H(\phi, \mathbb{P}_0, \mathbb{P}_1) \le \sum_{k\in\mathbb{K}} \Big [ C_k+\Big (\frac{1}{\beta_k} - \lambda_k\epsilon\Big)\cdot \inf_{\gamma \in \Gamma(\widehat{\mathbb{P}}_k, \mathbb{P}_k)} \, \mathbb{E}_{\boldsymbol{x} \sim \widehat{\mathbb{P}}_k} [ \mathcal{D}_{\mathrm{KL}} (\gamma_{\boldsymbol{x}}\| \mathcal{K}_{\boldsymbol{x}, \epsilon}) ] \Big ], 
    \end{aligned}
    \end{equation}
    where $\beta_k >0$ is finite, $C_k := \frac{1}{\beta_k}\log \, \mathbb{E}_{\boldsymbol{\omega} \sim \mathbb{P}_k^{\epsilon}} [e^{\beta_k \cdot L_{k}(\phi, \boldsymbol{\omega})}] + \lambda_k \Reg \cdot \mathbb{E}_{\boldsymbol{x} \sim \widehat{\mathbb{P}}_k} [ \log\, \int_{\mathbb{R}^d}e^{-c(\boldsymbol{x}, \boldsymbol{z})/ \Reg} \diff \boldsymbol{z}]$ for any $k \in \mathbb{K}$ is finite by Assumptions~\ref{assumption-tcost}\ref{assumption-tcost-ex-bounded} and~\ref{assumption-2}\ref{assumption-2-el-bound}, and is independent of $\mathbb{P}_0$ and $\mathbb{P}_1$. 
    
    By choosing $\beta_{k} \in (\frac{1}{\lambda_{k} \epsilon}, +\infty)$ for each $k\in \mathbb{K}$, when $\inf_{\gamma \in \Gamma(\widehat{\mathbb{P}}_k, \mathbb{P}_k)} \, \mathbb{E}_{\boldsymbol{x} \sim \widehat{\mathbb{P}}_k} \Big [ \mathcal{D}_{\mathrm{KL}} (\gamma_{\boldsymbol{x}}\| \mathcal{K}_{\boldsymbol{x}, \epsilon})\Big ] \to \infty$, we have $H(\phi, \mathbb{P}_0, \mathbb{P}_1) \to -\infty$. 
    This implies that for any $\alpha \in \mathbb{R}$, there exist finite radii $r_0$ and $r_1$ such that the upper-level set of $H(\phi, \mathbb{P}_0, \mathbb{P}_1)$ satisfies
    \begin{equation} \nonumber
        \{(\mathbb{P}_0,\mathbb{P}_1): H(\phi, \mathbb{P}_0, \mathbb{P}_1) \ge \alpha\} \subseteq \mathbb{B}_{r_0}(\widehat{\mathbb{P}}_0) \times \mathbb{B}_{r_1}(\widehat{\mathbb{P}}_1). 
    \end{equation} 
    By Lemma~\ref{lemma-weakly-compact}, the set $\mathbb{B}_{r_0}(\widehat{\mathbb{P}}_0) \times \mathbb{B}_{r_1}(\widehat{\mathbb{P}}_1)$ is weakly compact. 
    For each $k \in \mathbb{K}$, the Sinkhorn discrepancy is convex and weakly lower semicontinuous (see the proof of Lemma~\ref{lemma-weakly-compact}), and $\mathbb{E}_{\boldsymbol{\omega} \sim \mathbb{P}_k } \Big[ L_{k}(\phi, \boldsymbol{\omega}) \Big]$ is weakly upper semicontinuous in $\mathbb{P}_k $ by Lemma~\ref{lemma-wusc}. 
    Hence, $H(\phi, \mathbb{P}_0, \mathbb{P}_1)$ is weakly upper semicontinuous and its upper-level set $\{(\mathbb{P}_0,\mathbb{P}_1): H(\phi, \mathbb{P}_0, \mathbb{P}_1) \ge \alpha\}$ is weakly closed. 
    Therefore, for every $\alpha \in \mathbb{R}$ and every fixed $\phi \in \mathcal{F}$, the upper-level set $\{(\mathbb{P}_0,\mathbb{P}_1): H(\phi, \mathbb{P}_0, \mathbb{P}_1) \ge \alpha\}$, a closed subset of a weakly compact set, is also weakly compact. 
    In addition, $-H(\phi, \mathbb{P}_0, \mathbb{P}_1)$ is convex, weakly lower semicontinuous, and thus weakly closed in $\mathbb{P}_0$ and $\mathbb{P}_1$. 
    
    Finally, $H(\cdot,\mathbb{P}_0,\mathbb{P}_1)$ is convex in $\phi$, and Assumption~\ref{assumption-2}\ref{assumption-2-properness}
    ensures that
    \[
        \sup_{\mathbb{P}_0,\mathbb{P}_1}
        \inf_{\phi\in\mathcal{F}} \quad
        H(\phi,\mathbb{P}_0,\mathbb{P}_1)>-\infty.
    \]
    Therefore, as all conditions of the lopsided minimax theorem are satisfied, the strong duality of the problem~\eqref{Eq:soft:constrained:SDRO} holds, i.e., 
    \begin{equation}\label{Eq-soft-strong-dual-sup}
        \text{Optimal Value of~\eqref{Eq:soft:constrained:SDRO} = } \sup_{\mathbb{P}_0, \mathbb{P}_1 \in \mathscr{P}(\Omega)} \, \inf_{\phi \in \mathcal{F}} \,  \Phi(\phi;\mathbb{P}_0,\mathbb{P}_1) - \sum_{k \in \mathbb{K}} \lambda_k \mathcal{S}_{\Reg}(\widehat{\mathbb{P}}_k,\mathbb{P}_k).   
    \end{equation}    
    Further, since $\inf_{\phi\in\mathcal{F}} \,\, H(\phi,\mathbb{P}_0,\mathbb{P}_1)$ is weakly upper semicontinuous in $\mathbb{P}_0$ and $\mathbb{P}_1$, has a finite upper bound by Eq.~\eqref{Eq-H-upper-bound} under proper $\beta$, and has weakly compact upper-level sets, the supremum in Eq.~\eqref{Eq-soft-strong-dual-sup} is attained by Weierstrass' theorem, i.e., 
    \begin{equation}\nonumber
        \text{Optimal Value of~\eqref{Eq:soft:constrained:SDRO} = } \max_{\mathbb{P}_0, \mathbb{P}_1 \in \mathscr{P}(\Omega)} \, \inf_{\phi \in \mathcal{F}} \,  \Phi(\phi;\mathbb{P}_0,\mathbb{P}_1) - \sum_{k \in \mathbb{K}} \lambda_k \mathcal{S}_{\Reg}(\widehat{\mathbb{P}}_k,\mathbb{P}_k).   
    \end{equation}    
    This completes the proof. 
\end{proof}

\subsection{Proof of Theorem~\ref{theorem-hycnn-rp} }\label{app-sec-theo-hycnn-rp}

\begin{proof}[Proof of Theorem~\ref{theorem-hycnn-rp}.] 

    As a preparation, when $\mathbb{D}$ has a finite diameter $D$, for any $\upsilon \in (0, 1)$, there exists a $\upsilon$-net $\mathbb{D}_{\upsilon} \subset \mathbb{D}$ containing $J$ points such that $\forall \boldsymbol{x} \in \mathbb{D}$, $\exists \boldsymbol{x}_j \in \mathbb{D}_{\upsilon}$, $\|\boldsymbol{x} -\boldsymbol{x}_j \| \le \upsilon$. 
    According to~\cite{shapiro2021lectures}, the size of the $\upsilon$-net is bounded by $J \le \mathcal{O}((D/\upsilon)^d)$. 

    We prove Theorem~\ref{theorem-hycnn-rp}\ref{theorem-hycnn-rp-maxout} in two steps.  
    According to Lemma 1 in~\cite{chen2018optimal}, all continuous Lipschitz convex functions over convex compact sets can be approximated using the maximum of affine functions. 
    Thus, in Step 1, we control the discretization error by adjusting the size $J$ of the $\upsilon$-net. 
    In Step 2, we prove that a HyCNN with maxout activation functions can exactly represent a maximum of affine functions and determine the quantitative relationship between $J$ and the network hyperparameters $(L, h)$. 

    \textbf{Step 1 of Theorem~\ref{theorem-hycnn-rp}\ref{theorem-hycnn-rp-maxout}: } For any $L_f$-Lipschitz function $f$, we approximate it by the maximum of supporting hyperplanes at points in the $\upsilon$-net. 
    The supporting hyperplane at $(\boldsymbol{x}_j, f(\boldsymbol{x}_j))$ is defined as
    \[
        s_j(\boldsymbol{x}) :=\boldsymbol{g}_j^{\top} (\boldsymbol{x}-\boldsymbol{x}_j) +f(\boldsymbol{x}_j). 
    \]
    where $\boldsymbol{g}_j \in \partial f(\boldsymbol{x}_j)$. 
    For any $\boldsymbol{x} \in \mathbb{D}$, let $\boldsymbol{x}_{j_0} \in \mathbb{D}_\upsilon$ be the nearest point to $\boldsymbol{x}$ in the $\upsilon$-net. The pointwise error satisfies
    \begin{equation}\nonumber 
       \begin{aligned}
        \sup_{\boldsymbol{x} \in \mathbb{D}} \left| f(\boldsymbol{x}) - \max_{j \in [J]} s_j(\boldsymbol{x}) \right| 
        & = \sup_{\boldsymbol{x} \in \mathbb{D}} \left( f(\boldsymbol{x}) - \max_{j \in [J]} s_j(\boldsymbol{x}) \right) \\
        & \le \sup_{\boldsymbol{x} \in \mathbb{D}} \Big( f(\boldsymbol{x}) - s_{j_0}(\boldsymbol{x}) \Big) \\
        & = \sup_{\boldsymbol{x} \in \mathbb{D}} \Big( f(\boldsymbol{x}) - f(\boldsymbol{x}_{j_0}) - \boldsymbol{g}_j^{\top} (\boldsymbol{x}-\boldsymbol{x}_{j_0}) \Big)  \\
        & \le \sup_{\boldsymbol{x} \in \mathbb{D}} \Big( |f(\boldsymbol{x}) - f(\boldsymbol{x}_{j_0})| + \|\boldsymbol{g}_j\| \cdot \|\boldsymbol{x}-\boldsymbol{x}_{j_0}\|\Big)  \\
        & \le 2 L_f \|\boldsymbol{x}-\boldsymbol{x}_{j_0}\| \le 2L_f \upsilon. 
    \end{aligned}
\end{equation}
    Here, the first equality holds since $f$ is convex, implying $s_j(\boldsymbol{x}) \le f(\boldsymbol{x})$ for all $j$. The first inequality follows because $\max_{j \in [J]} s_j(\boldsymbol{x}) \ge s_{j_0}(\boldsymbol{x})$. The second inequality uses the triangle inequality and Cauchy-Schwarz inequality. The third inequality applies the $L_f$-Lipschitz continuity of $f$, which bounds both $|f(\boldsymbol{x}) - f(\boldsymbol{x}_{j_0})| \le L_f \|\boldsymbol{x} - \boldsymbol{x}_{j_0}\|$ and $\|\boldsymbol{g}_j\| \le L_f$. The last inequality holds as $\|\boldsymbol{x} - \boldsymbol{x}_{j_0}\| \le \upsilon$. 
    To ensure $\sup_{\boldsymbol{x} \in \mathbb{D}} |f(\boldsymbol{x}) - \max_{j \in [J]} s_j(\boldsymbol{x})|  \le \varepsilon$ for any $\varepsilon >0$, we have $ \upsilon \le \min \{ \frac{\varepsilon}{2L_f},1 \}$. 
    Taking $\upsilon = \varepsilon/2L_f$. 
    By $J \le \mathcal{O}((\frac{D}{\upsilon})^d)$, the number of points in the $\upsilon$-net is given by $J \le \mathcal{O}((\frac{DL_f} {\varepsilon})^d)$. 

    \textbf{Step 2 of Theorem~\ref{theorem-hycnn-rp}\ref{theorem-hycnn-rp-maxout}: } Here, we show that the maximum of $J$ affine functions can be represented by a HyCNN with $L$ layers and width $h$. 
    As an intuitive example, we first consider a maximum of two affine functions and reformulate it as a HyCNN with two layers $\boldsymbol{y}_1$ and $\boldsymbol{y}_2$ (see Figure~\ref{fig:hycnn}) by introducing an extra affine term: 
    \begin{equation} \nonumber
        \begin{aligned}
            \max \Big \{ \boldsymbol{a}_1^\top \boldsymbol{x} + b_1, \boldsymbol{a}_2^\top \boldsymbol{x} + b_2 \Big \}  = \underbrace{(\boldsymbol{a}_3^\top \boldsymbol{x} + b_3) + \overbrace{\max \Big \{ (\boldsymbol{a}_1-\boldsymbol{a}_3)^\top \boldsymbol{x} + (b_1-b_3), (\boldsymbol{a}_2-\boldsymbol{a}_3)^\top \boldsymbol{x} + (b_2-b_3) \Big\}}^{\boldsymbol{y}_1}}_{\boldsymbol{y}_2 }
        \end{aligned}
    \end{equation}

    We extend this intuition to a maximum of $J$ affine functions $\max \{ \boldsymbol{a}_1^\top \boldsymbol{x} + b_1, \cdots, \boldsymbol{a}_{J}^\top \boldsymbol{x} + b_J \}$. 
    We consider a simple case in which the width of each hidden layer is $h=1$. 
    In the following, we first construct a HyCNN with maxout activation functions that use both positive and negative weights, representing a maximum over affine functions. 
    Then, all weights can be restricted to be nonnegative by the method in the proof of Theorem 1 in~\cite{chen2018optimal} without changing the width and depth of the network. 
    Let $s_{j} := \boldsymbol{a}_j^\top \boldsymbol{x} + b_j$ for any $j \in [2J]$, extra affine terms $r_j:= \boldsymbol{c}_j^\top \boldsymbol{x} + d_j$ for any $j\in [2,\cdots, J]$, $s_j^{\prime} = s_j - r_j$ for any $j \in [2,\cdots, J]$, and $\hat{\sigma}: \mathbb{R}^2 \to \mathbb{R}$ be a maxout activation function.
    Then 
    \begin{equation}\label{Eq-exact-represent}
        \begin{aligned}
            &\max \{s_1, \cdots, s_J\} \\
            = & \max \{ \max\{s_1, \cdots, s_{J-1}\}, s_J \} \\
            = & r_{J} + \hat{\sigma} ( \max\{s_1, \cdots, s_{J-1}\} - r_{J} , s_J^{\prime} ) \\ 
            = & r_{J} + \hat{\sigma} ( \max \{ \max\{s_1, \cdots, s_{J-2}\}, s_{J-1}\} - r_{J} , s_J^{\prime} ) \\ 
            = & r_{J} + \hat{\sigma} ( r_{J-1} + \hat{\sigma} ( \max\{s_1, \cdots, s_{J-2}\} -  r_{J-1} , s_{J-1}^{\prime} ) -  r_{J} , s_J^{\prime} ) \\
            = & \cdots\\
            = & r_{J} + \hat{\sigma} ( r_{J-1} + \hat{\sigma} ( r_{J-2} + \hat{\sigma} ( \cdots + r_3 + \hat{\sigma} ( r_{2} + \hat{\sigma} (s_1 - r_2, s_2^{\prime}) - r_3, s_3^{\prime}) - \cdots ) - r_{J-2}, s_{J-1}^{\prime}) - r_J, s_J^{\prime}). 
        \end{aligned}
    \end{equation}
    The last equation describes a $J$-layer HyCNN with width $h=1$, where the layers are
    \begin{equation}\nonumber
        \begin{aligned}
            y_1 &= \hat{\sigma} (s_1 - r_2, s_2 - r_2), \\
            y_2 &=  \hat{\sigma} ( 1 \cdot y_1  + r_{2} - r_3, 0 \cdot y_1 + s_3 - r_3), \\
            & \cdots \\
            y_{j} &= \hat{\sigma} (1 \cdot y_{j-1} + r_{j} - r_{j+1}, 0 \cdot y_{j-1} + s_{j+1} - r_{j+1}), \\
            & \cdots \\
            y_{J} &= r_{J} + y_{J-1}. 
        \end{aligned}
    \end{equation}
    To ensure the coefficients of previous layers in this neural network are nonnegative, following~\cite{chen2018optimal}, we expand the original input $\boldsymbol{x} \in \mathbb{R}^d$ in each layer to $\widehat{\boldsymbol{x}} \in \mathbb{R}^{2d}$ that includes $\boldsymbol{x}$ and $-\boldsymbol{x}$, i.e., $\widehat{\boldsymbol{x}} = [\boldsymbol{x}^{\top}, -\boldsymbol{x}^{\top}]^\top$. 
    When all coefficients of $\boldsymbol{x}$ are nonnegative, the coefficients of $ -\boldsymbol{x}$ are 0. 
    When $h_j < 0$, we set the coefficient of $\widehat{x}_{j+d}$ to $-h_j$ and that of $x_j$ to 0. 
    Therefore, without loss of generality, we can limit all weights between layers to be nonnegative, thereby ensuring that the neural network is input-convex.     

    The construction in Eq.~\eqref{Eq-exact-represent} represents $\max_{j\in[J]}s_j(\boldsymbol{x})$ exactly by a HyCNN with width $h=1$ and depth $L=\mathcal{O}(J)$. By Step 1, choosing $h=1$ and $L=\mathcal{O}(J)$ gives \[
    L \cdot h = \mathcal{O}((\frac{DL_f} {\varepsilon})^d), \]
    which proves Theorem~\ref{theorem-hycnn-rp}\ref{theorem-hycnn-rp-maxout}. 
    
    For Theorem~\ref{theorem-hycnn-rp}\ref{theorem-hycnn-rp-smooth-1}, we first focus on the property of the smooth maxout activation function. 
    Without loss of generality, suppose $a_1 \ge a_2$, we have $\bar{\sigma}(a_1, a_2) = a_1$ and 
    \[
        \bar{\sigma}_\tau(a_1, a_2) = \tau \log \left( e^{a_1/\tau} (1 + e^{(a_2 - a_1)/\tau}) \right) = a_1 + \tau \log \left( 1 + e^{(a_2 - a_1)/\tau} \right) \le a_1 + \tau \log 2,
    \]
    where $\tau>0$ and the last inequality is due to $e^{(a_2 - a_1)/\tau} \in (0, 1]$. 
    This is equivalent to 
    \[
        \bar{\sigma}(a_1, a_2) < \bar{\sigma}_\tau(a_1, a_2)  \le \bar{\sigma}(a_1, a_2) + \tau \log 2. 
    \]
    Consequently, for a HyCNN and its smooth version, we have 
    \[
        \sup_{\boldsymbol{x} \in \mathbb{D}} | \widehat{\mathsf{HyCNN}}^{\tau}(\boldsymbol{x}) - \widehat{\mathsf{HyCNN}}(\boldsymbol{x}) | \le C \cdot \tau \log 2
    \]
    where $C>0$ is a constant depending on the number of maxout activation functions and layers in the network. 
    By Theorem~\ref{theorem-hycnn-rp}\ref{theorem-hycnn-rp-maxout}, for any $\varepsilon > 0$, there exists a HyCNN such that 
    \[
        \sup_{\boldsymbol{x} \in \mathbb{D}} | \widehat{\mathsf{HyCNN}}(\boldsymbol{x}) - f(\boldsymbol{x}) | \le \frac{\varepsilon}{2}. 
    \]
    By selecting a sufficiently small $\tau$, e.g., $\tau \le \frac{\varepsilon}{2C\log2}$, we have 
    \[
        \sup_{\boldsymbol{x} \in \mathbb{D}} | \widehat{\mathsf{HyCNN}}^{\tau}(\boldsymbol{x}) - f(\boldsymbol{x}) | \le \sup_{\boldsymbol{x} \in \mathbb{D}} | \widehat{\mathsf{HyCNN}}^{\tau}(\boldsymbol{x}) - \widehat{\mathsf{HyCNN}}(\boldsymbol{x}) |+\sup_{\boldsymbol{x} \in \mathbb{D}} | \widehat{\mathsf{HyCNN}}(\boldsymbol{x}) - f(\boldsymbol{x}) | \le \varepsilon. 
    \]
    Note that the smooth HyCNN and standard HyCNN share the same order of network architecture complexity. 
    Hence, the hyperparameters of smooth HyCNN can also be chosen as $L \cdot h = \mathcal{O}((\frac{DL_f} {\varepsilon})^d)$ to guarantee the $\varepsilon$-approximation. 
    This completes the proof of Theorem~\ref{theorem-hycnn-rp}\ref{theorem-hycnn-rp-smooth-1}. 
    
    For Theorem~\ref{theorem-hycnn-rp}\ref{theorem-hycnn-rp-smooth-2}, we select a sequence of smooth HyCNNs on a compact domain $[-n, n]^d$ with sufficiently small $\tau$ and each satisfying $L_n \cdot h_n = \mathcal{O}((\frac{DL_f} {\varepsilon})^d)$. 
    By setting $\varepsilon_n = 1/n$, according to Theorem~\ref{theorem-hycnn-rp}\ref{theorem-hycnn-rp-smooth-1}, we have $\sup_{\boldsymbol{x} \in \mathbb{D}} | \widehat{\mathsf{HyCNN}}_{\varepsilon_n}^{\tau}(\boldsymbol{x}) - f(\boldsymbol{x}) | \le \frac{1}{n}$ where the right-hand-side term converges to 0 as $n \to \infty$. 
    Hence, we can prove $\widehat{\mathsf{HyCNN}}_n^{\tau}$ converges to $f$ uniformly on $\mathbb{D}$. 
    This completes the proof. 
\end{proof}

\subsection{Proof of Theorem~\ref{theo-hycnn-gradient-convergence}}\label{appsec-theo-hycnn-gradient-convergence}

\begin{proof}[Proof of Theorem~\ref{theo-hycnn-gradient-convergence}]
    For Theorem~\ref{theo-hycnn-gradient-convergence}\ref{theo-hycnn-gradient-convergence-1}, one can see Theorem 2 in~\cite{huang2020convex} for a detailed proof. 

    For Theorem~\ref{theo-hycnn-gradient-convergence}\ref{theo-hycnn-gradient-convergence-2}, let the function $f$ be $M$-Lipschitz smooth, and then we have 
    \begin{equation} \label{Eq-gradient-converge-1}
        f(\boldsymbol{y}) - f(\boldsymbol{x}) \le \nabla f(\boldsymbol{x})^{\top}(\boldsymbol{y} -\boldsymbol{x}) + \frac{M}{2} \cdot \| \boldsymbol{y} -\boldsymbol{x}\|^2. 
    \end{equation}
    For any $\widehat{\mathsf{HyCNN}}_n^{\tau}$ in the sequence $\{\widehat{\mathsf{HyCNN}}_n^{\tau}\}_{n=1}^{\infty}$ with hyperparameters satisfying $L_n \cdot h_n = \mathcal{O}({\varepsilon_n}^{-d})$ for any $\varepsilon_n>0$, we have 
    \begin{equation}\label{Eq-gradient-converge-2}
        \widehat{\mathsf{HyCNN}}_n^{\tau}(\boldsymbol{y}) - \widehat{\mathsf{HyCNN}}_n^{\tau}(\boldsymbol{x}) \ge \nabla \widehat{\mathsf{HyCNN}}_n^{\tau}(\boldsymbol{x})^{\top}(\boldsymbol{y} -\boldsymbol{x}), \quad \forall \boldsymbol{x}, \boldsymbol{y}\in \mathbb{R}^d.     
    \end{equation}
    By Eqs.~\eqref{Eq-gradient-converge-1} and~\eqref{Eq-gradient-converge-2}, we have 
    \[
        \Big(\nabla \widehat{\mathsf{HyCNN}}_n^{\tau}(\boldsymbol{x}) - \nabla f(\boldsymbol{x})\Big)^{\top} (\boldsymbol{y} -\boldsymbol{x})\le \Big(\widehat{\mathsf{HyCNN}}_n^{\tau}(\boldsymbol{y})- f(\boldsymbol{y})\Big) -\Big(\widehat{\mathsf{HyCNN}}_n^{\tau}(\boldsymbol{x})- f(\boldsymbol{x})\Big) + \frac{M}{2} \cdot \| \boldsymbol{y} -\boldsymbol{x}\|^2. 
    \]
    Let $\boldsymbol{y}$ be obtained by moving $\boldsymbol{x}$ along the direction $\nabla \widehat{\mathsf{HyCNN}}_n^{\tau}(\boldsymbol{x}) - \nabla f(\boldsymbol{x})$ with a proper step length $\eta$, i.e.,  $\boldsymbol{y} -\boldsymbol{x} = \eta \frac{\nabla \widehat{\mathsf{HyCNN}}_n^{\tau}(\boldsymbol{x}) - \nabla f(\boldsymbol{x})}{\|\nabla \widehat{\mathsf{HyCNN}}_n^{\tau}(\boldsymbol{x}) - \nabla f(\boldsymbol{x})\|}$ and $\boldsymbol{y} \in \mathbb{D}$. 
    Then, for the point-wise supremum value of $\nabla \widehat{\mathsf{HyCNN}}_n^{\tau}(\boldsymbol{x}) - \nabla f(\boldsymbol{x})$ on compact set $\mathbb{D} \subset \mathbb{R}^d$, we have 
    \[
    \begin{aligned}
        \sup_{\boldsymbol{x} \in \mathbb{D}} \Big \| \nabla \widehat{\mathsf{HyCNN}}_n^{\tau}(\boldsymbol{x}) - \nabla f(\boldsymbol{x})\Big\| & \le \frac{1}{\eta} \cdot \sup_{\boldsymbol{x} \in \mathbb{D}} \Big(\widehat{\mathsf{HyCNN}}_n^{\tau}(\boldsymbol{y})- f(\boldsymbol{y}) -  \Big(\widehat{\mathsf{HyCNN}}_n^{\tau}(\boldsymbol{x})- f(\boldsymbol{x})\Big)\Big) + \frac{M}{2\eta} \cdot \| \boldsymbol{y} -\boldsymbol{x}\|^2 \\
        & \le  \frac{2\varepsilon_n}{\eta} + \frac{M \eta}{2} \le 2\sqrt{M \varepsilon_n} = \mathcal{O}(\sqrt{\varepsilon_n}) , 
    \end{aligned}
    \]
    where the second inequality is due to Theorem~\ref{theorem-hycnn-rp}\ref{theorem-hycnn-rp-smooth-1} and $\| \boldsymbol{y} -\boldsymbol{x}\|^2 = \eta^2$, the last inequality is due to the basic inequality, and the last equality is due to $L_n \cdot h_n = \mathcal{O}({\varepsilon_n}^{-d})$ for any $\varepsilon_n>0$. 
    Hence, the gradient sequence $\nabla \widehat{\mathsf{HyCNN}}_n$ converges to $\nabla f$ uniformly at a rate of $\mathcal{O}\left( \sqrt{\varepsilon_n}\right)$ when function $f$ is Lipschitz smooth. 
    This completes the proof. 
\end{proof}

\subsection{Proof of Theorem~\ref{theo-hycnn-universality}}\label{app-sec-theo-hycnn-universality}

\begin{proof}[Proof of Theorem~\ref{theo-hycnn-universality}.]
    Since $\mu$ is absolutely continuous and both $\mu$ and $\nu$
    have finite second moments, Brenier's theorem yields a finite
    convex function $f:\mathbb R^d\to\mathbb R$ such that
    \[
        (\nabla f)_{\#}\mu=\nu.
    \]
    Since every finite convex function on $\mathbb R^d$ is locally
    Lipschitz, for every $n\in\mathbb N$, $f$ is Lipschitz on a
    neighborhood of the compact set
    \[
        \mathbb D_n=[-n,n]^d.
    \]
    By Theorem~\ref{theorem-hycnn-rp}
    \ref{theorem-hycnn-rp-smooth-1}, there exists a smooth HyCNN
    $\widehat{\mathsf{HyCNN}}_n^{\tau_n}$ satisfying
    \[
        \sup_{\boldsymbol{x}\in\mathbb D_n}
        \left|
        \widehat{\mathsf{HyCNN}}_n^{\tau_n}(\boldsymbol{x})
        -f(\boldsymbol{x})
        \right|
        \le \frac1n.
    \]

    For any fixed compact set $\mathbb K\subset\mathbb R^d$, there
    exists $N$ such that $\mathbb K\subseteq\mathbb D_n$ for every
    $n\ge N$. Hence,
    \[
        \sup_{\boldsymbol{x}\in\mathbb K}
        \left|
        \widehat{\mathsf{HyCNN}}_n^{\tau_n}(\boldsymbol{x})
        -f(\boldsymbol{x})
        \right|
        \le\frac1n\to0.
    \]
    Thus,
    $\widehat{\mathsf{HyCNN}}_n^{\tau_n}\to f$
    locally uniformly, and in particular pointwise on
    $\mathbb R^d$.

    By Theorem~\ref{theo-hycnn-gradient-convergence}
    \ref{theo-hycnn-gradient-convergence-1},
    \[
        \nabla\widehat{\mathsf{HyCNN}}_n^{\tau_n}
        (\boldsymbol{x})
        \to
        \nabla f(\boldsymbol{x})
    \]
    at almost every $\boldsymbol{x}\in\mathbb R^d$.
    Since $\mu$ is absolutely continuous with respect to the
    Lebesgue measure, this implies the weak convergence of the pushforward measure holds $\mu$-almost everywhere, i.e., for 
    $\boldsymbol{X}\sim\mu$,
    \[
        \nabla\widehat{\mathsf{HyCNN}}_n^{\tau_n}(\boldsymbol{X})
        \to
        \nabla f(\boldsymbol{X})
        \qquad\text{almost surely}.
    \]
    Consequently,
    \[
        (\nabla\widehat{\mathsf{HyCNN}}_n^{\tau_n})_{\#}\mu
        \Rightarrow
        (\nabla f)_{\#}\mu
        =
        \nu, 
    \]
    where $\Rightarrow$ denotes weak convergence of probability measures on $\mathbb{R}^d$. 
    This completes the proof.
\end{proof}

\section{Technical Note for Training and Testing HyCNN}\label{app-sec-hypercnn-details}

In this section, we provide details to compute the gradients of the problem~\eqref{Eq-SDRO-convex-functional} in Algorithm~\ref{algorithm-HyperCNN}. 

\subsection{Gradient of the Risk Function Estimation}\label{app-sec-hypercnn-details-risk}

In this subsection, we take the $k=0$ term in Eq.~\eqref{Eq-Phi-risk} as an example. For brevity, we omit the superscript $(i)$. Denote the parameters of the neural network $\mathsf{HyCNN}_k$ as $\boldsymbol{\theta}_k$ for each $k \in \mathbb{K}$. 
Since our approximated convex function $\widetilde{\varphi}_k$ is strongly convex, we have $\det\, \nabla^2 \widetilde{\varphi}_k (\boldsymbol{z}) > 0$ for any $k$ and $\boldsymbol{z}$. 

\begin{enumerate}
    \item \textbf{Gradient with respect to $\boldsymbol{\theta}_0$. }\\ 
    For brevity, let $h(\boldsymbol{\theta}_0, \boldsymbol{\theta}_1) := \exp \Big[ \frac{1}{2} (\log \, p_{1} (\boldsymbol{\omega}_0) - \log \, p_{0} (\boldsymbol{\omega}_0) ) \Big] $, which is the same as $h_0(\boldsymbol{\omega}_0, \boldsymbol{\theta}_0, \boldsymbol{\theta}_1)$ in Algorithm~\ref{algorithm-risk-estimator}. Then, we have 
    \begin{equation}\label{Eq-gradient-theta0}
        \frac{\partial h(\boldsymbol{\theta}_0, \boldsymbol{\theta}_1)}{\partial \boldsymbol{\theta}_0}  = \frac{h(\boldsymbol{\theta}_0, \boldsymbol{\theta}_1)}{2} \cdot \Big [ \frac{\partial \log \, p_{1} (\boldsymbol{\omega}_0) }{\partial \boldsymbol{\theta}_0} - \frac{\partial \log \, p_{0} (\boldsymbol{\omega}_0)}{\partial \boldsymbol{\theta}_0}\Big]. 
    \end{equation}
    For the second item on the right-hand side of Eq.~\eqref{Eq-gradient-theta0}, based on Eq.~\eqref{Eq-change-of-variable-1}, we have 
    \begin{equation}\label{Eq-gradient-theta0-term2}
        \begin{aligned}
            \frac{\partial \log \, p_{0} (\boldsymbol{\omega}_0) }{\partial \boldsymbol{\theta}_0}  = \frac{\partial }{\partial \boldsymbol{\theta}_0} \log \, p_{0}^{\Reg} (\boldsymbol{z}_0) - \frac{\partial }{\partial \boldsymbol{\theta}_0}  \log \, \det \nabla^2 \widetilde{\varphi}_0 (\boldsymbol{z}_0) = - \frac{\partial }{\partial \boldsymbol{\theta}_0}  \log \, \det \nabla^2 \widetilde{\varphi}_0 (\boldsymbol{z}_0),
        \end{aligned}
    \end{equation}
    where $\boldsymbol{z}_0 = \nabla \varphi_0^{-1} (\boldsymbol{\omega}_0)$, the first equality is due to the strong convexity of $\widetilde{\varphi}_0$, and the second equality is due to $ \frac{\partial }{\partial \boldsymbol{\theta}_0} \log \, p_{0}^{\Reg} (\boldsymbol{z}_0) =0$. The log-determinant-Hessian in Eq.~\eqref{Eq-gradient-theta0-term2} can be efficiently computed by Algorithm~\ref{algorithm-log-determinant-estimator}. 
    
    For the first item on the right-hand side of Eq.~\eqref{Eq-gradient-theta0}, based on the chain rule, we have 
    \begin{equation}\label{Eq-gradient-theta0-term1}
        \begin{aligned}
            \frac{\partial \log \, p_{1} (\boldsymbol{\omega}_0) }{\partial \boldsymbol{\theta}_0} & = \left( \frac{\partial \boldsymbol{\omega}_0}{\partial \boldsymbol{\theta}_0} \right)^\top \left( \frac{\partial \bar{\boldsymbol{z}}_0}{\partial \boldsymbol{\omega}_0} \right)^\top \frac{\partial \log \, p_{1} (\boldsymbol{\omega}_0)}{\partial \bar{\boldsymbol{z}}_0}. 
        \end{aligned} 
    \end{equation}
    Let vector 
    \begin{equation} \nonumber 
        \boldsymbol{g} := \frac{\partial \log \, p_{1} (\boldsymbol{\omega}_0)}{\partial \bar{\boldsymbol{z}}_0} =\frac{\partial  \Big[ \log \, p_{1}^{\Reg} (\bar{\boldsymbol{z}}_0) - \log \, \det \nabla^2 \widetilde{\varphi}_{1} (\bar{\boldsymbol{z}}_0) \Big ]}{\partial \bar{\boldsymbol{z}}_0}, 
    \end{equation}
    and this derivative can be computed by automatic differentiation. 
    Let $\bar{\boldsymbol{z}}_0 = \nabla \widetilde{\varphi}_1^{-1} (\boldsymbol{\omega}_0)$. 
    As $\frac{\partial \bar{\boldsymbol{z}}_0}{\partial \boldsymbol{\omega}_0} = [\nabla^2 \varphi_1(\bar{\boldsymbol{z}}_0)]^{-1}$, Eq.\eqref{Eq-gradient-theta0-term1} can be reformulated as 
    \begin{equation}\label{Eq-gradient-theta0-vjp}
        \frac{\partial \log \, p_{1} (\boldsymbol{\omega}_0) }{\partial \boldsymbol{\theta}_0} = \left( \frac{\partial \boldsymbol{\omega}_0}{\partial \boldsymbol{\theta}_0} \right)^\top [\nabla^2 \widetilde{\varphi}_1(\bar{\boldsymbol{z}}_0)]^{-1} \boldsymbol{g} ,
    \end{equation}
    where $[\nabla^2 \widetilde{\varphi}_1(\bar{\boldsymbol{z}}_0)]^{-1} \boldsymbol{g}$ can be efficiently solved as the minimizer of a quadratic optimization problem similar to~\eqref{Eq-quadratic-optimization} without explicitly computing the inverse of the Hessian matrix. 
    Then, $\left( \frac{\partial \boldsymbol{\omega}_0}{\partial \boldsymbol{\theta}_0} \right)^\top [\nabla^2 \widetilde{\varphi}_1(\bar{\boldsymbol{z}}_0)]^{-1} \boldsymbol{g}$ can be computed by vector-Jacobian product in an automatic differentiation framework. 
    In summary, the gradient of $h(\boldsymbol{\theta}_0, \boldsymbol{\theta}_1)$ with respect to $\boldsymbol{\theta}_0$ is given by 
    \begin{equation}\nonumber
        \frac{\partial h(\boldsymbol{\theta}_0, \boldsymbol{\theta}_1)}{\partial \boldsymbol{\theta}_0}  = \frac{h(\boldsymbol{\theta}_0, \boldsymbol{\theta}_1)}{2} \cdot \Big [ \left( \frac{\partial \boldsymbol{\omega}_0}{\partial \boldsymbol{\theta}_0} \right)^\top [\nabla^2 \widetilde{\varphi}_1(\bar{\boldsymbol{z}}_0)]^{-1} \boldsymbol{g} + \frac{\partial }{\partial \boldsymbol{\theta}_0}  \log \, \det \nabla^2 \widetilde{\varphi}_0 (\boldsymbol{z}_0)\Big]. 
    \end{equation}
        
    \item \textbf{Gradient with respect to $\boldsymbol{\theta}_1$. }\\ 
    Similar to Eq.~\eqref{Eq-gradient-theta0}, we have 
    \begin{equation}\label{Eq-gradient-theta1}
        \frac{\partial h(\boldsymbol{\theta}_0, \boldsymbol{\theta}_1)}{\partial \boldsymbol{\theta}_1}  = \frac{h(\boldsymbol{\theta}_0, \boldsymbol{\theta}_1)}{2} \cdot \Big [ \frac{\partial \log \, p_{1} (\boldsymbol{\omega}_0) }{\partial \boldsymbol{\theta}_1} - \frac{\partial \log \, p_{0} (\boldsymbol{\omega}_0)}{\partial \boldsymbol{\theta}_1}\Big] = \frac{h(\boldsymbol{\theta}_0, \boldsymbol{\theta}_1)}{2} \cdot \frac{\partial \log \, p_{1} (\boldsymbol{\omega}_0) }{\partial \boldsymbol{\theta}_1} . 
    \end{equation}
    Since $\bar{\boldsymbol{z}}_0 = \nabla \widetilde{\varphi}_1^{-1} (\boldsymbol{\omega}_0)$, we have 
    \begin{equation}\label{Eq-gradient-theta1-term1}
        \begin{aligned}
            \frac{\partial \log \, p_{1} (\boldsymbol{\omega}_0) }{\partial \boldsymbol{\theta}_1} &= \frac{\partial }{\partial \boldsymbol{\theta}_1} \Big[ \log \, p_{1}^{\Reg} (\bar{\boldsymbol{z}}_0) - \log \, \det \nabla^2 \widetilde{\varphi}_{1} (\bar{\boldsymbol{z}}_0) \Big] \\
            & = (\frac{\partial \bar{\boldsymbol{z}}_0}{\partial \boldsymbol{\theta}_1})^{\top} \frac{\partial \log \, p_{1}^{\Reg} (\bar{\boldsymbol{z}}_0) - \log \, \det \nabla^2 \widetilde{\varphi}_{1} (\bar{\boldsymbol{z}}_0)}{\partial \bar{\boldsymbol{z}}_0} - \frac{\partial }{\partial \boldsymbol{\theta}_1} \log \, \det \nabla^2 \widetilde{\varphi}_{1} (\boldsymbol{z}) \Big |_{\boldsymbol{z} = \bar{\boldsymbol{z}}_0} , 
        \end{aligned}
    \end{equation}
    where the second equality is based on the total derivative. 
    To evaluate the Jacobian $\frac{\partial \bar{\boldsymbol{z}}_0}{\partial \boldsymbol{\theta}_1}$ in the first term, we take the total derivative with respect to $\boldsymbol{\theta}_1$ on both sides of $ \nabla \widetilde{\varphi}_1 (\bar{\boldsymbol{z}}_0) = \boldsymbol{\omega}_0$:
    \begin{equation}\nonumber
        \nabla^2 \widetilde{\varphi}_1(\bar{\boldsymbol{z}}_0)  \cdot \frac{\partial \bar{\boldsymbol{z}}_0}{\partial \boldsymbol{\theta}_1} + \frac{\partial \nabla \widetilde{\varphi}_1(\boldsymbol{z})}{\partial \boldsymbol{\theta}_1} \Big |_{\boldsymbol{z} = \bar{\boldsymbol{z}}_0} = \boldsymbol{0}, 
    \end{equation}
    and it follows that
    \begin{equation}\nonumber
        (\frac{\partial \bar{\boldsymbol{z}}_0}{\partial \boldsymbol{\theta}_1})^{\top} = - \Big [ \nabla^2 \widetilde{\varphi}_1(\bar{\boldsymbol{z}}_0) \Big ]^{-1} \frac{\partial \nabla \widetilde{\varphi}_1(\bar{\boldsymbol{z}}_0)}{\partial \boldsymbol{\theta}_1}. 
    \end{equation}
    Therefore, Eq.~\eqref{Eq-gradient-theta1-term1} can be reformulated as 
    \begin{equation} \nonumber
        \frac{\partial \log \, p_{1} (\boldsymbol{\omega}_0) }{\partial \boldsymbol{\theta}_1} = -\Big( \frac{\partial \nabla \widetilde{\varphi}_1 (\bar{\boldsymbol{z}}_0)}{\partial \boldsymbol{\theta}_1}\Big )^{\top} \Big [ \nabla^2 \widetilde{\varphi}_1(\bar{\boldsymbol{z}}_0) \Big ]^{-1} \boldsymbol{g} - \frac{\partial }{\partial \boldsymbol{\theta}_1} \log \, \det \nabla^2 \widetilde{\varphi}_{1} (\bar{\boldsymbol{z}}_0), 
    \end{equation}
    where the first term can be calculated in the same way as Eq.~\eqref{Eq-gradient-theta0-vjp} and the second term can be computed by Algorithm~\ref{algorithm-log-determinant-estimator} as $\bar{\boldsymbol{z}}_0$ is a constant here, thanks to the total derivative. 
    In summary, the gradient of $h(\boldsymbol{\theta}_0, \boldsymbol{\theta}_1)$ with respect to $\boldsymbol{\theta}_1$ is given by 
    \begin{equation}\nonumber
        \frac{\partial h(\boldsymbol{\theta}_0, \boldsymbol{\theta}_1)}{\partial \boldsymbol{\theta}_1} = - \frac{h(\boldsymbol{\theta}_0, \boldsymbol{\theta}_1)}{2} \cdot \Big[ \Big( \frac{\partial \nabla \widetilde{\varphi}_1 (\bar{\boldsymbol{z}}_0)}{\partial \boldsymbol{\theta}_1}\Big )^{\top} [ \nabla^2 \widetilde{\varphi}_1(\bar{\boldsymbol{z}}_0) ]^{-1} \boldsymbol{g} + \frac{\partial }{\partial \boldsymbol{\theta}_1} \log \, \det \nabla^2 \widetilde{\varphi}_{1} (\bar{\boldsymbol{z}}_0) \Big]. 
    \end{equation}
\end{enumerate}

\subsection{Gradient of the Sinkhorn Discrepancy Estimation}\label{app-sec-hypercnn-details-sinkhorn}

When the potential $f$ is optimally solved, the total derivative of our Sinkhorn discrepancy estimator~\eqref{Eq-estimator-sinkhorn} with respect to the HyCNN parameters is given by 
\begin{equation}
    \begin{aligned}
        \frac{\partial \, \widetilde{\mathcal{S}}_{\Reg, (n_k, N)}(\widehat{\mathbb{P}}_k,\mathbb{P}_k)}{\partial \boldsymbol{\theta}_k}  = \frac{1}{N} \sum_{i=1}^{N} \frac{\partial g_{\Reg, k}^c(\boldsymbol{\omega}_k^{(i)})}{\partial \boldsymbol{\theta}_k} \Big |_{f = f_{\Reg, k, (n_k, N)}} = \frac{1}{N} \sum_{i=1}^{N} \left( \frac{\partial \boldsymbol{\omega}_k^{(i)}}{\partial \boldsymbol{\theta}_k} \right)^{\top} \cdot \frac{\partial g_{\Reg, k}^c(\boldsymbol{\omega}_k^{(i)})}{\partial \boldsymbol{\omega}_k^{(i)}}  \Big |_{f = f_{\Reg, k, (n_k, N)}}. 
    \end{aligned}
\end{equation}
where the first equality is due to the envelope theorem, $\frac{\partial \boldsymbol{\omega}_k^{(i)}}{\partial \boldsymbol{\theta}_k}$ is the Jacobian of the generated sample with respect to the HyCNN parameters, and the gradient of $g_{\Reg, k}^c$ with respect to $\boldsymbol{\omega}_k^{(i)}$ can be explicitly computed based on Eq.~\eqref{Eq-sinkhorn-potential-extension}.

\subsection{Testing}

In the testing phase, for a testing sample $\boldsymbol{\omega} \in \Omega$, we first trace its position in distributions $\mathbb{P}_0^{\epsilon}$ and $\mathbb{P}_1^{\epsilon}$ by $\bar{\boldsymbol{z}}_0 = \nabla \widetilde{\varphi}_0^{-1} (\boldsymbol{\omega})$ and $\bar{\boldsymbol{z}}_1 = \nabla \widetilde{\varphi}_1^{-1} (\boldsymbol{\omega})$, respectively. 
Then, we evaluate the log-density of least-favorable distributions $\mathbb{P}_0$ and $\mathbb{P}_1$ at $\boldsymbol{\omega}$ by
\begin{equation} \nonumber
    \log \, p_k(\boldsymbol{\omega}) = \log \, p_{k}^{\Reg} (\bar{\boldsymbol{z}}_k) - \log \,  \det \nabla^2 \widetilde{\varphi}_k (\bar{\boldsymbol{z}}_k) , \quad \forall k \in \mathbb{K}. 
\end{equation}
Finally, we make decisions by the selected optimal detector $\phi_{\text{opt}} = \frac{1}{2} (\log\, p_0(\boldsymbol{\omega}) - \log\, p_1(\boldsymbol{\omega}))$. 

\section{Wasserstein Distributionally Robust Hypothesis Testing Model}\label{app-sec-wdro}

In this section, we provide the model of the WDRO method that is used as a baseline in our experimental part. 
The WDRO method is the soft-constraint version of the Wasserstein distributionally robust hypothesis testing model proposed by~\cite{xie2021robust} (assuming $n_0=n_1=n$):
\begin{equation} \label{WDRO-HT}
    \begin{aligned}
        \max_{\boldsymbol{p}_0, \boldsymbol{p}_1 \in \mathbb{R}_+^{n}, \boldsymbol{\gamma}_0, \boldsymbol{\gamma}_1 \in \mathbb{R}_+^{n \times n}} \quad & \sum_{i \in [n]} \min \{ \boldsymbol{p}_0^{i}, \boldsymbol{p}_1^{i} \} - \sum_{k \in \mathbb{K}} \lambda_k \cdot \sum_{i \in [n]}\sum_{j \in [n]} \boldsymbol{\gamma}_{k, i, j} \cdot c(\boldsymbol{x}_k^{(i)}, \boldsymbol{x}_k^{(j)}) \\
        \text{s.t.} \quad & \sum_{j\in [n]} \boldsymbol{\gamma}_{k, i, j} = \widehat{\mathbb{P}}_k^{i} \quad \quad \forall i \in [n], k  \in \mathbb{K}, \\ 
        \quad & \sum_{i\in [n]} \boldsymbol{\gamma}_{k, i, j} = \boldsymbol{p}_k^{j}, \quad \quad \forall j \in [n], k  \in \mathbb{K}, \\ 
    \end{aligned}
\end{equation}
where $\lambda_k \ge 0$ is a penalty parameter for each $k \in \mathbb{K}$, $\boldsymbol{\gamma}_{k, i, j}$ represents the $ij$-th entry of $\boldsymbol{\gamma}_k$, and the $i$-th entry of $\boldsymbol{p}_k$ (respectively, $\widehat{\mathbb{P}}_k$) is specified by $\boldsymbol{p}_k^{i}$ (respectively, $\widehat{\mathbb{P}}_k^{i}$). 
This model assumes that the samples in the testing set have the same support as the empirical distribution. 
To extend the resulting least-favorable distribution to the entire space, a kernel smoothing method is introduced. 
Let $\boldsymbol{p}_k^{*}$ be an optimal solution of~\eqref{WDRO-HT}. 
This method convolves the discrete distribution with a kernel function $G_h: \mathbb{R}^d \to \mathbb{R}$ parameterized by a bandwidth $h>0$:
\begin{equation} \nonumber
    \boldsymbol{p}_k^{h} (\boldsymbol{\omega}) = \sum_{i=1}^{n} \boldsymbol{p}_k^{*} (\boldsymbol{x}_k^{(i)}) \cdot  G_h(\boldsymbol{\omega} - \boldsymbol{x}_k^{(i)}), \quad \forall \boldsymbol{\omega} \in \Omega, k \in \mathbb{K},
\end{equation}
where we choose the Gaussian kernel function $G_h(\boldsymbol{x}) = \exp(-\| \boldsymbol{x} \|^2/2h^2)$. 
Note that the number of decision variables of this linear program is $\mathcal{O}(n_0 \cdot n_1)$.
Thus, this finite-dimensional convex program is not sensitive to dimension $d$, but is sensitive to the number of training data points of the empirical distribution. 


\end{document}